\documentclass[journal]{IEEEtran}
\usepackage{amsmath,amsfonts}
\usepackage{algorithmic}
\usepackage{algorithm}
\usepackage{array}
\usepackage[caption=false,font=normalsize,labelfont=sf,textfont=sf]{subfig}
\usepackage{textcomp}
\usepackage{stfloats}
\usepackage{url}
\usepackage{verbatim}
\usepackage{graphicx}
\usepackage{cite}
\usepackage{hyperref} 
\usepackage{xcolor} 
\usepackage{academicons}
\usepackage{hyperref}
\usepackage{float}
\usepackage{mathtools}
\usepackage{amssymb}
\usepackage{amsthm}
\usepackage{subcaption}
\newtheorem{assumption}{Assumption}

\newtheorem{lemma}{Lemma}

\theoremstyle{definition}

\newtheorem{remark}{Remark}

\begin{document}

\title{Modular Lie Algebraic PDE Control of Multibody Flexible Manipulators}

\author{
    S.~Yaqubi,~\href{https://orcid.org/0000-0002-7093-5747}{\includegraphics[height=2ex]{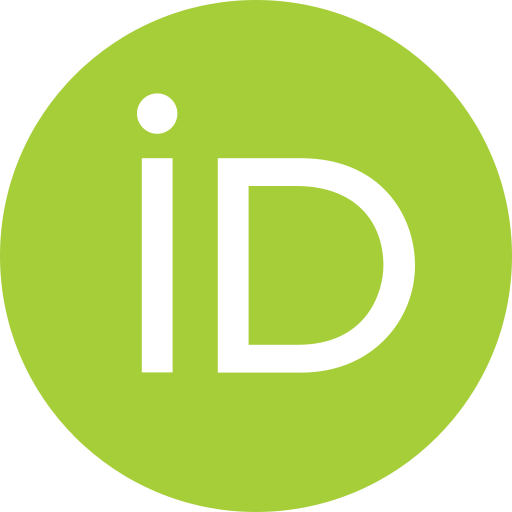}} 
    \and
     J.~Mattila\,~\href{https://orcid.org/0000-0003-1799-4323}{\includegraphics[height=2ex]{ORCID_iD.svg.jpg}}

\thanks{This work was supported by the Research Council of Finland under the project
``Nonlinear PDE-model-based control of flexible manipulators''
(Grant No.~355664). (Corresponding author: S.~Yaqubi)}
\thanks{S.~Yaqubi and J.~Mattila are with the
Department of Automation Technology and Mechanical Engineering,
Tampere University, Korkeakoulunkatu~6, 33720~Tampere, Finland
(e-mails: \href{mailto:sadeq.yaqubi@tuni.fi}{sadeq.yaqubi@tuni.fi},\href{mailto:jouni.mattila@tuni.fi}{jouni.mattila@tuni.fi}).}
\thanks{This work has been submitted to the IEEE for possible publication. Copyright may be transferred without notice , after which this version may no longer be accessible.}

}

\maketitle

\begin{abstract}
This paper presents a subsystem-based adaptive control framework
for serial flexible manipulators with an arbitrary number of links,
in which the elastic deformation PDE of each link is carried
through the entire control design without spatial discretization
or modal truncation. All dynamic quantities --- rigid-body motion,
elastic deformation, and inter-link constraint forces --- are
expressed uniformly as body-fixed twists and wrenches within the
$\mathfrak{se}(3)$ Lie-algebraic structure. A controllable form
of the per-link dynamics is derived by substituting the
strain-based deformation PDE into the dynamic equation,
eliminating distributed elastic acceleration and yielding a model
governed by the body-fixed twist acceleration and deformation
field. Desired subsystem twist trajectories are generated via a
deflection-compensating inverse kinematics procedure. A nominal
per-link controller is proven to produce exponential twist error
decay via a per-subsystem Lyapunov function $\nu_i$. An adaptive
modification replaces exact physical parameters with online
estimates governed by a projection-based law, augmenting $\nu_i$
with a parameter estimation error term. Upon summing over all links,
the interaction power terms in $\dot{\mathcal{V}} = \sum_i\dot{\nu}_i$
telescope to zero by Newton's third law and the frame invariance of
the natural power pairing on $\mathfrak{se}(3)\times\mathfrak{se}^*(3)$,
establishing exponential convergence of all twist errors and bounded
elastic deformation under both nominal and adaptive controllers.
The screw-theoretic structure renders interaction term
cancellation exact, making the stability certificate modular
and scalable to chains of arbitrary length.
The framework is validated numerically on a two-link flexible
manipulator in three-dimensional motion.
\end{abstract}

\begin{IEEEkeywords}
Screw Theory, Model-based Control, Flexible Manipulator, PDE Control, Multibody Systems.
\end{IEEEkeywords}

\section{Introduction} \label{sec:1}

Flexible multibody robotic systems present a fundamental control
challenge arising from the simultaneous presence of large rigid-body
motion on the Lie group $\mathrm{SE}(3)$ and spatially distributed
elastic deformation governed by a partial differential equation
(PDE)~\cite{Boyer2021,Yao2026}. Precise trajectory tracking is
inherently difficult: deformation interacts nonlinearly with
rigid-body motion, producing underactuation and spillover
instability when finite-dimensional approximations replace the
true deformation model~\cite{Yaqubi2023,Liu2018}. Two strategies
dominate the literature. The first replaces distributed deformation
with a truncated ODE via assumed modes or finite
elements~\cite{Walsh2015}, yielding tractable but formally
unquantifiable truncation error. The second linearizes rigid-body
kinematics~\cite{Rao2007}, restricting validity to small
displacements. The present work carries the PDE structure through
the entire control design without spatial discretization, represents
rigid-body dynamics geometrically on $\mathfrak{se}(3)$, and
establishes stability of the resulting infinite-dimensional
closed-loop system formally.

PDE-based boundary control for single flexible
links~\cite{Yang2015,Cao2017} constructs Lyapunov functionals
directly on the infinite-dimensional state, bypassing spatial
discretization and spillover entirely. However, these derivations
are strongly system-specific: the Lyapunov functional, boundary
inputs, and stability arguments must each be reconstructed for
every new configuration~\cite{Yaqubi2026}. Extension to chains of
flexible links is substantially harder --- holonomic joints
introduce constraint forces into each link's PDE boundary
conditions, and assembling a closed-loop certificate for an
$n$-link chain from per-link results has few precedents. The most
relevant contribution is Zhu~\cite{Zhu2010}, who developed a
virtual decomposition framework for multibody stability synthesis;
however, that work uses absolute rather than body-fixed
representations, forgoing Lie-algebraic structure in the dynamics
and complicating inter-link kinematics. The overwhelming majority
of PDE-based flexible manipulator controllers are designed for one
or two links at design stage~\cite{Yaqubi2023,Mohsenipour2024}; a scalable
synthesis for arbitrary $n$ remains open.

For rigid multibody systems, screw-theoretic twist and wrench
coordinates on $\mathfrak{se}(3)$ are well-established for modular
composition of link dynamics~\cite{Featherstone2008,Mller2018,Zhu2010}
and control~\cite{Koivumaki2022}, where inter-link coupling reduces
to linear Adjoint transformations. Extending this to flexible bodies
is substantially more complex: the elastic field must be lifted to a
continuous screw field~\cite{Gao2024,Cibicik2021}, and its coupling
to the rigid-body configuration causes the forward dynamic map to
take a deeply nested form that obstructs algebraic assembly and
destroys the linear operator structure on which systematic control
design depends~\cite{Li2024,Gao2023}. Representing link velocities
as body-fixed twists and internal forces as wrenches recovers this
structure: velocity-dependent forces arise from the adjoint action
on the body-fixed twist~\cite{Featherstone2008,Mller2018,Zhang2024},
the deformation field is a body-fixed continuous screw
field~\cite{Yaqubi2026,Cibicik2021}, and inter-link coupling reduces
to structured Adjoint
transformations~\cite{Featherstone2008,Zhu2010}. Screw-theoretic
model extensions --- dual screw models~\cite{Cibicik2021},
finite-element screw formulations~\cite{Grazioso2019}, Lie group
formulations for geometrically exact beams~\cite{Herrmann2024}, and
screw-theoretic synthesis models~\cite{Yaqubi20262} --- demonstrate
this potential, though systematic PDE-based control synthesis within
this structure remains open. Control of flexible multibody robots
is less developed than modeling: ODE-based methods from the floating
frame~\cite{Shabana2013,Sugiyama2006} or modal
truncation~\cite{Sharifnia2016} introduce spillover; geometrically
exact formulations~\cite{Trivedi2008,Demoures2015} and Lie group
variational integrators~\cite{Boyer2021,Herrmann2024,Chen2022}
capture large deformations but complicate control law derivation
and Lyapunov arguments.

To bridge this gap, the present work develops a subsystem-based
adaptive control architecture for flexible multibody robotic systems,
building on the screw-theoretic synthesis model
of~\cite{Yaqubi2026}. Four contributions are made. First, the
$n$-link manipulator is decomposed into per-link subsystems within
$\mathfrak{se}(3)$, with inter-link interaction wrenches appearing
explicitly at subsystem boundaries. Second, nominal per-link
controllers are proven stable via per-subsystem Lyapunov functions
$\nu_i$. Third, summing over all subsystems, the interaction terms
in $\dot{\mathcal{V}} = \sum_i\dot{\nu}_i$ cancel exactly by
Newton's third law, establishing full $n$-link stability without
cross-term analysis. Fourth, an adaptive modification replaces
uncertain parameters with online estimates; exponential convergence
of both twist error and parameter estimation error is established
through an augmented Lyapunov argument, with the inter-link
cancellation shown to hold unchanged under adaptation.
The screw-theoretic structure renders interaction term cancellation
exact, the stability proof modular, and the synthesis automatable
for arbitrary $n$. The framework is validated numerically on a
two-link flexible manipulator in three-dimensional motion.

\section{Problem Statement and Notation} \label{sec:2}

A serial flexible manipulator with $n$ links is considered, where
each link is connected to adjacent links via rotational joints and
actuated by forces and torques at its endpoints. The single-link
configuration and multi-link decomposition are illustrated in
Fig.~\ref{fig:1}.

\begin{figure*}[htbp]
\centering
\subfloat[]{
\includegraphics[width=0.48\textwidth]{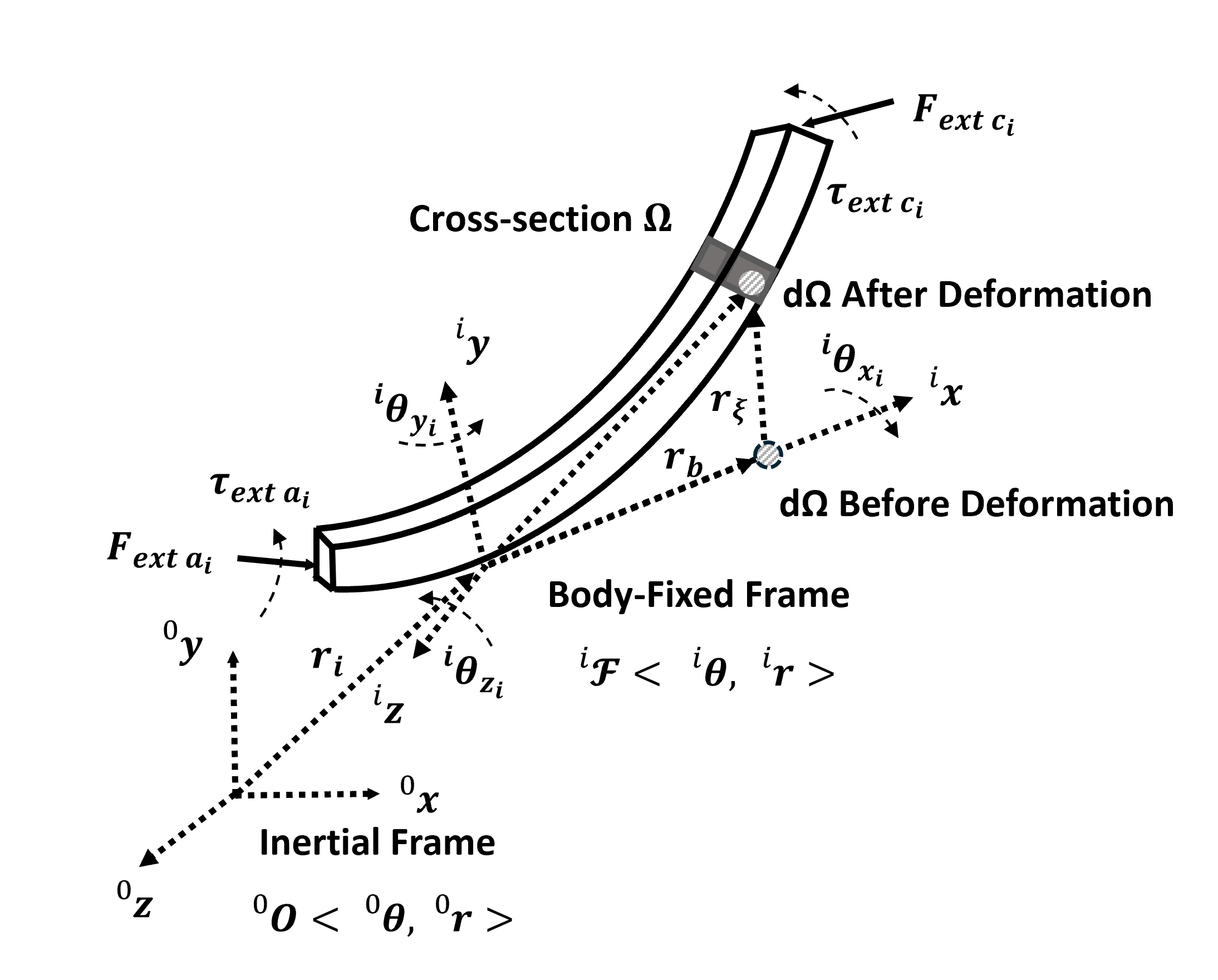}\label{fig:1a}}
\hfill
\subfloat[]{
\includegraphics[width=0.48\textwidth]{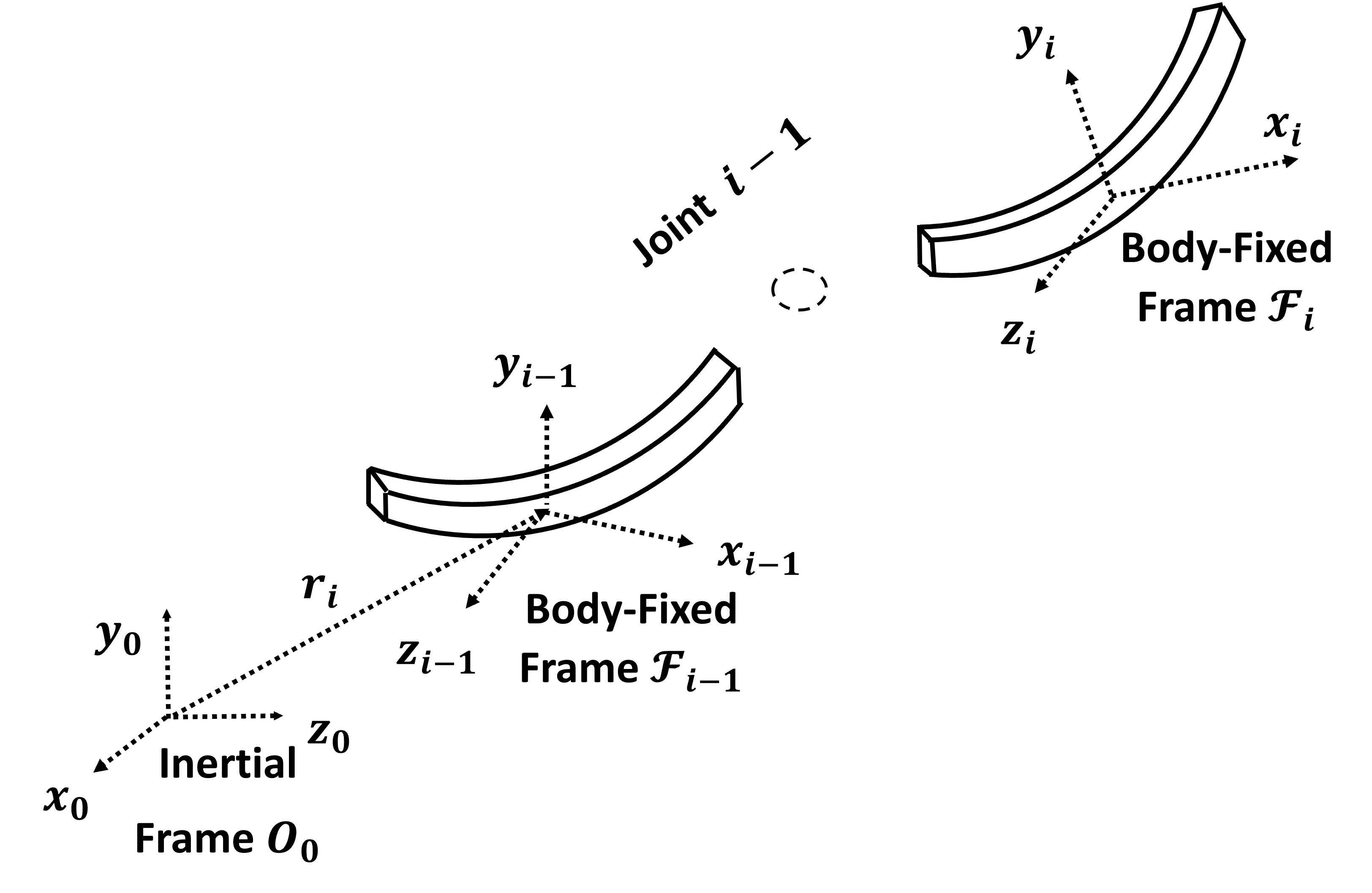}\label{fig:1b}}
\caption{(a)~Dynamic motion of flexible body with respect to
inertial and body-fixed frames, with external forces and torques
shown in coordinate form. (b)~Decomposed consecutive links
with holonomic constraints at rotational joints, with inter-link
interaction wrenches $\boldsymbol{\mathcal{W}}_{J,i}$ shown in
the Lie-algebraic $\mathfrak{se}^*(3)$ representation.}
\label{fig:1}
\end{figure*}

Each flexible link $i$ is indexed by the spatial coordinate
$\xi_i \in [a_i, c_i]$ along the beam axis in the body-fixed
frame ${}^i\!\mathbf{\mathcal{F}}$, with cross-sectional area
$A_i$, mass density $\rho_i$, elastic modulus $E_i$, and beam
length $l_i = c_i - a_i$.

The generalized coordinate vector, velocity, and body-fixed twist
are

\begin{align}
{}^{i}\mathbf{q}_i &=
\begin{bmatrix}
{}^{i}\boldsymbol{\theta}_i \\
{}^{i}\mathbf{r}_i
\end{bmatrix}, \quad
{}^{i}\dot{\mathbf{q}}_i =
\begin{bmatrix}
{}^{i}\dot{\boldsymbol{\theta}}_i \\
{}^{i}\dot{\mathbf{r}}_i
\end{bmatrix},
\label{eq:1} \\
{}^{i}\mathbf{V}_i &=
\begin{bmatrix}
{}^{i}\boldsymbol{\omega}_i \\
{}^{i}\mathbf{v}_i
\end{bmatrix}
=
\begin{bmatrix}
{}^{i}\boldsymbol{\omega}_i \\
{}^{i}\dot{\mathbf{r}}_i + {}^{i}\tilde{\boldsymbol{\omega}}_i\,{}^{i}\mathbf{r}_i
\end{bmatrix}
\in \mathfrak{se}(3),
\label{eq:twist}
\end{align}

where ${}^{i}\mathbf{r}_i \in \mathbb{R}^3$ is the body-fixed
origin position and ${}^{i}\boldsymbol{\theta}_i \in \mathbb{R}^3$
parametrizes $SO(3)$ via the exponential map
$\mathbf{R}_{oi} = \exp({}^{i}\tilde{\boldsymbol{\theta}}_i)$,
with $\tilde{(\cdot)} \in \mathfrak{so}(3)$ denoting the
skew-symmetric matrix of a vector~\cite{murray1994}. The angular
velocity relates to the coordinate derivative via
$\boldsymbol{\omega}_i =
\mathbf{J}(\boldsymbol{\theta}_i)\dot{\boldsymbol{\theta}}_i$,
where $\mathbf{J}$ is the body-frame Jacobian of the exponential
map~\cite{murray1994}. The rotation matrix $\mathbf{R}_{oi}$
transforms vectors from $\mathbf{\mathcal{F}}_i$ to the inertial
frame, chaining as $\mathbf{R}_{oi} =
\mathbf{R}_{o(i-1)}\mathbf{R}_{(i-1)i}$. The twist transforms
via the Adjoint representation ${}^{0}\mathbf{V}_i =
\mathbf{Ad}_{oi}\,{}^{i}\mathbf{V}_i$, where for any two
frames $a$ and $b$~\cite{murray1994}

\begin{align}
\mathbf{Ad}_{ba} =
\begin{bmatrix}
\mathbf{R}_{ba} & \mathbf{0} \\
\widetilde{{}^{b}\mathbf{r}_a}\,\mathbf{R}_{ba} & \mathbf{R}_{ba}
\end{bmatrix} \in \mathbb{R}^{6\times6}.
\label{eq:Ad}
\end{align}

The small adjoint $\mathrm{ad}_{\mathbf{V}_i} \in
\mathbb{R}^{6\times6}$ is the matrix representation of the Lie
bracket on $\mathfrak{se}(3)$~\cite{murray1994}

\begin{align}
\mathrm{ad}_{\mathbf{V}_i} =
\begin{bmatrix}
\tilde{\boldsymbol{\omega}}_i & \mathbf{0} \\
\tilde{\mathbf{v}}_i & \tilde{\boldsymbol{\omega}}_i
\end{bmatrix}.
\label{eq:25}
\end{align}

The wrenches $\boldsymbol{\mathcal{W}}_{ext a_i},
\boldsymbol{\mathcal{W}}_{ext c_i} \in \mathfrak{se}^*(3)$
applied at the base and tip of link $i$ pair with the twist via
the power product
$\boldsymbol{\mathcal{W}}^\top\mathbf{V} =
\boldsymbol{\tau}^\top\boldsymbol{\omega} +
\mathbf{F}^\top\mathbf{v}$,
representing instantaneous delivered power. Interaction wrenches
$\boldsymbol{\mathcal{W}}_{Ji} \in \mathfrak{se}^*(3)$ represent
constraint forces between adjacent links through the joint.

\begin{assumption}
The holonomic joint constraint forces
$\boldsymbol{\mathcal{W}}_{J,i}$ remain in the admissible
set determined by the joint kinematics and the applied
wrenches $\boldsymbol{\mathcal{W}}_i$ throughout the
closed-loop trajectory. Specifically, the constraint
solver~\cite{Yaqubi20262} guarantees that
$\|\boldsymbol{\mathcal{W}}_{J,i}\| \leq
\bar{W}_J < \infty$ whenever $\mathbf{V}_i$ and
$\dot{\mathbf{V}}_i$ are bounded.
\label{ass:WJ}
\end{assumption}

Under Euler--Bernoulli beam theory (planar cross-sections,
small deformation, isotropic material, negligible shear), the
reference coordinates ${}^{i}\mathbf{r}_{b_i}$ and elastic
displacement ${}^{i}\mathbf{r}_{\xi_i}$ satisfy

\begin{align}
{}^{i}\mathbf{r}_{b_i} &=
\begin{bmatrix} \xi_i \\ {}^{i}r_{y_{b_i}} \\ {}^{i}r_{z_{b_i}}
\end{bmatrix}, \quad
{}^{i}\dot{\mathbf{q}}_{b_i} = \mathbf{0},
\label{eq:5}
\end{align}

where zero rotational components reflect the Euler--Bernoulli
assumption of no independent cross-section rotation, and
${}^{i}\dot{\mathbf{q}}_{b_i} = \mathbf{0}$ confirms the
reference geometry is time-invariant in $\mathbf{\mathcal{F}}_i$.
The element position relative to body-fixed and inertial frames
follows as $\mathbf{r}_{ib_i} = \mathbf{r}_{\xi_i} +
\mathbf{r}_{b_i}$ and $\mathbf{r}_{ob_i} = \mathbf{r}_i +
\mathbf{r}_{ib_i}$. The body-fixed twists $\mathbf{V}_{\xi_i}$
and $\mathbf{V}_{b_i}$ are derived from the transport theorem
in~\cite{Yaqubi2026}.

Throughout, superscript $i$ is suppressed for quantities
expressed in $\mathbf{\mathcal{F}}_i$: $\mathbf{r}_i =
{}^{i}\mathbf{r}_i$, $\mathbf{r}_{\xi_i} =
{}^{i}\mathbf{r}_{\xi_i}$, $\mathbf{V}_i =
{}^{i}\mathbf{V}_i$, and similarly for all body-fixed
coordinates, derivatives, and twists.

\section{Screw-Theoretic PDE Dynamic Model} \label{sec:3}

The dynamic model of flexible link $i$ was derived
in~\cite{Yaqubi2026} via variational analysis of the link
Hamiltonian on $SE(3)$, yielding a symmetric positive-definite
inertia operator and Coriolis terms from the adjoint action
$\mathrm{ad}_{\mathbf{V}_i}$. The deformation field
$\mathbf{r}_{\xi_i}(\xi_i,t)$ is governed by a PDE preserving
the infinite-dimensional flexibility dynamics, enabling
extension to multibody chains via chained link dynamics and
constraint-based interaction wrenches, with the multibody
synthesis and inter-link wrench derivation developed
in~\cite{Yaqubi20262}. The model comprises
the dynamic equation~(\ref*{eq:26}), deformation
PDE~(\ref*{eq:34}), and boundary
conditions~(\ref*{eq:35})--(\ref*{eq:38}).

The dynamic equation is

\begin{align}
\mathbf{M}_i\,\dot{\mathbf{V}}_i +
\mathbf{D}_i(\dot{\mathbf{V}}_{\xi_i})
+ \mathbf{H}_i + \boldsymbol{\mathcal{W}}_{Ji}
= \boldsymbol{\mathcal{W}}_i,
\label{eq:26}
\end{align}

where $\dot{\mathbf{V}}_i =
[\dot{\boldsymbol{\omega}}_i^\top,\,
(\ddot{\mathbf{r}}_i - \tilde{\mathbf{r}}_i\dot{\boldsymbol{\omega}}_i
+ \tilde{\boldsymbol{\omega}}_i\dot{\mathbf{r}}_i)^\top]^\top$
is the body-fixed twist derivative~\cite{Yaqubi2026}, and
$\dot{\mathbf{V}}_{\xi_i}$ is its elastic counterpart. The
model terms are

\begin{gather}
\mathbf{M}_i =
\begin{bmatrix}
\mathbf{I}_{b_i} &
\rho_i A_i\int_{a_i}^{c_i}\tilde{\mathbf{r}}_{b_i}\,d\xi \\[4pt]
-\rho_i A_i\int_{a_i}^{c_i}\tilde{\mathbf{r}}_{b_i}\,d\xi &
m_i \mathbf{I}_3
\end{bmatrix}, \label{eq:27} \\[6pt]
\mathbf{D}_i(\dot{\mathbf{V}}_{\xi_i}) = \rho_i A_i
\begin{bmatrix}
\int_{a_i}^{c_i}\tilde{\mathbf{r}}_{ib_i}\dot{\mathbf{v}}_{\xi_i}\,d\xi \\[4pt]
\int_{a_i}^{c_i}\dot{\mathbf{v}}_{\xi_i}\,d\xi
\end{bmatrix}, \label{eq:28} \\[6pt]
\mathbf{H}_i =
\begin{bmatrix}
\rho_i A_i\int_{a_i}^{c_i}
(\tilde{\mathbf{r}}_{ib_i}\tilde{\boldsymbol{\omega}}_i\mathbf{v}_{ib_i}
+ \tilde{\mathbf{r}}_{ib_i}\mathbf{R}_{oi}^\top\mathbf{g})\,d\xi \\[4pt]
\rho_i A_i\int_{a_i}^{c_i}\tilde{\boldsymbol{\omega}}_i
\mathbf{v}_{ib_i}\,d\xi + m_i\mathbf{R}_{oi}^\top\mathbf{g}
\end{bmatrix}, \label{eq:29}
\end{gather}

\begin{align}
\boldsymbol{\mathcal{W}}_i =
\begin{bmatrix}
\boldsymbol{\tau}_{extB_i} - \boldsymbol{\tau}_{extT_i}
- \tilde{\mathbf{r}}_{\xi(a_i)}\mathbf{F}_{extB_i}
+ \tilde{\mathbf{r}}_{\xi(c_i)}\mathbf{F}_{extT_i} \\[4pt]
\mathbf{F}_{extB_i} - \mathbf{F}_{extT_i}
\end{bmatrix}, \label{eq:30}
\end{align}

where $\mathbf{v}_{ib_i} = \mathbf{v}_{\xi_i} + \mathbf{v}_{b_i}$,
$m_i = \rho_i A_i l_i$, and $\mathbf{g}$ is gravitational
acceleration. The moment of inertia is

\begin{align}
\mathbf{I}_{b_i} = -\rho_i A_i
\int_{a_i}^{c_i}\tilde{\mathbf{r}}_{b_i}^2\,d\xi,
\label{eq:Iibi}
\end{align}

and the cross-section stiffness matrices are

\begin{align}
\mathbf{I}_{v1_i} &=
\mathrm{diag}(E_iA_i,\, 0,\, 0),
\label{eq:32} \\
\mathbf{I}_{v2_i} &=
\mathrm{diag}(0,\, E_iI_{z_i},\, E_iI_{y_i}),
\label{eq:33}
\end{align}

where $I_{y_i}$, $I_{z_i}$ are the second moments of area.

The deformation PDE is

\begin{equation}
\begin{aligned}
\dot{\mathbf{v}}_i &+ \dot{\mathbf{v}}_{\xi_i}
- \tilde{\mathbf{r}}_{b_i}\dot{\boldsymbol{\omega}}_i
+ \tilde{\boldsymbol{\omega}}_i\mathbf{v}_{ib_i}
+ \mathbf{R}_{oi}^\top\mathbf{g} \\
&+ \frac{1}{\rho_i A_i}\mathbf{I}_{v2_i}\mathbf{r}_{\xi_i}''''
- \frac{1}{\rho_i A_i}\mathbf{I}_{v1_i}\mathbf{r}_{\xi_i}'' \\
=\; &\boldsymbol{\mathcal{W}}_{ia}\delta_i(a_i)
- \boldsymbol{\mathcal{W}}_{ic}\delta_i(c_i)
+ \boldsymbol{\mathcal{W}}_{J_ia}\delta_i(a_i)
- \boldsymbol{\mathcal{W}}_{J_ic}\delta_i(c_i),
\end{aligned}
\label{eq:34}
\end{equation}

where $\delta_i$ is the Dirac delta at the respective endpoints.
Integrating~(\ref*{eq:34}) over $\xi_i \in [a_i,c_i]$ and
subtracting the translational row of~(\ref*{eq:26}) divided by
$\rho_iA_i$, the terms $\dot{\mathbf{v}}_i$,
$\mathbf{R}_{oi}^\top\mathbf{g}$,
$\tilde{\mathbf{r}}_{b_i}\dot{\boldsymbol{\omega}}_i$, and
$\tilde{\boldsymbol{\omega}}_i\mathbf{v}_{b_i}$ cancel
identically --- the last via $\mathbf{v}_{ib_i} =
\mathbf{v}_{\xi_i} + \mathbf{v}_{b_i}$ --- and the boundary
wrench terms cancel by the natural boundary
conditions~(\ref*{eq:35})--(\ref*{eq:38}), leaving the
strain-rate PDE

\begin{align}
\dot{\mathbf{v}}_{\xi_i}
+ \tilde{\boldsymbol{\omega}}_i\mathbf{v}_{\xi_i}
+ \frac{1}{\rho_i A_i}\mathbf{I}_{v2_i}\mathbf{r}_{\xi_i}''''
- \frac{1}{\rho_i A_i}\mathbf{I}_{v1_i}\mathbf{r}_{\xi_i}'' = 0.
\label{eq:strainpde}
\end{align}

The boundary conditions are

\begin{align}
\mathbf{I}_{v2_i}\mathbf{r}_{\xi_i}'''(a_i)
- \mathbf{I}_{v1_i}\mathbf{r}_{\xi_i}'(a_i)
&= \mathbf{F}_{extB_i}, \label{eq:35} \\
\mathbf{I}_{v2_i}\mathbf{r}_{\xi_i}'''(c_i)
- \mathbf{I}_{v1_i}\mathbf{r}_{\xi_i}'(c_i)
&= \mathbf{F}_{extT_i}, \label{eq:36} \\
-\mathbf{I}_{v2_i}\mathbf{r}_{\xi_i}''(a_i)
&= \mathbf{H}\boldsymbol{\tau}_{extB_i}, \label{eq:37} \\
-\mathbf{I}_{v2_i}\mathbf{r}_{\xi_i}''(c_i)
&= \mathbf{H}\boldsymbol{\tau}_{extT_i}, \label{eq:38}
\end{align}

where

\begin{align}
\mathbf{H} =
\begin{bmatrix}
0 & 0 & 0 \\ 0 & 0 & 1 \\ 0 & 1 & 0
\end{bmatrix}.
\label{eq:39}
\end{align}

The terms $\mathbf{I}_{v1_i}\mathbf{r}_{\xi_i}''$ and
$\mathbf{I}_{v2_i}\mathbf{r}_{\xi_i}''''$ are the elastic
restoring forces per unit length from axial and bending
stiffness. The boundary
conditions~(\ref*{eq:35})--(\ref*{eq:38}) equate internal
elastic forces and moments at the endpoints to the externally
applied wrenches. The matrix $\mathbf{H}$ selects the bending moment
components from the applied torque wrench, reflecting
the Euler--Bernoulli assumption that curvature couples
only to the transverse torque components $\tau_y$ and
$\tau_z$ at each endpoint. 

Equations~(\ref*{eq:26}) and~(\ref*{eq:34})
constitute a coupled PDE system:~(\ref*{eq:26}) is an ODE governing
body-fixed frame motion, while~(\ref*{eq:34}) is a fourth-order PDE in
$\xi_i$ and second-order in $t$ governing the distributed
deformation field $\mathbf{r}_{\xi_i}(\xi_i,t)$, coupled
through $\mathbf{D}_i$ which introduces elastic acceleration
into the rigid-body equation. This system is well-posed for individual links and the
multibody synthesis~\cite{Yaqubi2026,Yaqubi20262}, and
forms the basis for the subsystem-based controller of
Section~\ref{sec:4}.

\section{Subsystem-based controller for flexible link}
\label{sec:4}

The model~(\ref*{eq:26}) with coupled deformation
PDE~(\ref*{eq:34}) is not directly suitable for feedback
control: the distributed inertia term
$\mathbf{D}_i(\dot{\mathbf{V}}_{\xi_i})$ involves
acceleration-level distributed states uncontrollable with
the limited actuation wrenches $\boldsymbol{\mathcal{W}}_i$.
Substituting the strain-rate PDE~(\ref*{eq:strainpde})
into~(\ref*{eq:26}) eliminates $\mathbf{D}_i$ and yields
a controllable form affine in $\dot{\mathbf{V}}_i$,
developed in Section~\ref{subsec:4b}. The controller is
first designed for the nominal system; adaptive
modification follows in Section~\ref{sec:5}.

\subsection{Reference twist generation} \label{subsec:4a}

The actual inertial tip position of the terminal link is

\begin{align}
{}^0\mathbf{p}_e
= \mathbf{f}_{\mathrm{kin}}(\mathbf{q}_j)
+ \boldsymbol{\delta}_e, \qquad
\boldsymbol{\delta}_e
= \sum_{i=1}^{n} \mathbf{R}_{oi}\,\mathbf{r}_{\xi_i}(c_i,t),
\label{eq:pe}
\end{align}

where $\mathbf{f}_{\mathrm{kin}}$ is the rigid-body
forward-kinematics map and $\boldsymbol{\delta}_e$ is the
cumulative tip deflection in the inertial frame. Because
$\boldsymbol{\delta}_e$ depends on the distributed deformation
field governed by~(\ref*{eq:34}), the joint-space reference
must be corrected at every sample. Given the desired endpoint position ${}^0\mathbf{p}_{e,d}(t)$ and the current deflection
estimate

\begin{align}
\boldsymbol{\delta}_e^{\mathrm{est}}(t)
= \sum_{i=1}^{n} \mathbf{R}_{oi}\,\mathbf{r}_{\xi_i}^{\mathrm{meas}}(c_i,t),
\label{eq:delta_est}
\end{align}

where $\mathbf{r}_{\xi_i}^{\mathrm{meas}}(c_i,t)$ is the tip
deformation obtained from onboard sensing (strain gauges or
tip-mounted accelerometers), which approximates the true field
$\mathbf{r}_{\xi_i}(c_i,t)$ to within sensor accuracy, the deflection-corrected
rigid-body target and desired joint coordinates are

\begin{align}
{}^0\mathbf{p}_{e,d}^{\mathrm{corr}}
&= {}^0\mathbf{p}_{e,d} - \boldsymbol{\delta}_e^{\mathrm{est}},
\label{eq:p_corr} \\
\mathbf{q}_{j,d}
&= \mathbf{f}_{\mathrm{kin}}^{-1}\!
   \left({}^0\mathbf{p}_{e,d}^{\mathrm{corr}}\right).
\label{eq:qjd}
\end{align}

At the velocity level, differentiating~(\ref*{eq:pe}) and
solving for the desired joint velocity gives

\begin{align}
\dot{\mathbf{q}}_{j,d}
= \mathbf{J}_{\mathrm{kin}}^{\dagger}(\mathbf{q}_{j,d})
  \!\left({}^0\dot{\mathbf{p}}_{e,d}
         - \dot{\boldsymbol{\delta}}_e^{\mathrm{est}}\right),
\label{eq:qdotjd}
\end{align}

where $\mathbf{J}_{\mathrm{kin}} =
\partial\mathbf{f}_{\mathrm{kin}}/\partial\mathbf{q}_j$,
$(\cdot)^\dagger$ is the Moore--Penrose pseudoinverse, and

\begin{align}
\dot{\boldsymbol{\delta}}_e^{\mathrm{est}}
= \sum_{i=1}^{n}\!\left(
  \tilde{\boldsymbol{\omega}}_i\mathbf{R}_{oi}
  \mathbf{r}_{\xi_i}(c_i,t)
  + \mathbf{R}_{oi}\dot{\mathbf{r}}_{\xi_i}(c_i,t)
  \right).
\label{eq:deltadot}
\end{align}

The Jacobian is evaluated at $\mathbf{q}_{j,d}$ to keep the
reference velocity consistent with the corrected
target~(\ref*{eq:qjd}). The desired body-fixed twist for
link $i$ follows from~(\ref*{eq:twist}) evaluated along the
desired trajectory

\begin{align}
\mathbf{V}_{d,i}
= \begin{bmatrix}
    \mathbf{J}(\boldsymbol{\theta}_{d,i})
    \dot{\boldsymbol{\theta}}_{j,d,i} \\[3pt]
    \dot{\mathbf{r}}_{d,i}
    + \widetilde{\mathbf{J}(\boldsymbol{\theta}_{d,i})
      \dot{\boldsymbol{\theta}}_{j,d,i}}\,\mathbf{r}_{d,i}
  \end{bmatrix}
\in \mathfrak{se}(3).
\label{eq:Vdi}
\end{align}

The reference generation operates independently of the
subsystem controllers: the deflection
correction~(\ref*{eq:p_corr}) modifies only the joint-space
reference, while each subsystem controller tracks its own
$\mathbf{V}_{d,i}$ via the twist error
$\mathbf{e}_{\mathbf{V}_i} = \mathbf{V}_{d,i} - \mathbf{V}_i$.
The deformation field is not commanded directly but regulated
indirectly through the corrected joint reference and the
elastic boundary conditions~(\ref*{eq:35})--(\ref*{eq:38}),
keeping the stability analysis of
Section~\ref{subsec:4c} unaffected by reference generation.

\subsection{Dynamic model in controllable form} \label{subsec:4b}

To obtain a controllable form of~(\ref*{eq:26}), the distributed
inertia term $\mathbf{D}_i(\dot{\mathbf{V}}_{\xi_i})$ is eliminated
by substituting the strain-rate PDE~(\ref*{eq:strainpde}), which
expresses $\dot{\mathbf{v}}_{\xi_i}$ algebraically in terms of the
deformation field and the rigid-body motion as

\begin{align}
\dot{\mathbf{v}}_{\xi_i}
= -\tilde{\boldsymbol{\omega}}_i\mathbf{v}_{\xi_i}
  - \frac{1}{\rho_i A_i}\mathbf{I}_{v2_i}\mathbf{r}_{\xi_i}''''
  + \frac{1}{\rho_i A_i}\mathbf{I}_{v1_i}\mathbf{r}_{\xi_i}''.
\label{eq:vxidot_sub}
\end{align}

Substituting~(\ref*{eq:vxidot_sub}) into the distributed inertia
operator~(\ref*{eq:28}) yields

\begin{equation}
\begin{aligned}
\mathbf{D}_i(\dot{\mathbf{V}}_{\xi_i})
&= \rho_i A_i
\begin{bmatrix}
\displaystyle\int_{a_i}^{c_i} \tilde{\mathbf{r}}_{ib_i}\dot{\mathbf{v}}_{\xi_i}\,d\xi \\[8pt]
\displaystyle\int_{a_i}^{c_i} \dot{\mathbf{v}}_{\xi_i}\,d\xi
\end{bmatrix} \\
&= \begin{bmatrix}
  \begin{aligned}
    &-\rho_i A_i\int_{a_i}^{c_i} \tilde{\mathbf{r}}_{ib_i}\tilde{\boldsymbol{\omega}}_i \mathbf{v}_{\xi_i}\,d\xi \\
    &\quad +\int_{a_i}^{c_i} \tilde{\mathbf{r}}_{ib_i} \!\left(\mathbf{I}_{v1_i}\mathbf{r}_{\xi_i}'' - \mathbf{I}_{v2_i}\mathbf{r}_{\xi_i}''''\right)d\xi 
  \end{aligned} \\[18pt]
  \begin{aligned}
    &-\rho_i A_i\int_{a_i}^{c_i} \tilde{\boldsymbol{\omega}}_i\mathbf{v}_{\xi_i}\,d\xi \\
    &\quad +\int_{a_i}^{c_i} \!\left(\mathbf{I}_{v1_i}\mathbf{r}_{\xi_i}'' - \mathbf{I}_{v2_i}\mathbf{r}_{\xi_i}''''\right)d\xi
  \end{aligned}
\end{bmatrix}.
\end{aligned}
\label{eq:Di_sub}
\end{equation}

Adding~(\ref*{eq:Di_sub}) to the generalized force
term~(\ref*{eq:29}), the velocity-dependent Coriolis integrals
combine via $\mathbf{v}_{ib_i} = \mathbf{v}_{\xi_i} + \mathbf{v}_{b_i}$.

\begin{equation}
\begin{aligned}
&\rho_i A_i\int_{a_i}^{c_i}
  \tilde{\boldsymbol{\omega}}_i\mathbf{v}_{ib_i}\,d\xi
- \rho_i A_i\int_{a_i}^{c_i}
  \tilde{\boldsymbol{\omega}}_i\mathbf{v}_{\xi_i}\,d\xi \\
&\quad= \rho_i A_i\int_{a_i}^{c_i}
  \tilde{\boldsymbol{\omega}}_i\mathbf{v}_{b_i}\,d\xi,
\end{aligned}
\label{eq:coriolis_cancel}
\end{equation}

and identically for the rotational block. The $\mathbf{v}_{\xi_i}$
dependence therefore cancels exactly between $\mathbf{H}_i$
and $\mathbf{D}_i(\dot{\mathbf{V}}_{\xi_i})$, and substituting
into~(\ref*{eq:26}) yields the controllable form

\begin{align}
\mathbf{M}_i\,\dot{\mathbf{V}}_i
+ \mathbf{H}_{c,i}
+ \boldsymbol{\mathcal{W}}_{J,i}
= \boldsymbol{\mathcal{W}}_i,
\label{eq:contform}
\end{align}

where $\mathbf{H}_{c,i}$ collects all remaining
velocity- and deformation-dependent terms

\begin{equation}
\mathbf{H}_{c,i} = \begin{bmatrix}
\begin{aligned}
    & \rho_i A_i\int_{a_i}^{c_i} \tilde{\mathbf{r}}_{ib_i}
      \tilde{\boldsymbol{\omega}}_i\mathbf{v}_{b_i}\,d\xi \\
    & + \rho_i A_i\int_{a_i}^{c_i}\tilde{\mathbf{r}}_{ib_i}
      \mathbf{R}_{oi}^\top\mathbf{g}\,d\xi \\
    & + \int_{a_i}^{c_i} \tilde{\mathbf{r}}_{ib_i}
      (\mathbf{I}_{v1_i}\mathbf{r}_{\xi_i}''
      - \mathbf{I}_{v2_i}\mathbf{r}_{\xi_i}'''')\,d\xi
\end{aligned} \\[28pt]
\begin{aligned}
    & \rho_i A_i\int_{a_i}^{c_i}
      \tilde{\boldsymbol{\omega}}_i\mathbf{v}_{b_i}\,d\xi
      + m_i\mathbf{R}_{oi}^\top\mathbf{g} \\
    & + \int_{a_i}^{c_i}
      (\mathbf{I}_{v1_i}\mathbf{r}_{\xi_i}''
      - \mathbf{I}_{v2_i}\mathbf{r}_{\xi_i}'''')\,d\xi
\end{aligned}
\end{bmatrix}.
\label{eq:Hci}
\end{equation}

The controllable form~(\ref*{eq:contform}) is affine in
$\dot{\mathbf{V}}_i$: $\mathbf{H}_{c,i}$ depends only on
$\mathbf{V}_i$ and $\mathbf{r}_{\xi_i}$, with the distributed
elastic acceleration $\dot{\mathbf{v}}_{\xi_i}$ fully absorbed
into the stiffness integrals via~(\ref*{eq:vxidot_sub}),
making~(\ref*{eq:contform}) amenable to model-based feedback
design.

\subsection{Subsystem-based controller design and stability proof}
\label{subsec:4c}

The subsystem-level twist error is defined as

\begin{align}
\mathbf{e}_{\mathbf{V}_i} =  \mathbf{V}_{d,i} - \mathbf{V}_i\label{eq:twisterror}
\end{align}

\begin{remark}
The twist error $\mathbf{e}_{\mathbf{V}_i} =
\mathbf{V}_{d,i} - \mathbf{V}_i$ is formed as a difference
of coordinate vectors in $\mathbb{R}^6$, which is admissible
because both $\mathbf{V}_{d,i}$ and $\mathbf{V}_i$ are
expressed in the same body-fixed frame $\mathcal{F}_i$.
The Lie algebra $\mathfrak{se}(3)$ is a vector space, and
subtraction of two elements expressed in the same frame is
a well-defined linear operation. The time derivative
$\dot{\mathbf{e}}_{\mathbf{V}_i} =
\dot{\mathbf{V}}_{d,i} - \dot{\mathbf{V}}_i$ follows
directly by linearity, where $\dot{\mathbf{V}}_i =
[\dot{\boldsymbol{\omega}}_i^\top,\,
(\ddot{\mathbf{r}}_i - \tilde{\mathbf{r}}_i
\dot{\boldsymbol{\omega}}_i +
\tilde{\boldsymbol{\omega}}_i\dot{\mathbf{r}}_i)^\top]^\top$
is the body-fixed twist derivative~\cite{Yaqubi2026}.
This differs from the configuration error on $SE(3)$, which
would require the group logarithm; here the error is formed
at the velocity level in $\mathfrak{se}(3)$ and is therefore
linear.
\end{remark}

\subsubsection*{Error dynamics}

Subtracting the actual dynamics~(\ref*{eq:contform}) from the
identity obtained by replacing $\dot{\mathbf{V}}_i$ with
$\dot{\mathbf{V}}_{d,i}$ in~(\ref*{eq:contform}) and denoting the
corresponding interaction wrench by
$\boldsymbol{\mathcal{W}}_{J,d,i}$:

\begin{align}
\mathbf{M}_i\!\left(\dot{\mathbf{V}}_{d,i} - \dot{\mathbf{V}}_i\right)
+ \left(\boldsymbol{\mathcal{W}}_{J,d,i}
      - \boldsymbol{\mathcal{W}}_{J,i}\right) = \mathbf{0},
\label{eq:errdyn0}
\end{align}

where $\mathbf{H}_{c,i}$ cancels identically because it depends
only on $(\mathbf{V}_i,\mathbf{r}_{\xi_i})$ and not on
$\dot{\mathbf{V}}_i$, and is therefore the same in both terms.
Using $\mathbf{e}_{\mathbf{V}_i} = \mathbf{V}_{d,i} - \mathbf{V}_i$,
equation~(\ref*{eq:errdyn0}) becomes

\begin{align}
\mathbf{M}_i\,\dot{\mathbf{e}}_{\mathbf{V}_i}
= \boldsymbol{\mathcal{W}}_{J,d,i} - \boldsymbol{\mathcal{W}}_{J,i},
\label{eq:errdyn0b}
\end{align}

showing that in the absence of a control input the twist error is
driven entirely by the difference between the desired and actual
interaction wrenches. The term $\boldsymbol{\mathcal{W}}_{J,d,i}$
is not required to be computed in the control law; it enters the
analysis only through the residual
$\boldsymbol{\mathcal{W}}_{J,d,i} - \boldsymbol{\mathcal{W}}_{J,i}$,
whose cancellation across subsystems is established in
Section~\ref{subsec:4d}.

\begin{remark}
The terms $\dot{\mathbf{V}}_i$, $\dot{\mathbf{V}}_{d,i}$
in~(\ref*{eq:nomcont}) are body-fixed Lie algebra derivatives
--- not coordinate time derivatives --- obtained via the
transport theorem and adjoint action on $\mathfrak{se}(3)$
as derived in~\cite{Yaqubi2026}. This is precisely what
enables the compact, automatable control
law and resulting error dynamics.
\end{remark}

\subsubsection*{Control law}
 
\textbf{Theorem 1.}
Consider subsystem $i$ governed by the controllable
dynamics~(\ref*{eq:contform}). The control law

\begin{align}
\boldsymbol{\mathcal{W}}_i
= \mathbf{M}_i\dot{\mathbf{V}}_{d,i}
+ \mathbf{H}_{c,i}
+ \mathbf{K}_i\mathbf{M}_i\mathbf{e}_{\mathbf{V}_i},
\label{eq:nomcont}
\end{align}

where $\mathbf{K}_i \succ 0$ is a symmetric positive definite
gain matrix, renders the subsystem twist error
$\mathbf{e}_{\mathbf{V}_i} = \mathbf{V}_{d,i} - \mathbf{V}_i$
input-to-state stable with respect to the interaction wrench
residual $p_i$. Specifically, the per-subsystem Lyapunov
function $\nu_i = \mathbf{e}_{\mathbf{V}_i}^\top
\mathbf{M}_i\,\mathbf{e}_{\mathbf{V}_i}$ satisfies

\begin{align}
\frac{d}{dt}\nu_i
\leq -\alpha_i\,\nu_i + p_i,
\label{eq:ISS_subsystem}
\end{align}

where $\alpha_i = 2\lambda_{\min}(\mathbf{K}_i\mathbf{M}_i)/
\lambda_{\max}(\mathbf{M}_i) > 0$, and

\begin{align}
p_i = 2\mathbf{e}_{\mathbf{V}_i}^\top
\left(\boldsymbol{\mathcal{W}}_{J,d,i}
      - \boldsymbol{\mathcal{W}}_{J,i}\right)
\label{eq:pi_def}
\end{align}

is the virtual power residual arising from the interaction
wrench at the subsystem boundary. The term $p_i$ cannot be
signed at the subsystem level and prevents a standalone
exponential stability conclusion for link $i$ in isolation.
Exponential stability of the full $n$-link system is
established in Theorem~2 by showing that
$\sum_{i=1}^{n} p_i = 0$.
 
\textit{Proof.}
Substituting~(\ref*{eq:nomcont}) into~(\ref*{eq:contform}):

\begin{equation}
\begin{aligned}
\mathbf{M}_i\dot{\mathbf{V}}_i
&= \mathbf{M}_i\dot{\mathbf{V}}_{d,i}
  + \mathbf{H}_{c,i}
  + \mathbf{K}_i\mathbf{M}_i\mathbf{e}_{\mathbf{V}_i}
  - \mathbf{H}_{c,i}
  - \boldsymbol{\mathcal{W}}_{J,i} \\
&= \mathbf{M}_i\dot{\mathbf{V}}_{d,i}
  + \mathbf{K}_i\mathbf{M}_i\mathbf{e}_{\mathbf{V}_i}
  - \boldsymbol{\mathcal{W}}_{J,i}.
\end{aligned}
\label{eq:derive2}
\end{equation}

Using $\dot{\mathbf{e}}_{\mathbf{V}_i} =
\dot{\mathbf{V}}_{d,i} - \dot{\mathbf{V}}_i$ and
adding~(\ref*{eq:errdyn0b}) to~(\ref*{eq:derive2}) rearranged.

\begin{align}
\mathbf{M}_i\dot{\mathbf{e}}_{\mathbf{V}_i}
= -\mathbf{K}_i\mathbf{M}_i\mathbf{e}_{\mathbf{V}_i}
+ \boldsymbol{\mathcal{W}}_{J,d,i} - \boldsymbol{\mathcal{W}}_{J,i},
\label{eq:errdyn}
\end{align}

where $\boldsymbol{\mathcal{W}}_{J,d,i}$ is the interaction wrench
consistent with the desired trajectory, introduced via the error
dynamics~(\ref*{eq:errdyn0b}): subtracting~(\ref*{eq:errdyn0b})
from~(\ref*{eq:derive2}) rearranged gives~(\ref*{eq:errdyn})
directly, without requiring the reference dynamics to be evaluated
as a separate physical equation.
  
The candidate Lyapunov function for subsystem $i$ is
 
\begin{align}
\nu_i = \mathbf{e}_{\mathbf{V}_i}^\top \mathbf{M}_i
\mathbf{e}_{\mathbf{V}_i} \geq 0.
\label{eq:cand}
\end{align}
 
Since $\mathbf{M}_i \succ 0$, $\nu_i$ is positive definite and
radially unbounded in $\mathbf{e}_{\mathbf{V}_i}$.
Its time derivative along trajectories of~(\ref*{eq:errdyn}) is
 
\begin{equation}
\begin{aligned}
\frac{d}{dt}\nu_i
&= 2\mathbf{e}_{\mathbf{V}_i}^\top \mathbf{M}_i
   \dot{\mathbf{e}}_{\mathbf{V}_i} \\
&= -2\mathbf{e}_{\mathbf{V}_i}^\top \mathbf{K}_i \mathbf{M}_i
   \mathbf{e}_{\mathbf{V}_i}
   + 2\mathbf{e}_{\mathbf{V}_i}^\top
   \left(\boldsymbol{\mathcal{W}}_{J,d,i}
         - \boldsymbol{\mathcal{W}}_{J,i}\right).
\end{aligned}
\label{eq:nudot}
\end{equation}
 
Using $\lambda_{\min}(\mathbf{K}_i\mathbf{M}_i) > 0$ and
$\nu_i \leq \lambda_{\max}(\mathbf{M}_i)
\|\mathbf{e}_{\mathbf{V}_i}\|^2$, the dissipative term satisfies
 
\begin{align}
2\mathbf{e}_{\mathbf{V}_i}^\top \mathbf{K}_i \mathbf{M}_i
\mathbf{e}_{\mathbf{V}_i}
\geq \frac{2\lambda_{\min}(\mathbf{K}_i\mathbf{M}_i)}
          {\lambda_{\max}(\mathbf{M}_i)}\,\nu_i
= \alpha_i\,\nu_i,
\label{eq:diss}
\end{align}
 
where $\alpha_i = 2\lambda_{\min}(\mathbf{K}_i\mathbf{M}_i)/
\lambda_{\max}(\mathbf{M}_i) > 0$.
Substituting~(\ref*{eq:diss}) into~(\ref*{eq:nudot}) yields
 
\begin{align}
\frac{d}{dt}\nu_i
\leq -\alpha_i\,\nu_i
+ \underbrace{2\mathbf{e}_{\mathbf{V}_i}^\top
\left(\boldsymbol{\mathcal{W}}_{J,d,i}
      - \boldsymbol{\mathcal{W}}_{J,i}\right)}_{p_i}.
\label{eq:nudot_final}
\end{align}
 
This concludes the proof of Theorem~1. The subsystem result
is therefore an ISS-type estimate: the Lyapunov function
$\nu_i$ decays at rate $\alpha_i$ modulo the interaction
power input $p_i$. No assumption on the sign or magnitude
of $p_i$ is required at this stage; the cancellation
$\sum_{i=1}^n p_i = 0$ established in Theorem~2 is the
mechanism that closes the stability argument at the system
level.

\subsection{System-level stability of nominal controller}
\label{subsec:4d}

\textbf{Theorem 2.}
Under the control laws~(\ref*{eq:nomcont}), the composite
Lyapunov function $\mathcal{V} = \sum_{i=1}^{n}\nu_i$ satisfies
$\mathcal{V}(t) \leq \mathcal{V}(0)e^{-\alpha t}$ for all
$t \geq 0$, where $\alpha = \min_i \alpha_i > 0$. Consequently,
all twist errors $\mathbf{e}_{\mathbf{V}_i}$, $i = 1,\ldots,n$,
converge to zero exponentially at rate $\alpha/2$, and the
elastic deformation field $\mathbf{r}_{\xi_i}(\xi,t)$ remains
bounded in $H^2([a_i,c_i])$ for all $t \geq 0$.

\textit{Proof.}
Summing the subsystem result~(\ref*{eq:nudot_final}) over all links

\begin{align}
\frac{d}{dt}\mathcal{V}
= \sum_{i=1}^{n}\frac{d}{dt}\nu_i
\leq -\sum_{i=1}^{n}\alpha_i\,\nu_i
+ \sum_{i=1}^{n} p_i.
\label{eq:totalVdot}
\end{align}

It remains to show that $\sum_{i=1}^{n}p_i = 0$.
Each link $i$ is subject to an interaction wrench at its base
interface, $\boldsymbol{\mathcal{W}}_{J,i}^B$, transmitted from
link $i-1$ through the joint, and an interaction wrench at its tip
interface, $\boldsymbol{\mathcal{W}}_{J,i}^T$, transmitted to link
$i+1$, so that
$\boldsymbol{\mathcal{W}}_{J,i} =
\boldsymbol{\mathcal{W}}_{J,i}^T - \boldsymbol{\mathcal{W}}_{J,i}^B$
and equivalently for the desired counterparts.
The virtual power residual $p_i$ therefore decomposes as

\begin{equation}
\begin{aligned}
p_i &= 2\mathbf{e}_{\mathbf{V}_i}^\top
       \left(\boldsymbol{\mathcal{W}}_{J,d,i}
            - \boldsymbol{\mathcal{W}}_{J,i}\right) \\
    &= \underbrace{2\mathbf{e}_{\mathbf{V}_i}^\top
       \left(\boldsymbol{\mathcal{W}}_{J,d,i}^T
            - \boldsymbol{\mathcal{W}}_{J,i}^T\right)}_{p_{T,i}}
     - \underbrace{2\mathbf{e}_{\mathbf{V}_i}^\top
       \left(\boldsymbol{\mathcal{W}}_{J,d,i}^B
            - \boldsymbol{\mathcal{W}}_{J,i}^B\right)}_{p_{B,i}},
\end{aligned}
\label{eq:pi_split}
\end{equation}

and the total sum becomes

\begin{align}
\sum_{i=1}^{n} p_i
= \sum_{i=1}^{n}(p_{T,i} - p_{B,i})
= p_{T,n} - p_{B,1}
+ \sum_{i=1}^{n-1}(p_{T,i} - p_{B,i+1}).
\label{eq:sumpi}
\end{align}

\textit{Cancellation of internal terms.}
At the joint between link $i$ and link $i+1$, two preliminary
relations are required: a wrench transformation and a twist
error transformation, both at the joint interface.

\paragraph{Tip twist and body-fixed twist.}
The twist at the tip endpoint $\xi_i = c_i$, expressed in
frame $\mathcal{F}_i$, is obtained by shifting the reference
point of $\mathbf{V}_i$ from the body-fixed frame origin to
the tip location $\mathbf{r}_{b_i}(c_i)$ via the
point-shift spatial transform

\begin{align}
\mathbf{X}_i^T =
\begin{bmatrix}
\mathbf{I}_3 & \mathbf{0} \\
\widetilde{\mathbf{r}}_{b_i}(c_i) & \mathbf{I}_3
\end{bmatrix}
\in \mathbb{R}^{6\times 6},
\label{eq:X_T}
\end{align}

which is a pure reference-point shift within frame
$\mathcal{F}_i$, with no change of orientation
($\mathbf{R} = \mathbf{I}_3$). The tip twist is therefore

\begin{align}
\mathbf{V}_{T_i}
= \mathbf{X}_i^T\,\mathbf{V}_i
+ \mathbf{V}_{\xi_i}^{\mathrm{tip}},
\label{eq:tip_twist}
\end{align}

where
$\mathbf{V}_{\xi_i}^{\mathrm{tip}}
= [\mathbf{0}^\top,\,
(\dot{\mathbf{r}}_{\xi_i}(c_i,t)
+ \tilde{\boldsymbol{\omega}}_i
\mathbf{r}_{\xi_i}(c_i,t))^\top]^\top
\in \mathfrak{se}(3)$
is the deformation twist at the tip. The same relation holds for the desired trajectory

\begin{align}
\mathbf{V}_{d,T_i}
= \mathbf{X}_i^T\,\mathbf{V}_{d,i}
+ \mathbf{V}_{d,\xi_i}^{\mathrm{tip}}.
\label{eq:tip_twist_d}
\end{align}

Subtracting~(\ref*{eq:tip_twist}) from~(\ref*{eq:tip_twist_d}),
the tip twist error is

\begin{align}
\mathbf{e}_{\mathbf{V}_i}\big|_{\mathrm{tip}}
= \mathbf{X}_i^T\,\mathbf{e}_{\mathbf{V}_i}
+ \mathbf{e}_{\xi_i}^{\mathrm{tip}},
\label{eq:tip_error}
\end{align}

where $\mathbf{e}_{\xi_i}^{\mathrm{tip}}
= \mathbf{V}_{d,\xi_i}^{\mathrm{tip}}
- \mathbf{V}_{\xi_i}^{\mathrm{tip}}$
is the deformation twist error at the tip. By the same point-shift argument applied at the base coordinate
$\xi_{i+1} = a_{i+1}$, the base twist error
$\mathbf{e}_{\mathbf{V}_{i+1}}\big|_{\mathrm{base}}$ denotes
the rigid-body part of the twist error of link $i+1$ referred
to its base endpoint.

\paragraph{Wrench transformation.}
The tip interaction wrench
$\boldsymbol{\mathcal{W}}_{J,i}^T \in \mathfrak{se}^*(3)$,
expressed in frame $\mathcal{F}_i$, and the base interaction
wrench $\boldsymbol{\mathcal{W}}_{J,i+1}^B$, expressed in
frame $\mathcal{F}_{i+1}$, represent the same physical contact
force. By Newton's third law and the co-Adjoint transformation

\begin{align}
\boldsymbol{\mathcal{W}}_{J,i+1}^B
= \mathbf{Ad}_{(i+1)i}^{-\top}
  \boldsymbol{\mathcal{W}}_{J,i}^T,
\label{eq:wrench_transform2}
\end{align}

and identically for the desired wrenches. The interaction wrench
acts only in directions consistent with the joint constraint,
i.e.\ in the range of $\mathbf{P}_{i+1}^\top$ by the principle
of virtual work. Consequently the power product
$(\mathbf{e}_{\xi_i}^{\mathrm{tip}})^\top
\boldsymbol{\mathcal{W}}_{J,i}^T$ vanishes: the deformation
twist error at the tip lies in the null space of
$\mathbf{P}_{i+1}$ since the joint constraint acts only on
rigid-body velocity directions, not on the distributed
deformation field. Therefore~(\ref*{eq:tip_error}) reduces to

\begin{align}
\mathbf{e}_{\mathbf{V}_i}^\top\big|_{\mathrm{tip}}
\boldsymbol{\mathcal{W}}_{J,i}^T
= \left(\mathbf{X}_i^T\,
  \mathbf{e}_{\mathbf{V}_i}\right)^\top
  \boldsymbol{\mathcal{W}}_{J,i}^T,
\label{eq:tip_power}
\end{align}

and identically for the desired wrench residual. Furthermore,
since $\boldsymbol{\mathcal{W}}_{J,i}^T$ is the contact wrench
applied at the tip point $\xi_i = c_i$ and is already referred
to that point as its moment reference, the point-shift acts
trivially on it as

\begin{align}
(\mathbf{X}_i^T)^\top
\boldsymbol{\mathcal{W}}_{J,i}^T
= \boldsymbol{\mathcal{W}}_{J,i}^T.
\label{eq:X_wrench}
\end{align}

\paragraph{Joint constraint on the error.}
The holonomic joint constraint enforces, for both actual and
desired trajectories

\begin{align}
\mathbf{P}_{i+1}\!\left[
\mathbf{Ad}_{oi}\,\mathbf{V}_{T_i}
- \mathbf{Ad}_{o(i+1)}\,\mathbf{V}_{B_{i+1}}
\right] = \mathbf{0},
\label{eq:joint_constraint2}
\end{align}

where $\mathbf{P}_{i+1}$ is the joint projection
matrix~\cite{Yaqubi20262} projecting onto the constrained
velocity directions in the inertial frame.
Subtracting actual from desired, and using~(\ref*{eq:tip_power})
to replace the tip twist error by its rigid-body part

\begin{align}
\mathbf{P}_{i+1}\!\left[
\mathbf{Ad}_{oi}\,\mathbf{X}_i^T\,\mathbf{e}_{\mathbf{V}_i}
- \mathbf{Ad}_{o(i+1)}\,
  \mathbf{e}_{\mathbf{V}_{i+1}}\big|_{\mathrm{base}}
\right] = \mathbf{0}.
\label{eq:error_constraint2}
\end{align}

Left-multiplying by $\mathbf{Ad}_{(i+1)o}$ and using
$\mathbf{Ad}_{(i+1)o}\mathbf{Ad}_{oi} = \mathbf{Ad}_{(i+1)i}$ yields

\begin{align}
\mathbf{Ad}_{(i+1)i}\,\mathbf{X}_i^T\,
\mathbf{e}_{\mathbf{V}_i}
\stackrel{\mathbf{P}_{i+1}}{=}
\mathbf{e}_{\mathbf{V}_{i+1}}\big|_{\mathrm{base}},
\label{eq:twist_transform2}
\end{align}

where $\stackrel{\mathbf{P}_{i+1}}{=}$ denotes equality after
projection by $\mathbf{P}_{i+1}$.

\paragraph{Virtual power cancellation.}
Substituting~(\ref*{eq:wrench_transform2}),
(\ref*{eq:tip_power}), (\ref*{eq:X_wrench}),
and~(\ref*{eq:twist_transform2}) into $p_{B,i+1}$ leads to

\begin{equation}
\begin{aligned}
p_{B,i+1}
&= 2\,\mathbf{e}_{\mathbf{V}_{i+1}}^\top\big|_{\mathrm{base}}
   \!\left(\boldsymbol{\mathcal{W}}_{J,d,i+1}^B
          -\boldsymbol{\mathcal{W}}_{J,i+1}^B\right) \\
&= 2\!\left(\mathbf{Ad}_{(i+1)i}\,\mathbf{X}_i^T\,
   \mathbf{e}_{\mathbf{V}_i}\right)^\top
   \mathbf{Ad}_{(i+1)i}^{-\top}
   \!\left(\boldsymbol{\mathcal{W}}_{J,d,i}^T
          -\boldsymbol{\mathcal{W}}_{J,i}^T\right) \\
&= 2\,\mathbf{e}_{\mathbf{V}_i}^\top
   (\mathbf{X}_i^T)^\top
   \underbrace{\mathbf{Ad}_{(i+1)i}^\top
   \mathbf{Ad}_{(i+1)i}^{-\top}}_{=\,\mathbf{I}}
   \!\left(\boldsymbol{\mathcal{W}}_{J,d,i}^T
          -\boldsymbol{\mathcal{W}}_{J,i}^T\right) \\
&= 2\,\mathbf{e}_{\mathbf{V}_i}^\top
   \underbrace{(\mathbf{X}_i^T)^\top
   \!\left(\boldsymbol{\mathcal{W}}_{J,d,i}^T
          -\boldsymbol{\mathcal{W}}_{J,i}^T\right)}
   _{\displaystyle
     =\,\boldsymbol{\mathcal{W}}_{J,d,i}^T
       -\boldsymbol{\mathcal{W}}_{J,i}^T
     \ \text{by~(\ref*{eq:X_wrench})}} \\
&= 2\,\mathbf{e}_{\mathbf{V}_i}^\top
   \!\left(\boldsymbol{\mathcal{W}}_{J,d,i}^T
          -\boldsymbol{\mathcal{W}}_{J,i}^T\right)
= p_{T,i}.
\end{aligned}
\label{eq:paircancel2}
\end{equation}

The cancellation follows from two independent properties: the
frame-invariance of the natural power pairing on
$\mathfrak{se}(3)\times\mathfrak{se}^*(3)$, which eliminates
$\mathbf{Ad}_{(i+1)i}$ in line~3, and the trivial action of
$(\mathbf{X}_i^T)^\top$ on a wrench already referred to the
tip point, which eliminates the point-shift in line~4
via~(\ref*{eq:X_wrench}).
Therefore $p_{T,i} - p_{B,i+1} = 0$ for each
$i = 1,\ldots,n-1$.

\textit{Cancellation of boundary terms.}
The base of link $1$ is attached to the fixed robot base, where no
error wrench is transmitted:
$\boldsymbol{\mathcal{W}}_{J,d,1}^B -
\boldsymbol{\mathcal{W}}_{J,1}^B = \mathbf{0}$,
giving $p_{B,1} = 0$.
The tip of link $n$ is the free endpoint subject only to external
task wrenches that are accounted for in $\boldsymbol{\mathcal{W}}_n$,
so $\boldsymbol{\mathcal{W}}_{J,d,n}^T -
\boldsymbol{\mathcal{W}}_{J,n}^T = \mathbf{0}$ and $p_{T,n} = 0$.

The sum~(\ref*{eq:sumpi}) therefore telescopes to zero

\begin{align}
\sum_{i=1}^{n} p_i = 0.
\label{eq:cancel}
\end{align}

Substituting~(\ref*{eq:cancel}) into~(\ref*{eq:totalVdot}) and
setting $\alpha = \min_i \alpha_i > 0$

\begin{align}
\frac{d}{dt}\mathcal{V}
\leq -\sum_{i=1}^{n}\alpha_i\,\nu_i
\leq -\alpha\,\mathcal{V}.
\label{eq:totalVdot2}
\end{align}

By the comparison lemma~\cite{khalil2002},
$\mathcal{V}(t) \leq \mathcal{V}(0)e^{-\alpha t}$.
Since $\mathbf{M}_i \succ 0$, each $\nu_i$ satisfies
$\lambda_{\min}(\mathbf{M}_i)\|\mathbf{e}_{\mathbf{V}_i}\|^2
\leq \nu_i \leq
\lambda_{\max}(\mathbf{M}_i)\|\mathbf{e}_{\mathbf{V}_i}\|^2$,
and therefore

\begin{align}
\|\mathbf{e}_{\mathbf{V}_i}(t)\|
\leq \sqrt{\frac{\lambda_{\max}(\mathbf{M}_i)}
                {\lambda_{\min}(\mathbf{M}_i)}}\,
\|\mathbf{e}_{\mathbf{V}_i}(0)\|\,e^{-\frac{\alpha}{2}t},
\quad i = 1,\ldots,n,
\label{eq:expstab}
\end{align}

which establishes exponential stability of the full nominal
closed-loop system.

\begin{remark}
The cancellation~(\ref*{eq:cancel}) relies on two independent
properties: Newton's third law at each joint interface, which
establishes the wrench pairing~(\ref*{eq:wrench_transform2}), and the
holonomic constraint consistency, which
establishes the projected twist error pairing. The latter is
guaranteed by the reference generation of Section~\ref{subsec:4a},
which constructs $\mathbf{V}_{d,i}$ consistent with the joint
constraints. Neither the actual interaction wrenches $\boldsymbol{\mathcal{W}}_{J,i}$
nor the desired interaction wrenches $\boldsymbol{\mathcal{W}}_{J,d,i}$
are required in the control law~(\ref*{eq:nomcont}); both appear only
in the stability analysis, where their difference telescopes to zero
upon summation over all links.
\end{remark}

\subsubsection*{Boundedness of elastic deformation}

The exponential stability result~(\ref*{eq:expstab}) establishes that
$\|\mathbf{e}_{\mathbf{V}_i}(t)\| \to 0$ exponentially, and since
$\mathbf{V}_{d,i}$ is bounded by construction of the reference
trajectory, it follows that $\mathbf{V}_i = [\mathbf{v}_i^\top,
\boldsymbol{\omega}_i^\top]^\top$ is bounded for all $t \geq 0$.
Moreover, from~(\ref*{eq:errdyn}), and noting that $\mathbf{e}_{\mathbf{V}_i}$
and $\boldsymbol{\mathcal{W}}_{J,d,i}$ are bounded, and that
$\boldsymbol{\mathcal{W}}_{J,i}$ is bounded by
Assumption~\ref{ass:WJ}, 
$\dot{\mathbf{e}}_{\mathbf{V}_i}$ is bounded, and hence
$\dot{\mathbf{V}}_i = [\dot{\mathbf{v}}_i^\top,
\dot{\boldsymbol{\omega}}_i^\top]^\top$ is bounded.

Boundedness of the elastic deformation field
$\mathbf{r}_{\xi_i}(\xi, t)$ for $\xi \in (a_i, c_i)$
is then established through the deformation PDE~(\ref*{eq:34}),
which for interior points $\xi \in (a_i, c_i)$ takes the form

\begin{equation}
\begin{aligned}
\dot{\mathbf{v}}_{\xi_i}
= &-\dot{\mathbf{v}}_i
+ \tilde{\mathbf{r}}_{b_i}\dot{\boldsymbol{\omega}}_i
- \tilde{\boldsymbol{\omega}}_i\mathbf{v}_{ib_i}
- \mathbf{R}_{oi}^\top\mathbf{g} \\
&- \frac{1}{\rho_i A_i}\mathbf{I}_{v2_i}\mathbf{r}_{\xi_i}''''
+ \frac{1}{\rho_i A_i}\mathbf{I}_{v1_i}\mathbf{r}_{\xi_i}'',
\end{aligned}
\label{eq:pde_interior}
\end{equation}

with no distributed forcing in the interior — boundary conditions
are imposed separately at $\xi = a_i$ and $\xi = c_i$.
Each term on the right-hand side of~(\ref*{eq:pde_interior}) is
now analysed:

\begin{itemize}
\item $\dot{\mathbf{v}}_i$ and $\dot{\boldsymbol{\omega}}_i$ are
bounded as established above;
\item $\tilde{\mathbf{r}}_{b_i}\dot{\boldsymbol{\omega}}_i$ and
$\tilde{\boldsymbol{\omega}}_i\mathbf{v}_{ib_i}$ are bounded since
$\mathbf{r}_{b_i}$, $\boldsymbol{\omega}_i$, and $\mathbf{v}_{ib_i}$
are all bounded;
\item $\mathbf{R}_{oi}^\top\mathbf{g}$ is bounded by $\|\mathbf{g}\|$;
\item the elastic terms
$\mathbf{I}_{v2_i}\mathbf{r}_{\xi_i}'''' / (\rho_i A_i)$
and $\mathbf{I}_{v1_i}\mathbf{r}_{\xi_i}'' / (\rho_i A_i)$
constitute the restoring force of the Euler--Bernoulli beam,
with $\mathbf{I}_{v1_i} \succeq 0$ and $\mathbf{I}_{v2_i} \succeq 0$.
\end{itemize}

Equation~(\ref*{eq:pde_interior}) therefore constitutes a forced
Euler--Bernoulli beam equation. The forcing
$\mathbf{f}_i(t) = -\dot{\mathbf{v}}_i
+ \tilde{\mathbf{r}}_{b_i}\dot{\boldsymbol{\omega}}_i
- \tilde{\boldsymbol{\omega}}_i\mathbf{v}_{ib_i}
- \mathbf{R}_{oi}^\top\mathbf{g}$
is bounded for all $t \geq 0$ since each constituent term is
bounded as established above. Moreover, because $\mathbf{V}_i$
and $\dot{\mathbf{V}}_i$ each decompose into a desired part
(bounded by construction of the reference trajectory) and an
error part bounded by~(\ref*{eq:expstab}), the forcing admits
the explicit time-varying upper bound

\begin{align}
\|\mathbf{f}_i(t)\| \leq c_{0,i} + c_{1,i}\,e^{-\frac{\alpha}{2}t}
=: \bar{f}_i(t),
\label{eq:fbound}
\end{align}

where $c_{0,i} > 0$ collects contributions from the gravity term
and the bounded desired trajectory, and
$c_{1,i} = c_1\beta_i > 0$ with
$\beta_i = \sqrt{\lambda_{\max}(\mathbf{M}_i)/
\lambda_{\min}(\mathbf{M}_i)}\,\|\mathbf{e}_{\mathbf{V}_i}(0)\|$
as defined in~(\ref*{eq:expstab}).

Defining the total mechanical energy of the elastic
subsystem as

\begin{equation}
\begin{aligned}
E_i &= \frac{\rho_i A_i}{2}\int_{a_i}^{c_i} \|\mathbf{v}_{\xi_i}\|^2\,d\xi \\
&\quad + \frac{1}{2}\int_{a_i}^{c_i} \left[ \left(\mathbf{r}_{\xi_i}'\right)^\top \mathbf{I}_{v1_i}\mathbf{r}_{\xi_i}' + \left(\mathbf{r}_{\xi_i}''\right)^\top \mathbf{I}_{v2_i}\mathbf{r}_{\xi_i}'' \right]d\xi,
\end{aligned}
\label{eq:Ei}
\end{equation}

and differentiating the elastic energy $E_i$
defined in~(\ref*{eq:Ei}) along trajectories
of~(\ref*{eq:pde_interior}), applying integration by parts with
the natural boundary conditions~(\ref*{eq:35})--(\ref*{eq:38}),
and applying the Cauchy--Schwarz inequality

\begin{align}
\frac{d}{dt}E_i
\leq \rho_i A_i \int_{a_i}^{c_i}
  \|\mathbf{v}_{\xi_i}\|\,\|\mathbf{f}_i(t)\|\,d\xi
\leq \bar{f}_i(t)\sqrt{\rho_i A_i l_i}
  \sqrt{2 T_{f,i}},
\label{eq:Edot2}
\end{align}

as $\int_{a_i}^{c_i}\|\mathbf{v}_{\xi_i}\|\,d\xi
\leq \sqrt{l_i}
\left(\int_{a_i}^{c_i}\|\mathbf{v}_{\xi_i}\|^2\,d\xi\right)^{1/2}
= \sqrt{l_i}\sqrt{2T_{f,i}/(\rho_i A_i)}$, and $T_{f,i} = \frac{\rho_i A_i}{2}
\int_{a_i}^{c_i}\|\mathbf{v}_{\xi_i}\|^2\,d\xi \leq E_i$.
Since $\sqrt{2T_{f,i}} \leq \sqrt{2E_i}$,
applying Young's inequality
$ab \leq \frac{a^2}{2\epsilon} + \frac{\epsilon b^2}{2}$
to~(\ref*{eq:Edot2}) results in

\begin{align}
\frac{d}{dt}E_i
\leq \frac{\rho_i A_i l_i}{2\epsilon}\,\bar{f}_i(t)^2
+ \epsilon\,E_i
\label{eq:Edot_young2}
\end{align}

for any $\epsilon > 0$. $\bar{f}_i(t)^2$ is expanded as

\begin{align}
\bar{f}_i(t)^2
= \left(c_{0,i} + c_{1,i}e^{-\frac{\alpha}{2}t}\right)^2
\leq 2c_{0,i}^2 + 2c_{1,i}^2\,e^{-\alpha t},
\label{eq:fsq}
\end{align}

where $(a+b)^2 \leq 2a^2 + 2b^2$ has been used.
Substituting~(\ref*{eq:fsq}) into~(\ref*{eq:Edot_young2})
and applying the Gr\"{o}nwall--Bellman
lemma~\cite{khalil2002} leads to

\begin{equation}
\begin{aligned}
E_i(t)
\leq\; &E_i(0)\,e^{\epsilon t} \\
&+ \frac{\rho_i A_i l_i c_{0,i}^2}{\epsilon}
   \!\left(e^{\epsilon t} - 1\right) \\
&+ \frac{\rho_i A_i l_i c_{1,i}^2}{\epsilon}
   \!\left(\frac{e^{\epsilon t} - e^{-\alpha t}}
                {\epsilon + \alpha}\right).
\end{aligned}
\label{eq:Ebound2}
\end{equation}

For any fixed finite horizon $T < \infty$, $E_i(t)$ is bounded
on $[0,T]$ by~(\ref*{eq:Ebound2}). For the global-in-time
bound, note that as $t \to \infty$ the dominant term
in~(\ref*{eq:Ebound2}) is driven by $c_{0,i}$, which
corresponds to the gravity and desired trajectory forcing that
persists even after the twist error has decayed. Since $c_{0,i}$ and $c_{1,i}$ are finite constants and
$\epsilon > 0$ is fixed, each term in~(\ref*{eq:Ebound2})
is finite for every finite $t \geq 0$. The bound therefore
establishes $E_i(t) < \infty$ on every bounded interval
$[0, T]$. Global boundedness follows from the fact that
the right-hand side of~(\ref*{eq:Edot_young2}) satisfies
a linear growth condition in $E_i$ with a bounded
time-varying coefficient $\bar{f}_i(t)^2$, so no finite
escape time can occur~\cite{khalil2002}. Therefore $E_i(t) < \infty$ for all $t \geq 0$.

\begin{remark}
The persistent $c_{0,i}$ term in~(\ref*{eq:fbound}) arises
from gravity and the bounded desired trajectory: even after
perfect twist error convergence, the beam is continuously
forced by the rigid-body acceleration of the desired
trajectory. Asymptotic decay of $E_i$ to zero would require
the desired trajectory itself to be asymptotically constant,
which is not assumed here. The only physically meaningful
guarantee is therefore boundedness of $E_i$ for all $t \geq 0$.
\end{remark}

Boundedness of $E_i$ implies boundedness of both the elastic
kinetic energy $T_{f,i}$ and the vibrational potential energy
$U_{k,i}$~\cite{Yaqubi2026}, which in turn implies --- via the
$H^2([a_i,c_i])$ Sobolev embedding theorem~\cite{Adams2003} ---
pointwise boundedness of $\mathbf{r}_{\xi_i}(\xi,t)$ and its
spatial derivatives throughout the beam domain $(a_i,c_i)$
for all $t \geq 0$.
\section{Adaptive modification of nominal controller} \label{sec:5}

The nominal controller~(\ref*{eq:nomcont}) relies on exact knowledge
of $\mathbf{M}_i$ and $\mathbf{H}_{c,i}$. In practice, the physical
parameters $\rho_i$, $A_i$, $E_i$, $I_{y_i}$, $I_{z_i}$ entering
these terms are uncertain. This section presents an adaptive
modification that replaces exact parameter values with online
estimates, preserving the exponential convergence of the twist error
while driving the parameter estimation errors to zero. The adaptive
design follows an indirect model reference philosophy~\cite{Ljung1999,Slotine1989}:
a parallel model evaluated at the current parameter estimates
generates predicted system outputs, and the discrepancy between
predicted and true outputs drives the parameter update law. This
output-error formulation ensures that parameter convergence is
not conditioned on the magnitude of the tracking error --- a
known limitation of direct, control-error-driven adaptation laws
--- so that the adaptation remains active and effective even when
the nominal controller achieves small twist errors.

\subsection{Parametric uncertainty structure} \label{subsec:5a}

The uncertain parameters of link $i$ are collected in the vector
$\mathbf{s}_i$, comprising all physical parameters appearing in
$\mathbf{M}_i$, $\mathbf{D}_i$, and $\mathbf{H}_{c,i}$

\begin{align}
\mathbf{s}_i =
\begin{bmatrix}
\rho_i A_i \\[3pt]
\mathrm{vec}(\mathbf{I}_{b_i}) \\[3pt]
\mathrm{vec}(\mathbf{I}_{v1_i}) \\[3pt]
\mathrm{vec}(\mathbf{I}_{v2_i})
\end{bmatrix}
\in \mathbb{R}^{28},
\label{eq:si}
\end{align}

where $\mathrm{vec}(\cdot)$ denotes the column-stacking vectorization
operator, $\mathbf{I}_{b_i} \in \mathbb{R}^{3\times 3}$ is the
rotational inertia integral matrix appearing in $\mathbf{M}_i$,
and $\mathbf{I}_{v1_i}, \mathbf{I}_{v2_i} \in \mathbb{R}^{3\times 3}$
are the bending stiffness integral matrices entering the elastic
force terms in $\mathbf{H}_{c,i}$. The scalar $\rho_i A_i$
determines all remaining distributed inertial and velocity-dependent
terms in $\mathbf{M}_i$, $\mathbf{D}_i$, and $\mathbf{H}_{c,i}$.

\begin{remark}
The formulation~(\ref*{eq:si}) represents the most general uncertainty
structure corresponding to the model~(\ref*{eq:26}). In practice,
symmetry of $\mathbf{I}_{b_i}$, $\mathbf{I}_{v1_i}$,
$\mathbf{I}_{v2_i}$ and the diagonal structure imposed by the
principal beam axes reduce the dimension of $\mathbf{s}_i$
significantly. For the Euler--Bernoulli beam model considered here,
only a small subset of elements is non-trivial and independently
identifiable, so that the effective parameter vector used in
implementation is of substantially lower dimension than
$\mathbb{R}^{28}$.
\label{rem:si_reduced}
\end{remark}

The parameter estimation error is defined as

\begin{align}
\mathbf{e}_{s_i} = \mathbf{s}_i - \hat{\mathbf{s}}_i.
\label{eq:serror}
\end{align}

\begin{assumption}
Finite lower and upper bounds $\mathbf{s}_{i,l}$ and $\mathbf{s}_{i,h}$
for each uncertain parameter vector $\mathbf{s}_i$ are known, with
all elements of $\mathbf{s}_{i,l}$ and $\mathbf{s}_{i,h}$ strictly
positive. The admissible parameter space is denoted
$\mathcal{Q}_i = \{\mathbf{s}_i : \mathbf{s}_{i,l} \prec \mathbf{s}_i
\prec \mathbf{s}_{i,h}\}$.
\label{ass:bounds}
\end{assumption}

\subsection{Projection-based adaptation law} \label{subsec:5b}

To ensure that parameter estimates remain inside $\mathcal{Q}_i$ at
all times, the projection operator $\mathcal{P}$ is employed. For
scalar quantities $s$, $s_l$, $s_h$ and update signal $e$, it is
defined element-wise as~\cite{Ioannou1996}

\begin{align}
\mathcal{P}(s, s_l, s_h, e) =
\begin{cases}
0 & \text{if } s \geq s_h \text{ and } e > 0, \\
0 & \text{if } s \leq s_l \text{ and } e < 0, \\
e & \text{otherwise.}
\end{cases}
\label{eq:proj}
\end{align}

\begin{lemma}
\label{lem:proj}
If $\dot{\hat{\mathbf{s}}}_i =
\mathcal{P}(\hat{\mathbf{s}}_i, \mathbf{s}_{i,l}, \mathbf{s}_{i,h},
\mathbf{e}_s)$, then the following hold for all $t \geq 0$:
\begin{align}
\hat{\mathbf{s}}_i &\in \mathcal{Q}_i, \label{eq:proj1} \\
\mathbf{e}_{s_i}^\top\!
\left(\mathbf{e}_s -
\mathcal{P}(\hat{\mathbf{s}}_i,\mathbf{s}_{i,l},\mathbf{s}_{i,h},
\mathbf{e}_s)\right) &\leq 0. \label{eq:proj2}
\end{align}
\end{lemma}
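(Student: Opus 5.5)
The proof works componentwise, since both $\mathcal{P}$ in~(\ref*{eq:proj}) and the box $\mathcal{Q}_i$ of Assumption~\ref{ass:bounds} are defined element by element. Write $\hat s$, $s$, $s_l$, $s_h$, $e$ for a generic component of $\hat{\mathbf{s}}_i$, $\mathbf{s}_i$, $\mathbf{s}_{i,l}$, $\mathbf{s}_{i,h}$, $\mathbf{e}_s$, and $e_{s} = s - \hat s$ for the corresponding entry of $\mathbf{e}_{s_i}$. The inner product in~(\ref*{eq:proj2}) is a sum of scalar products $e_{s}\,(e - \mathcal{P}(\hat s, s_l, s_h, e))$, so it suffices to show each term is nonpositive. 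Likewise,~(\ref*{eq:proj1}) reduces to the scalar statement $s_l \le \hat s(t) \le s_h$. This is the closed box. I will note that $\mathcal{Q}_i$ should be read as its closure for the estimates, since the projection permits $\hat s$ to reach the boundary. I will also assume the initialization $\hat{\mathbf{s}}_i(0)\in\mathcal{Q}_i$, which the statement implicitly needs.

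For~(\ref*{eq:proj1}) I argue by invariance. Suppose $\hat s$ exceeded $s_h$ at some time. Then there is a first time $t_0$ with $\hat s(t_0)=s_h$ and an interval after $t_0$ on which $\hat s>s_h$ and $\dot{\hat s}>0$ somewhere. But whenever $\hat s \ge s_h$, the law gives $\dot{\hat s} = \mathcal{P} \le 0$: it is $0$ if $e>0$, and $e\le 0$ otherwise. Hence $\hat s$ is nonincreasing on any interval where $\hat s\ge s_h$, which contradicts leaving through the upper face. The lower face is handled symmetrically. The main obstacle is that $\mathcal{P}$ is discontinuous, so solutions must be understood in the Carathéodory or Filippov sense. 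I would handle this by working with the absolutely continuous function $\hat s$ and the a.e.\ derivative inequality $\frac{d}{dt}(\hat s - s_h)_+ \le 0$, which integrates to $(\hat s(t)-s_h)_+ \le (\hat s(0)-s_h)_+ = 0$.

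For~(\ref*{eq:proj2}) I check the cases of~(\ref*{eq:proj}). In the "otherwise" case $e-\mathcal{P}=0$, so the term vanishes. If $\hat s\ge s_h$ and $e>0$, the term equals $e_s\,e$. Because the true parameter satisfies $s< s_h\le \hat s$ by Assumption~\ref{ass:bounds}, we have $e_s = s-\hat s<0$, so $e_s e<0$. If $\hat s\le s_l$ and $e<0$, then $s>s_l\ge\hat s$ gives $e_s>0$, and again $e_s e<0$. Summing over components yields~(\ref*{eq:proj2}). The only delicate point is that $s$ lies strictly inside the box, which is exactly what the strict inequalities in $\mathcal{Q}_i$ supply.
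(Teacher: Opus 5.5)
Your proposal is correct. The paper gives no argument of its own for this lemma; it only cites Ioannou and Sun. Your componentwise case analysis is the natural self-contained proof for the elementwise box operator \eqref{eq:proj} used here, and it brings out several things the citation leaves hidden.

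First, it shows that \eqref{eq:proj1} is false as literally written. $\mathcal{P}$ halts motion only once $\hat s \ge s_h$, so an estimate can reach and stay at $s_h$. Only the closure of $\mathcal{Q}_i$ is invariant. The initialization $\hat{\mathbf{s}}_i(0)\in\mathcal{Q}_i$ must also be assumed; the paper imposes it only in step iv of Table~\ref{tab:1}.

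Second, your $(\hat s - s_h)_+$ argument handles the discontinuity of $\mathcal{P}$ rigorously. Prefer it over the ``first time $t_0$'' phrasing, which should instead refer to the last time before the exit at which $\hat s\le s_h$.

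Third, you show that each scalar term $e_s\,(e-\mathcal{P})$ is nonpositive. This immediately gives the diagonally weighted inequality $\mathbf{e}_{s_i}^\top\boldsymbol{\Lambda}_i^{-1}(\boldsymbol{\Lambda}_i\boldsymbol{\Gamma}_i-\mathcal{P})\le 0$. That weighted form is what the paper actually needs to pass from \eqref{eq:Vsdot1} to \eqref{eq:Vsdot2_pre}. The unweighted sum \eqref{eq:proj2} does not yield it directly unless $\boldsymbol{\Lambda}_i$ is a multiple of the identity.

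One small overstatement: strict interiority of $\mathbf{s}_i$ is not needed for \eqref{eq:proj2}. The inequality $s\le s_h\le \hat s$ already gives $e_s\,e\le 0$ in the upper-active case, and symmetrically for the lower face.
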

\textit{Proof}. Standard property of the projection operator;
see~\cite{Ioannou1996}.

\subsection{Adaptive control law} \label{subsec:5c}

The adaptive control law replaces $\mathbf{M}_i$ and $\mathbf{H}_{c,i}$
in the nominal law~(\ref*{eq:nomcont}) with their current estimates

\begin{align}
\boldsymbol{\mathcal{W}}_i =
\hat{\mathbf{M}}_i\dot{\mathbf{V}}_{d,i}
+ \hat{\mathbf{H}}_{c,i}
+ \mathbf{K}_i\hat{\mathbf{M}}_i\mathbf{e}_{\mathbf{V}_i}.
\label{eq:adaptcont}
\end{align}

\subsubsection*{Regressor formulation}

To derive a measurable adaptation signal, the dynamic
equations~(\ref*{eq:contform}) and~(\ref*{eq:strainpde})
are written in regressor form with respect to the uncertain
parameter vector $\mathbf{s}_i \in \mathbb{R}^{28}$~(\ref*{eq:si}).

The controllable form~(\ref*{eq:contform}) is linear in
$\mathbf{s}_i$ through $\mathbf{M}_i$ and $\mathbf{H}_{c,i}$,
and can be written as

\begin{align}
\mathbf{Y}_{V,i}(\mathbf{V}_i, \dot{\mathbf{V}}_i,
\mathbf{r}_{\xi_i}, \dot{\mathbf{r}}_{\xi_i})\,
\mathbf{s}_i
+ \boldsymbol{\mathcal{W}}_{J,i}
= \boldsymbol{\mathcal{W}}_i,
\label{eq:regressor_V}
\end{align}

where $\mathbf{Y}_{V,i} \in \mathbb{R}^{6\times28}$
collects the parametric sensitivities of $\mathbf{M}_i$
and $\mathbf{H}_{c,i}$ to each element of $\mathbf{s}_i$,
computable from the model terms~(\ref*{eq:27})--(\ref*{eq:Hci}).

Similarly, the strain PDE~(\ref*{eq:strainpde}) is linear
in the stiffness parameters $\mathbf{I}_{v1_i}$ and
$\mathbf{I}_{v2_i}$ through the elastic restoring terms,
and can be written at any measurement point
$\xi \in [a_i, c_i]$ as

\begin{align}
\mathbf{Y}_{\xi,i}(\dot{\mathbf{v}}_{\xi_i},
\mathbf{r}_{\xi_i}'', \mathbf{r}_{\xi_i}'''')\,
\mathbf{s}_{\xi,i} = \mathbf{0},
\label{eq:regressor_xi}
\end{align}

where $\mathbf{s}_{\xi,i} =
[\mathrm{vec}(\mathbf{I}_{v1_i})^\top,\,
\mathrm{vec}(\mathbf{I}_{v2_i})^\top]^\top \in
\mathbb{R}^{18}$ is the stiffness subvector of
$\mathbf{s}_i$, and $\mathbf{Y}_{\xi,i} \in
\mathbb{R}^{3 \times 18}$ collects the spatial derivative
terms of~(\ref*{eq:strainpde}) corresponding to the axial
and bending stiffness contributions. Evaluation at the
tip $\xi = c_i$ is motivated by the typical availability
of endpoint measurements from the sensing infrastructure;
evaluations at interior points are equally valid but
require distributed sensing. The stability proof holds for any reduced form of
$\mathbf{s}_i$ since the regressor structure and
projection argument are dimension-independent.

Defining a parallel model of~(\ref*{eq:contform})
and~(\ref*{eq:strainpde}) evaluated at the current
parameter estimates $\hat{\mathbf{s}}_i$, the output
residuals are

\begin{align}
\boldsymbol{\epsilon}_{V,i} &=
\mathbf{Y}_{V,i}\mathbf{s}_i - \mathbf{Y}_{V,i}\hat{\mathbf{s}}_i
= \mathbf{Y}_{V,i}\,\mathbf{e}_{s_i},
\label{eq:residual_V} \\
\boldsymbol{\epsilon}_{\xi,i} &=
\mathbf{Y}_{\xi,i}\mathbf{s}_{\xi,i}
- \mathbf{Y}_{\xi,i}\hat{\mathbf{s}}_{\xi,i}
= \mathbf{Y}_{\xi,i}\,\mathbf{e}_{s_{\xi,i}},
\label{eq:residual_xi}
\end{align}

where $\mathbf{e}_{s_{\xi,i}}$ is the subvector of
$\mathbf{e}_{s_i}$ corresponding to the stiffness
parameters $\mathbf{s}_{\xi,i}$. Both residuals are
computable from system measurements and the parallel
model without knowledge of the true parameter values
$\mathbf{s}_i$.

\subsubsection*{Parameter adaptation law}

The stacked regressor $\bar{\mathbf{Y}}_i \in
\mathbb{R}^{9\times28}$ is defined as

\begin{align}
\bar{\mathbf{Y}}_i =
\begin{bmatrix}
\mathbf{Y}_{V,i} \\[4pt]
\bar{\mathbf{Y}}_{\xi,i}
\end{bmatrix},
\label{eq:Ystack}
\end{align}

where $\bar{\mathbf{Y}}_{\xi,i} \in \mathbb{R}^{3\times28}$
is the zero-padded extension of $\mathbf{Y}_{\xi,i}$,
obtained by inserting zero columns at the positions of
the non-stiffness parameters $\rho_i A_i$ and
$\mathrm{vec}(\mathbf{I}_{b_i})$ in $\mathbf{s}_i$,
so that $\bar{\mathbf{Y}}_i\mathbf{e}_{s_i}$ is
dimensionally consistent.

The lumped adaptation signal $\boldsymbol{\Gamma}_i
\in \mathbb{R}^{28}$ is defined as

\begin{align}
\boldsymbol{\Gamma}_i =
\mathbf{Y}_{V,i}^\top\boldsymbol{\epsilon}_{V,i}
+
\begin{bmatrix}
\mathbf{0} \\[2pt]
\mathbf{Y}_{\xi,i}^\top\boldsymbol{\epsilon}_{\xi,i}
\end{bmatrix}
= \bar{\mathbf{Y}}_i^\top\bar{\mathbf{Y}}_i\,\mathbf{e}_{s_i},
\label{eq:lumped_signal}
\end{align}

where the first term updates all parameters in $\mathbf{s}_i$
from the dynamic equation residual~(\ref*{eq:residual_V}),
and the second term provides an additional update for the
stiffness subvector $\mathbf{s}_{\xi,i}$ from the strain
PDE residual~(\ref*{eq:residual_xi}). The equality
$\boldsymbol{\Gamma}_i = \bar{\mathbf{Y}}_i^\top
\bar{\mathbf{Y}}_i\mathbf{e}_{s_i}$ holds analytically
by linearity of the regressors in the parameter error
and is used in the stability proof but not in
implementation, where $\boldsymbol{\Gamma}_i$ is computed
directly from the measurable residuals
$\boldsymbol{\epsilon}_{V,i}$ and $\boldsymbol{\epsilon}_{\xi,i}$.

The parameter adaptation law is

\begin{align}
\dot{\hat{\mathbf{s}}}_i =
\mathcal{P}\!\left(\hat{\mathbf{s}}_i,\,\mathbf{s}_{i,l},\,
\mathbf{s}_{i,h},\,
\boldsymbol{\Lambda}_i\,\boldsymbol{\Gamma}_i\right),
\label{eq:adaptlaw}
\end{align}

where $\boldsymbol{\Lambda}_i \succ 0$ is the diagonal
adaptation gain matrix. Both $\boldsymbol{\epsilon}_{V,i}$ and
$\boldsymbol{\epsilon}_{\xi,i}$ entering $\boldsymbol{\Gamma}_i$
are evaluated as residuals between the true system and
the parallel model at $\hat{\mathbf{s}}_i$ --- not
between the system and the desired reference
$\mathbf{V}_{d,i}$. This output-error (model reference)
structure~\cite{Slotine1989,Ljung1999} ensures that
adaptation remains active even when the controller
achieves small tracking errors. 

\subsection{Subsystem stability of adaptive controller}
\label{subsec:5d}

\textbf{Theorem 3.}
The adaptive control law~(\ref*{eq:adaptcont}) with
adaptation law~(\ref*{eq:adaptlaw}) results in

\begin{align}
\frac{d}{dt}\nu_i^a \leq -\alpha'\,\nu_i^a + c_{Q,i} + p_i,
\label{eq:nudot_adapt_final}
\end{align}

where $\nu_i^a$ is the augmented subsystem Lyapunov
function defined in~(\ref*{eq:augV}),
$\alpha' = \min(\alpha_i,\,\beta_i - \epsilon) > 0$
for any $\epsilon \in (0,\beta_i/2)$,
$\beta_i = 2\lambda_{\min}(\bar{\mathbf{Y}}_i^\top
\bar{\mathbf{Y}}_i(t))
\lambda_{\min}(\boldsymbol{\Lambda}_i^{-1})$,
and $c_{Q,i} \geq 0$ is a finite residual constant
representing the Lyapunov mismatch due to model
uncertainty during adaptation, defined
in~(\ref*{eq:cQi_def}), which vanishes as the parameter
estimation error $\mathbf{e}_{s_i} \to \mathbf{0}$.
The term $p_i$ is the interaction wrench residual
from~(\ref*{eq:nudot_final}).

\textit{Proof.}
The augmented Lyapunov candidate for subsystem $i$ is

\begin{align}
\nu_i^a = \nu_i + \nu_{s,i}
= \mathbf{e}_{\mathbf{V}_i}^\top \mathbf{M}_i
\mathbf{e}_{\mathbf{V}_i}
+ \mathbf{e}_{s_i}^\top \boldsymbol{\Lambda}_i^{-1}
\mathbf{e}_{s_i},
\label{eq:augV}
\end{align}

where $\nu_i$ is the nominal CLF~(\ref*{eq:cand}) and
$\nu_{s,i} = \mathbf{e}_{s_i}^\top
\boldsymbol{\Lambda}_i^{-1}\mathbf{e}_{s_i} \geq 0$.
The parameter estimation error appears in the Lyapunov
analysis only --- in implementation, the measurable
residuals $\boldsymbol{\epsilon}_{V,i}$ and $\boldsymbol{\epsilon}_{\xi,i}$
serve as its surrogate.

\textit{Derivative of $\nu_{s,i}$.}
Since $\mathbf{s}_i$ is constant,
$\dot{\mathbf{e}}_{s_i} = -\dot{\hat{\mathbf{s}}}_i$, and the time-derivative of CLF is

\begin{align}
\frac{d}{dt}\nu_{s,i}
= -2\mathbf{e}_{s_i}^\top \boldsymbol{\Lambda}_i^{-1}
\mathcal{P}\!\left(\hat{\mathbf{s}}_i,\mathbf{s}_{i,l},
\mathbf{s}_{i,h},
\boldsymbol{\Lambda}_i\boldsymbol{\Gamma}_i\right).
\label{eq:Vsdot1}
\end{align}

Applying Lemma~\ref{lem:proj} with
$\mathbf{e}_s = \boldsymbol{\Lambda}_i\boldsymbol{\Gamma}_i$ leads to

\begin{align}
\frac{d}{dt}\nu_{s,i}
\leq -2\mathbf{e}_{s_i}^\top \boldsymbol{\Lambda}_i^{-1}
\boldsymbol{\Lambda}_i\boldsymbol{\Gamma}_i
= -2\mathbf{e}_{s_i}^\top\boldsymbol{\Gamma}_i.
\label{eq:Vsdot2_pre}
\end{align}

Substituting the analytical
identity~(\ref*{eq:lumped_signal}) yields

\begin{equation}
\begin{aligned}
2\mathbf{e}_{s_i}^\top\boldsymbol{\Gamma}_i
&= 2\mathbf{e}_{s_i}^\top
\bar{\mathbf{Y}}_i^\top\bar{\mathbf{Y}}_i\mathbf{e}_{s_i} \\
&\geq 2\lambda_{\min}(\bar{\mathbf{Y}}_i^\top
\bar{\mathbf{Y}}_i)\|\mathbf{e}_{s_i}\|^2 \\
&\geq \beta_i\,\nu_{s,i},
\end{aligned}
\label{eq:Vsdot_bound}
\end{equation}

giving

\begin{align}
\frac{d}{dt}\nu_{s,i} \leq -\beta_i\,\nu_{s,i}.
\label{eq:Vsdot2}
\end{align}

\textit{Derivative of $\nu_i$ under adaptive controller.}
Substituting~(\ref*{eq:adaptcont}) into~(\ref*{eq:contform})
and retaining model mismatch terms
$\mathbf{e}_{M_i} = \mathbf{M}_i - \hat{\mathbf{M}}_i$
and $\mathbf{e}_{H_i} = \mathbf{H}_{c,i} -
\hat{\mathbf{H}}_{c,i}$, both linear in $\mathbf{e}_{s_i}$
through the regressor~(\ref*{eq:regressor_V}), the error
dynamics become

\begin{align}
\mathbf{M}_i\dot{\mathbf{e}}_{\mathbf{V}_i}
= -\mathbf{K}_i\hat{\mathbf{M}}_i\mathbf{e}_{\mathbf{V}_i}
- \mathbf{e}_{M_i}\dot{\mathbf{V}}_{d,i}
- \mathbf{e}_{H_i}
+ \boldsymbol{\mathcal{W}}_{J,d,i}
- \boldsymbol{\mathcal{W}}_{J,i}.
\label{eq:errdyn_adapt}
\end{align}

The time derivative of $\nu_i$
along~(\ref*{eq:errdyn_adapt}), after applying the
dissipation bound~(\ref*{eq:diss}) and using
$\hat{\mathbf{M}}_i = \mathbf{M}_i - \mathbf{e}_{M_i}$,
gives

\begin{align}
\frac{d}{dt}\nu_i
\leq -\alpha_i\,\nu_i
+ \varepsilon_{Q,i}
+ p_i,
\label{eq:nudot_adapt2}
\end{align}

where $\varepsilon_{Q,i}$ collects the cross-terms
arising from model uncertainty as

\begin{align}
\varepsilon_{Q,i}
\triangleq
2\mathbf{e}_{\mathbf{V}_i}^\top\mathbf{K}_i
\mathbf{e}_{M_i}\mathbf{e}_{\mathbf{V}_i}
- 2\mathbf{e}_{\mathbf{V}_i}^\top
\mathbf{e}_{M_i}\dot{\mathbf{V}}_{d,i}
- 2\mathbf{e}_{\mathbf{V}_i}^\top\mathbf{e}_{H_i}.
\label{eq:epsQ}
\end{align}

Since $\mathbf{e}_{M_i}$ and $\mathbf{e}_{H_i}$ are
linear in $\mathbf{e}_{s_i}$, their norms satisfy
$\|\mathbf{e}_{M_i}\| \leq c_{M,i}\|\mathbf{e}_{s_i}\|$
and $\|\mathbf{e}_{H_i}\| \leq c_{H,i}\|\mathbf{e}_{s_i}\|$
for constants $c_{M,i}, c_{H,i} > 0$ bounded within
$\mathcal{Q}_i$ by Assumption~\ref{ass:bounds}, where

\begin{align}
c_{M,i} = \max_k \|\partial_{s_k}\mathbf{M}_i\|, \qquad
c_{H,i} = \max_k \|\partial_{s_k}\mathbf{H}_{c,i}\|,
\label{eq:cMcH}
\end{align}

with $\partial_{s_k}\mathbf{M}_i$ and
$\partial_{s_k}\mathbf{H}_{c,i}$ the constant partial
derivative matrices of $\mathbf{M}_i$ and $\mathbf{H}_{c,i}$
with respect to the $k$-th element of $\mathbf{s}_i$.
Applying Young's inequality to each term
of~(\ref*{eq:epsQ})

\begin{equation}
\begin{aligned}
|\varepsilon_{Q,i}| \leq\;
&2\|\mathbf{K}_i\|\,c_{M,i}\,
\|\mathbf{e}_{\mathbf{V}_i}\|^2\|\mathbf{e}_{s_i}\| \\
&+ 2c_{M,i}\,\|\dot{\mathbf{V}}_{d,i}\|\,
\|\mathbf{e}_{\mathbf{V}_i}\|\|\mathbf{e}_{s_i}\| \\
&+ 2c_{H,i}\,\|\mathbf{e}_{\mathbf{V}_i}\|
\|\mathbf{e}_{s_i}\|.
\end{aligned}
\label{eq:epsQ_bound_raw}
\end{equation}

Using $\|\mathbf{e}_{\mathbf{V}_i}\| \leq
\sqrt{\nu_i/\lambda_{\min}(\mathbf{M}_i)}$,
$\|\mathbf{e}_{s_i}\| \leq
\sqrt{\lambda_{\max}(\boldsymbol{\Lambda}_i)\nu_{s,i}}$,
and $ab \leq \frac{\epsilon}{2}a^2 + \frac{1}{2\epsilon}b^2$
applied to each cross-product term, all three terms
of~(\ref*{eq:epsQ_bound_raw}) are bounded by
$\epsilon\,\nu_{s,i} + c_{Q,i}$ for the constant
$c_{Q,i}$ defined in~(\ref*{eq:cQi_def}).

\begin{equation}
\begin{aligned}
c_{Q,i} \triangleq \, & \frac{\left(2\|\mathbf{K}_i\|c_{M,i} + c_{M,i}\|\dot{\mathbf{V}}_{d,i}\| + c_{H,i}\right)^2 \lambda_{\max}(\boldsymbol{\Lambda}_i)}{2\epsilon\,\lambda_{\min}(\mathbf{M}_i)} \\
& \times \sup_{\mathcal{Q}_i}\|\mathbf{e}_{s_i}\|^2
\end{aligned}
\label{eq:cQi_def}
\end{equation}

is a non-negative function of $\nu_i^a$ that vanishes
when $\mathbf{e}_{s_i} \to \mathbf{0}$, i.e.\ when
adaptation is complete. Substituting
into~(\ref*{eq:nudot_adapt2})

\begin{align}
\frac{d}{dt}\nu_i
\leq -\alpha_i\,\nu_i
+ \epsilon\,\nu_{s,i}
+ c_{Q,i}(\nu_i)
+ p_i.
\label{eq:nudot_adapt3}
\end{align}

\textit{Total augmented derivative.}
Adding~(\ref*{eq:Vsdot2}) and~(\ref*{eq:nudot_adapt3})

\begin{equation}
\begin{aligned}
\frac{d}{dt}\nu_i^a
&\leq -\alpha_i\,\nu_i
- (\beta_i - \epsilon)\,\nu_{s,i}
+ c_{Q,i}(\nu_i) + p_i \\
&\leq -\alpha'\,\nu_i^a + c_{Q,i} + p_i,
\end{aligned}
\label{eq:nudot_adapt_final2}
\end{equation}

where $\alpha' = \min(\alpha_i, \beta_i - \epsilon) > 0$
and $c_{Q,i} = \sup_{t\geq 0} c_{Q,i}(\nu_i(t))$
is finite since $\hat{\mathbf{s}}_i \in \mathcal{Q}_i$
for all $t \geq 0$ by Lemma~\ref{lem:proj}, so
$\|\mathbf{e}_{s_i}\|$ is bounded within $\mathcal{Q}_i$
by Assumption~\ref{ass:bounds}, and
$\|\dot{\mathbf{V}}_{d,i}\|$ is bounded by construction
of the reference trajectory.

\begin{remark}
The residual $c_{Q,i}$ represents the transient mismatch
between $\mathbf{M}_i$, $\mathbf{H}_{c,i}$ and their
estimates $\hat{\mathbf{M}}_i$, $\hat{\mathbf{H}}_{c,i}$
during the adaptation phase. It decreases monotonically
as $\|\mathbf{e}_{s_i}\| \to 0$ and vanishes at parameter
convergence, confirming that the residual set shrinks
to zero as adaptation completes. The bound~(\ref*{eq:cQi_def})
depends on $c_{M,i}$ and $c_{H,i}$, computable from the
constant partial derivative matrices
$\partial_{s_k}\mathbf{M}_i$ and
$\partial_{s_k}\mathbf{H}_{c,i}$ of the
Euler--Bernoulli model. The diagonal gains
$\boldsymbol{\Lambda}_i$ are free design parameters
selected to balance convergence speed across parameter
channels; their specific values enter the decay rate
$\beta_i = 2\lambda_{\min}(\bar{\mathbf{Y}}_i^\top
\bar{\mathbf{Y}}_i)\lambda_{\min}(\boldsymbol{\Lambda}_i^{-1})$
and should be tuned so that $\beta_i \approx \alpha_i$,
preventing either the twist error or the parameter
estimation error from dominating the transient response.
If the PE condition~(\ref*{eq:PE}) is not satisfied,
$\lambda_{\min}(\bar{\mathbf{Y}}_i^\top\bar{\mathbf{Y}}_i)$
may be zero on sets of positive measure and $\beta_i > 0$
cannot be guaranteed pointwise. In this case
$\dot{\hat{\mathbf{s}}}_i \to \mathbf{0}$ by the
projection operator~(\ref*{eq:proj}), parameter estimates
freeze within $\mathcal{Q}_i$, and the system remains
exponentially ultimately bounded with residual set
determined by the frozen $c_{Q,i}$
via~(\ref*{eq:nudot_adapt_final}). Full parameter
convergence and shrinking of the residual set to zero
require PE and are established in Theorem~4.
\end{remark}

This establishes~(\ref*{eq:nudot_adapt_final}).
The term $p_i$ cannot be signed at the subsystem level;
its cancellation is established in
Section~\ref{subsec:5e}.

\subsection{System-level stability of adaptive controller}
\label{subsec:5e}

\textbf{Theorem 4.}
Suppose the reference trajectory $\mathbf{V}_{d,i}(t)$,
$i = 1,\ldots,n$, is such that the stacked
regressor $\bar{\mathbf{Y}}_i(t)$, determined by the
closed-loop system
equations~(\ref*{eq:regressor_V})--(\ref*{eq:regressor_xi}),
satisfies the persistent excitation (PE) condition:
there exist $T > 0$ and $\delta > 0$ such that

\begin{align}
\int_t^{t+T}
\bar{\mathbf{Y}}_i^\top(\tau)
\bar{\mathbf{Y}}_i(\tau)\,d\tau
\geq \delta\,\mathbf{I},
\quad \forall\, t \geq 0,
\label{eq:PE}
\end{align}

for each $i = 1,\ldots,n$. Under the PE condition~(\ref*{eq:PE}), the time-averaged
excitation is bounded away from zero over every window
of length $T$, which by standard arguments~\cite{Ioannou1996}
implies that $\lambda_{\min}(\bar{\mathbf{Y}}_i^\top
\bar{\mathbf{Y}}_i(t))$ is positive on a set of positive
measure, ensuring $\beta_i > 0$ in a time-averaged sense
sufficient for~(\ref*{eq:Vsdot_bound}).
Then the total augmented Lyapunov function
$\mathcal{V}^a = \sum_{i=1}^{n}\nu_i^a$ under the
adaptive control laws~(\ref*{eq:adaptcont})
and~(\ref*{eq:adaptlaw}) satisfies

\begin{align}
\mathcal{V}^a(t) \leq \mathcal{V}^a(0)e^{-\mu t}
+ \frac{c_Q}{\mu}\left(1 - e^{-\mu t}\right),
\label{eq:Va_bound}
\end{align}

where $\mu = \min_i \alpha_i' > 0$ and
$c_Q = \sum_{i=1}^n c_{Q,i} \geq 0$.
The bound~(\ref*{eq:Va_bound}) establishes exponential
ultimate boundedness of the full adaptive closed-loop
system, with the residual set determined by $c_Q/\mu$.
Since $c_{Q,i} \to 0$ as $\mathbf{e}_{s_i} \to \mathbf{0}$
by~(\ref*{eq:cQi_def}), exponential convergence of both
$\mathbf{e}_{\mathbf{V}_i}$ and $\mathbf{e}_{s_i}$ to
zero is recovered as adaptation completes.

\textit{Proof.}
Summing~(\ref*{eq:nudot_adapt_final2}) over all links:

\begin{align}
\frac{d}{dt}\mathcal{V}^a
\leq -\sum_{i=1}^{n}\alpha_i'\,\nu_i^a
+ c_Q + \sum_{i=1}^{n} p_i.
\label{eq:totalVadot}
\end{align}

The adaptive control law~(\ref*{eq:adaptcont}) modifies
the applied wrench $\boldsymbol{\mathcal{W}}_i$ relative
to the nominal law, but the interaction wrenches
$\boldsymbol{\mathcal{W}}_{J,i}$ at the joint interfaces
are determined by the holonomic constraint forces, not
directly by $\boldsymbol{\mathcal{W}}_i$. The wrench
transformation~(\ref*{eq:wrench_transform2}), the
point-shift~(\ref*{eq:X_wrench}), and the projected
twist error pairing~(\ref*{eq:twist_transform2}) are
therefore unchanged, and the cancellation argument of
Section~\ref{subsec:4d} applies without modification,
giving $\sum_{i=1}^{n} p_i = 0$.
Substituting and setting $\mu = \min_i\alpha_i' > 0$

\begin{align}
\frac{d}{dt}\mathcal{V}^a \leq -\mu\,\mathcal{V}^a + c_Q.
\label{eq:totalVadot2}
\end{align}

By the comparison lemma~\cite{khalil2002},
$\mathcal{V}^a(t) \leq \mathcal{V}^a(0)e^{-\mu t}
+ \frac{c_Q}{\mu}(1 - e^{-\mu t})$,
establishing~(\ref*{eq:Va_bound}). Since $\mathbf{M}_i \succ 0$ and $\boldsymbol{\Lambda}_i \succ 0$,
each subsystem Lyapunov function satisfies
$\nu_i^a \leq \mathcal{V}^a$, so the individual errors
are bounded by

\begin{align}
\|\mathbf{e}_{\mathbf{V}_i}(t)\|
&\leq \sqrt{\frac{\mathcal{V}^a(t)}
{\lambda_{\min}(\mathbf{M}_i)}},
\label{eq:adaptexpstab_V} \\[4pt]
\|\mathbf{e}_{s_i}(t)\|
&\leq \sqrt{\lambda_{\max}(\boldsymbol{\Lambda}_i)
\,\mathcal{V}^a(t)},
\quad i = 1,\ldots,n,
\label{eq:adaptexpstab_s}
\end{align}

which are bounded for all $t \geq 0$ and decay to the
residual set of radius $\sqrt{c_Q/(\mu\lambda_{\min}(\mathbf{M}_i))}$
and $\sqrt{\lambda_{\max}(\boldsymbol{\Lambda}_i)c_Q/\mu}$
respectively, determined by the model mismatch
term~(\ref*{eq:cQi_def}). As $c_{Q,i} \to 0$ with
parameter convergence, both residual sets shrink to zero
and exponential convergence at rate $\mu/2$ is recovered.

\begin{remark}
The PE condition~(\ref*{eq:PE}) is a property of the
closed-loop trajectory and cannot be enforced by
controller design. It is satisfied when the reference
trajectory generates sufficient joint-space motion to
excite all parametric directions of $\bar{\mathbf{Y}}_i$
over every window of length $T$. In practice,
PE is verified offline by numerically
evaluating~(\ref*{eq:PE}) along the simulated trajectory
and checking that the smallest eigenvalue of the
windowed Gramian $\int_t^{t+T}
\bar{\mathbf{Y}}_i^\top\bar{\mathbf{Y}}_i\,d\tau$
remains bounded away from zero. 
\end{remark}

The implementation procedure for the presented
Subsystem-based Lie Algebraic PDE Control (SLPC)
scheme is described in Table~\ref{tab:1}.

\begin{table}[!t]
\renewcommand{\arraystretch}{1.1}
\setlength{\arrayrulewidth}{0.8pt}
\setlength{\tabcolsep}{4pt}
\centering
\begin{tabular}{|p{7.3cm}|}
\hline
\textbf{// Initialization} \\
\hline
\textbf{i.} Input: mechanical and geometric properties
of each link $i = 1,\ldots,n$ \\
\textbf{ii.} Input: desired endpoint trajectory
${}^{o}\mathbf{p}_{e,d}(t)$,
${}^{o}\dot{\mathbf{p}}_{e,d}(t)$ \\
\textbf{iii.} Input: initial body-fixed states
${}^{i}\mathbf{V}_i(0)$,
$\mathbf{r}_{\xi_i}(\xi,0)$,
$\dot{\mathbf{r}}_{\xi_i}(\xi,0)$,
$\mathbf{Ad}_{oi}(0)$ \\
\textbf{iv.} Set: control gains $\mathbf{K}_i \succ 0$,
adaptation gains $\boldsymbol{\Lambda}_i \succ 0$;
uncertainty bounds $\mathbf{s}_{i,l}$, $\mathbf{s}_{i,h}$;
initial estimates $\hat{\mathbf{s}}_i(0) \in \mathcal{Q}_i$;
compute $c_{M,i}$, $c_{H,i}$ offline
via~(\ref*{eq:cMcH}) \\
\textbf{v.} Initialize: $\hat{\mathbf{M}}_i(0)$,
$\hat{\mathbf{H}}_{c,i}(0)$ from $\hat{\mathbf{s}}_i(0)$ \\
\hline
\textbf{// Sample $k$ ($k \geq 1$)} \\
\hline
\textbf{vi.} \textit{State measurement:} \\
\quad read: $\mathbf{V}_i(k)$ from joint encoders/IMUs \\
\quad read: $\mathbf{r}_{\xi_i}(c_i,k)$,
$\dot{\mathbf{r}}_{\xi_i}(c_i,k)$ from tip sensors \\
\quad update: $\mathbf{Ad}_{oi}(k)$
from ${}^{i}\mathbf{q}_i(k)$ \\
\hline
\textbf{vii.} \textit{Reference generation
(\S\ref{subsec:4a}):} \\
\quad compute: $\boldsymbol{\delta}_e^{\mathrm{est}}(k)$
via~(\ref*{eq:delta_est}),
${}^{o}\mathbf{p}_{e,d}^{\mathrm{corr}}(k)$
via~(\ref*{eq:p_corr}) \\
\quad solve: IK~(\ref*{eq:qjd})
$\to \mathbf{q}_{j,d}(k)$, $\dot{\mathbf{q}}_{j,d}(k)$ \\
\quad assemble: $\mathbf{V}_{d,i}(k)$
via~(\ref*{eq:Vdi}),
$\dot{\mathbf{V}}_{d,i}(k)$ by differentiation \\
\hline
\textbf{viii.} \textit{Twist error:}
$\mathbf{e}_{\mathbf{V}_i}(k)
= \mathbf{V}_{d,i}(k) - \mathbf{V}_i(k)$ \\
\hline
\textbf{ix.} \textit{Control input~(\ref*{eq:adaptcont}):} \\
\quad evaluate: $\hat{\mathbf{M}}_i(k)$,
$\hat{\mathbf{H}}_{c,i}(k)$
from $\hat{\mathbf{s}}_i(k)$ \\
\quad compute and apply:
$\boldsymbol{\mathcal{W}}_i(k) =
\hat{\mathbf{M}}_i\dot{\mathbf{V}}_{d,i}
+ \hat{\mathbf{H}}_{c,i}
+ \mathbf{K}_i\hat{\mathbf{M}}_i
\mathbf{e}_{\mathbf{V}_i}$
via~(\ref*{eq:30}) \\
\hline
\textbf{x.} \textit{Parallel model and residuals:} \\
\quad evaluate parallel model at $\hat{\mathbf{s}}_i(k)$
using~(\ref*{eq:contform}), (\ref*{eq:strainpde}) \\
\quad compute: $\boldsymbol{\epsilon}_{V,i}(k)$
via~(\ref*{eq:residual_V}),
$\boldsymbol{\epsilon}_{\xi,i}(k)$
via~(\ref*{eq:residual_xi}) \\
\quad assemble: $\boldsymbol{\Gamma}_i(k)$
via~(\ref*{eq:lumped_signal}) \\
\hline
\textbf{xi.} \textit{Parameter update~(\ref*{eq:adaptlaw}):} \\
\quad $\hat{\mathbf{s}}_i(k{+}1) =
\hat{\mathbf{s}}_i(k) + T_s\,\mathcal{P}\!\left(
\hat{\mathbf{s}}_i,\mathbf{s}_{i,l},\mathbf{s}_{i,h},
\boldsymbol{\Lambda}_i\boldsymbol{\Gamma}_i\right)$ \\
\hline
\textbf{xii.} $k = k+1$, goto \textbf{(vi)} \\
\hline
\end{tabular}
\caption{\textbf{SLPC implementation algorithm.}
$T_s$ denotes the sampling period.}
\label{tab:1}
\end{table}

\section{Results and discussion} \label{sec:6}

The results below demonstrate three properties of the
proposed framework on a two-link rigid--flexible manipulator
executing a spatial circular trajectory: closed-loop
implementability under the subsystem-based adaptive
controller, consistency with the theoretical stability
and boundedness results of
Sections~\ref{sec:4}--\ref{sec:5}, and the practical
advantage of the model-based and twist-based formulations
over conventional joint-space control. Three control strategies are compared: the proposed
subsystem-based Lie algebraic PDE controller (SLPC),
a proportional twist-based controller (PTC) that retains the
$\mathfrak{se}(3)$ reference structure without
model-based feedforward, and a naive joint-space PD
controller (PD). The three-way comparison isolates the
respective contributions of the model-based dynamic
cancellation, the twist-based kinematic reference
generator, and the Lie-algebraic formulation --- each
adding measurable independent performance benefit.

\subsection*{Illustrative simulation study}
\label{subsec:sim}

The simulated system consists of a rigid first link and a
flexible second link connected in series, operating in
three-dimensional space. The mechanical properties are:
Young's modulus $E = 2.1\times10^{11}$~[Pa], densities
$\rho_1 = \rho_2 = 7800$~[kg/m$^3$], cross-sections
$b_1\times h_1 = 10\times30$~[mm] and
$b_2\times h_2 = 10\times50$~[mm], and lengths
$l_1 = 1.2$~[m] and $l_2 = 1.0$~[m].

The two links are connected by two revolute joints. The
base joint actuates link~1 about its $y$- and $z$-axes
($SO(2)\times SO(2)$); the connecting joint actuates
link~2 about its body-fixed $z$-axis ($SO(2)$). The
joint constraint projection
matrices~(\ref*{eq:joint_constraint2}) are

\begin{align}
\mathbf{I}_3 - {}^{o}\mathbf{A}_1\,{}^{o}\mathbf{A}_1^\top
&= \mathrm{diag}(0,\,1,\,1), \notag \\
\mathbf{I}_3 - {}^{o}\mathbf{A}_2\,{}^{o}\mathbf{A}_2^\top
&= \mathrm{diag}(0,\,0,\,1),
\label{eq:projmat}
\end{align}

confirming link~1 is free to rotate about $y$- and
$z$-axes while link~2 is constrained to its $z$-axis,
with $x$-components of both twist errors projected to
zero. The simulation is initialised with non-zero joint
angles ($\theta_{1z}(0) = \pi/6$,
$\theta_{2z}(0) = \theta_{1z}(0) + \pi/8$) and zero
elastic modal coordinates.

The desired endpoint trajectory is a circular path of
radius $r_d = 0.5$~[m] in the $yz$-plane

\begin{align}
{}^0p_{e,d,y}(t) &= r_d \sin(\omega_d t)\,\rho(t), \notag \\
{}^0p_{e,d,z}(t) &= r_d \cos(\omega_d t)\,\rho(t),
\label{eq:traj_circle}
\end{align}

where $\omega_d = 1.0$~[rad/s] and
$\rho(t) = 1 - e^{-t/\tau_{\mathrm{ramp}}}$ with
$\tau_{\mathrm{ramp}} = 1.5$~[s]. The desired joint
angles from the deflection-corrected inverse
kinematics~(\ref*{eq:p_corr})--(\ref*{eq:qjd}) are

\begin{align}
\theta_{1z,d} &= \frac{{}^0p_{e,d,y}}{L}
+ \theta_{1z}(0)\,b(t), \notag \\
\theta_{1y,d} &= -\frac{{}^0p_{e,d,z}}{L}
+ \theta_{1y}(0)\,b(t), \notag \\
\theta_{2z,d} &= \frac{{}^0p_{e,d,y}}{L}
+ \theta_{2z}(0)\,b(t),
\label{eq:traj_ik}
\end{align}

where $L = l_1 + l_2$ and $b(t) = e^{-t/\tau_{\mathrm{blend}}}$
with $\tau_{\mathrm{blend}} = 2.0$~[s] smoothly drives
initial offsets to zero without discontinuities in the
control input or twist reference. Desired body-fixed
twists $\mathbf{V}_{d,i}$ and $\dot{\mathbf{V}}_{d,i}$
are assembled via~(\ref*{eq:Vdi}) and numerical
differentiation at each sample. The simulation runs for
$t_f = 25$~[s] at $T = 10^{-3}$~[s] per
Table~\ref{tab:1}; modal projection and integration
details follow~\cite{Yaqubi2026}.

The PTC control law is

\begin{align}
\boldsymbol{\mathcal{W}}_i^{\mathrm{PTC}}
= \mathbf{K}_i\!\left(\mathbf{V}_{d,i} - \mathbf{V}_i\right),
\label{eq:PTC}
\end{align}

retaining the $\mathfrak{se}(3)$ reference structure
and inter-link kinematic propagation of
Section~\ref{subsec:4a} without model-based feedforward.
The PD control law

\begin{align}
\boldsymbol{\tau}_i^{\mathrm{PD}}
= \mathbf{K}_{p,i}\!\left(\boldsymbol{\theta}_{d,i}
- \boldsymbol{\theta}_i\right)
+ \mathbf{K}_{d,i}\!\left(\dot{\boldsymbol{\theta}}_{d,i}
- \dot{\boldsymbol{\theta}}_i\right)
\label{eq:pdbasic}
\end{align}

has no twist structure or chain geometry. Control gains
are $\mathbf{K}_1 = \mathrm{diag}(0,0,0,0,300,300)$
and $\mathbf{K}_2 = \mathrm{diag}(0,0,0,0,0,200)$.
Motor inertia of $[1.0,\,3.0,\,1.7]$~[kg$\cdot$m$^2$]
is included at the joints. PD gains are $K_p = 500$,
$K_d = 20$ for link~1 and $K_p = 400$, $K_d = 20$
for link~2. Torque saturation of $\pm100$~[Nm] is
applied on all inputs.

\subsubsection*{Nominal controller}

The nominal controller~(\ref*{eq:nomcont}) cancels the
inertial, Coriolis, and elastic restoring terms through
$\mathbf{M}_i\dot{\mathbf{V}}_{d,i} + \mathbf{H}_{c,i}$,
reducing the closed-loop error dynamics to the dissipative
form~(\ref*{eq:errdyn}) with exponential convergence
guaranteed by Theorem~1 independently of system topology.

\textit{Twist response and joint tracking.}
The body-fixed twist components and joint angle tracking
responses are shown in Fig.~\ref{fig:twists_tracking}.
The translational velocity $\mathbf{v}_1 \equiv \mathbf{0}$
identically since the base joint does not translate, while
$\boldsymbol{\omega}_1$ is dominated by the $y$- and
$z$-components of the two active trajectory axes with
$x$-component constrained to zero by~(\ref*{eq:projmat}).
Link~2 inherits a full six-dimensional twist:
$\boldsymbol{\omega}_2$ from chain propagation through
the Adjoint map and the relative $z$-rotation, while
$\mathbf{v}_2$ arises from link~1 rotation through
offset $\mathbf{r}_{20}$~\cite{Yaqubi20262}. All twist
components remain bounded throughout the $25$~[s]
horizon, confirming the boundedness result of
Section~\ref{subsec:4d}. Joint angle tracking under the
SLPC follows the desired references with the blend
function~(\ref*{eq:traj_ik}) successfully absorbing
the initial configuration offset. The PTC achieves
comparable steady-state tracking but exhibits larger
transient error during ramp-up where feedforward absence
produces a phase lag. The naive PD produces substantially
larger errors throughout, confirming that twist-based
reference generation is a significant independent
contributor to tracking performance. SLPC stability is
guaranteed by Theorem~2 for any positive gain matrix,
whereas PTC and PD stability depends significantly on
gain tuning.

\textit{Control inputs and endpoint trajectory.}
The torque inputs and three-dimensional endpoint
trajectory are shown in Fig.~\ref{fig:inputs_traj}.
The SLPC traces the desired circular path closely,
overlapping the reference throughout steady-state.
The PTC produces a small geometric deviation most
apparent near trajectory extrema where inertial
loading is greatest; the PD produces a substantially
larger deviation. An initial torque spike at
$t \approx 0$ arises from the non-zero initial
configuration offset, most pronounced in the PD
which reacts through feedback alone, within the
$\pm100$~[Nm] saturation bound for all controllers.
Beyond the transient, the SLPC produces the smoothest
steady-state torque profile as feedforward cancellation
reduces required feedback effort; the PTC is comparable;
the PD is more oscillatory.

\subsubsection*{Adaptive controller}

The present implementation identifies the
$5\times 2 = 10$ dominant parametric effects across
both links:

\begin{align}
\hat{\mathbf{s}}_i =
\begin{bmatrix}
\hat{\rho}_i A_i,\;
\widehat{[\mathbf{I}_{b_i}]_{22}},\;
\widehat{[\mathbf{I}_{b_i}]_{33}},\;
\widehat{EI_{y_i}},\;
\widehat{EI_{z_i}}
\end{bmatrix}^\top \in \mathbb{R}^5,
\end{align}

where $\rho_i A_i$ governs distributed inertial terms;
$[\mathbf{I}_{b_i}]_{22}$, $[\mathbf{I}_{b_i}]_{33}$
are the nonzero independent rotational inertia entries
for a symmetric cross-section; and $EI_{y_i}$,
$EI_{z_i}$ are the principal bending stiffnesses.
Axial stiffness $EA_i$ is treated as known since its
contribution is of order $(b/l)^2 \approx 10^{-4}$
relative to bending. For the symmetric cross-section,
$[\mathbf{I}_{b_i}]_{22} = [\mathbf{I}_{b_i}]_{33}
=: I_{b,i}^{\mathrm{eff}}$ by symmetry and both
entries receive identical adaptation signals; they are
retained separately for generality. The PE condition
is verified on the reduced $4$-dimensional identifiable
subspace $\{\rho_i A_i,\,I_{b,i}^{\mathrm{eff}},\,
EI_{y_i},\,EI_{z_i}\}$ following
Remark~\ref{rem:si_reduced}. Full parametric
identification can be incorporated by extending
$\hat{\mathbf{s}}_i$ without modifying the controller
structure.

Estimates are initialised within $\pm10\%$ of true
values with $\pm20\%$ projection bounds enforced
by~(\ref*{eq:proj}). The adaptation signal
$\boldsymbol{\Gamma}_i$~(\ref*{eq:lumped_signal}) is
assembled from $\boldsymbol{\epsilon}_{V,i}$ and
$\boldsymbol{\epsilon}_{\xi,i}$ computed at each sample
from~(\ref*{eq:contform}) and~(\ref*{eq:strainpde})
at the link tip. The diagonal gain matrix
$\boldsymbol{\Lambda}_i =
\mathrm{diag}(\lambda_{\rho,i},\,\lambda_{b22,i},\,
\lambda_{b33,i},\,\lambda_{EIy,i},\,\lambda_{EIz,i})$
takes values $\lambda_{\rho,i} = 5\times10^5$,
$\lambda_{b22,i} = \lambda_{b33,i} = 10^3$,
$\lambda_{EIy,i} = 10$, $\lambda_{EIz,i} = 100$,
reflecting differing channel sensitivities: larger
$\lambda_{\rho,i}$ compensates for the weaker
translational acceleration signal, while the asymmetry
between $\lambda_{EIy,i}$ and $\lambda_{EIz,i}$
reflects stronger $z$-plane excitation by the circular
trajectory. The $\boldsymbol{\epsilon}_{V,i}$ residual
updates $\rho_i A_i$ and $[\mathbf{I}_{b_i}]_{22,33}$
via $\mathbf{Y}_{V,i}$~(\ref*{eq:regressor_V});
$\boldsymbol{\epsilon}_{\xi,i}$ updates $EI_{y_i}$ and
$EI_{z_i}$ via
$\mathbf{Y}_{\xi,i}$~(\ref*{eq:regressor_xi}).

Figure~\ref{fig:adaptation}(a) shows convergence of
all ten parameters within the $25$~[s] simulation.
Inertia parameters converge in $3$--$5$~[s]; $EI_{z_i}$
converges faster than $EI_{y_i}$ due to stronger
$z$-plane excitation; $\rho_i A_i$ converges more
slowly but reaches the true value within the simulation
horizon. All estimates remain within $\mathcal{Q}_i$
by Lemma~\ref{lem:proj}. Normalised errors reduce from
$10\%$ to below $2\%$ within $5$~[s] for inertia and
stiffness and by $10$~[s] for density, while tracking
performance remains essentially identical to the nominal
case, confirming Theorem~4 and the independence of
stability and adaptation loops established in
Section~\ref{subsec:5e}.

The PE condition~(\ref*{eq:PE}) is verified numerically
in Fig.~\ref{fig:adaptation}(b), which plots
$\lambda_{\min}$ of the windowed Gramian
$\int_t^{t+T}\bar{\mathbf{Y}}_i^\top
\bar{\mathbf{Y}}_i\,d\tau$ over the window
$T = 2\pi/\omega_d$. The eigenvalue remains strictly
positive throughout the active adaptation phase,
confirming PE is satisfied. Displaying the full $25$~[s]
horizon would show only zero after parameter convergence
and obscure the PE verification during the active
identification window. The subsequent decay is
consistent with parameter convergence: as
$\hat{\mathbf{s}}_i \to \mathbf{s}_i$, the residuals
$\boldsymbol{\epsilon}_{V,i},\boldsymbol{\epsilon}_{\xi,i}
\to \mathbf{0}$ by~(\ref*{eq:residual_V})--(\ref*{eq:residual_xi}),
giving $\boldsymbol{\Gamma}_i \to \mathbf{0}$
by~(\ref*{eq:lumped_signal}) --- the expected behaviour
of a correctly converging adaptive system.

\textit{Deformation field.}
The distributed deformation fields are shown in
Fig.~\ref{fig:deformation}. The deformation is maximum
at the free tip and zero at the clamped base,
consistent with boundary
conditions~(\ref*{eq:35})--(\ref*{eq:38}). Peak tip
deflections of approximately $65$~[mm] in the $y$-
and $z$-directions remain small relative to $l_2 = 1$~[m],
confirming the small-deformation assumption. The axial
deformation is of order $10^{-4}$~[m] --- four orders
of magnitude smaller than bending deflections ---
consistent with the high axial stiffness and confirming
correct exclusion of $EA_i$ from $\hat{\mathbf{s}}_i$.
The full spatiotemporal field is available at every
sample as a direct consequence of the PDE-based
representation, enabling distributed state monitoring
without FEM overhead.

\begin{figure*}[!t]
\centering
\subfloat[]{
\includegraphics[width=0.48\textwidth]{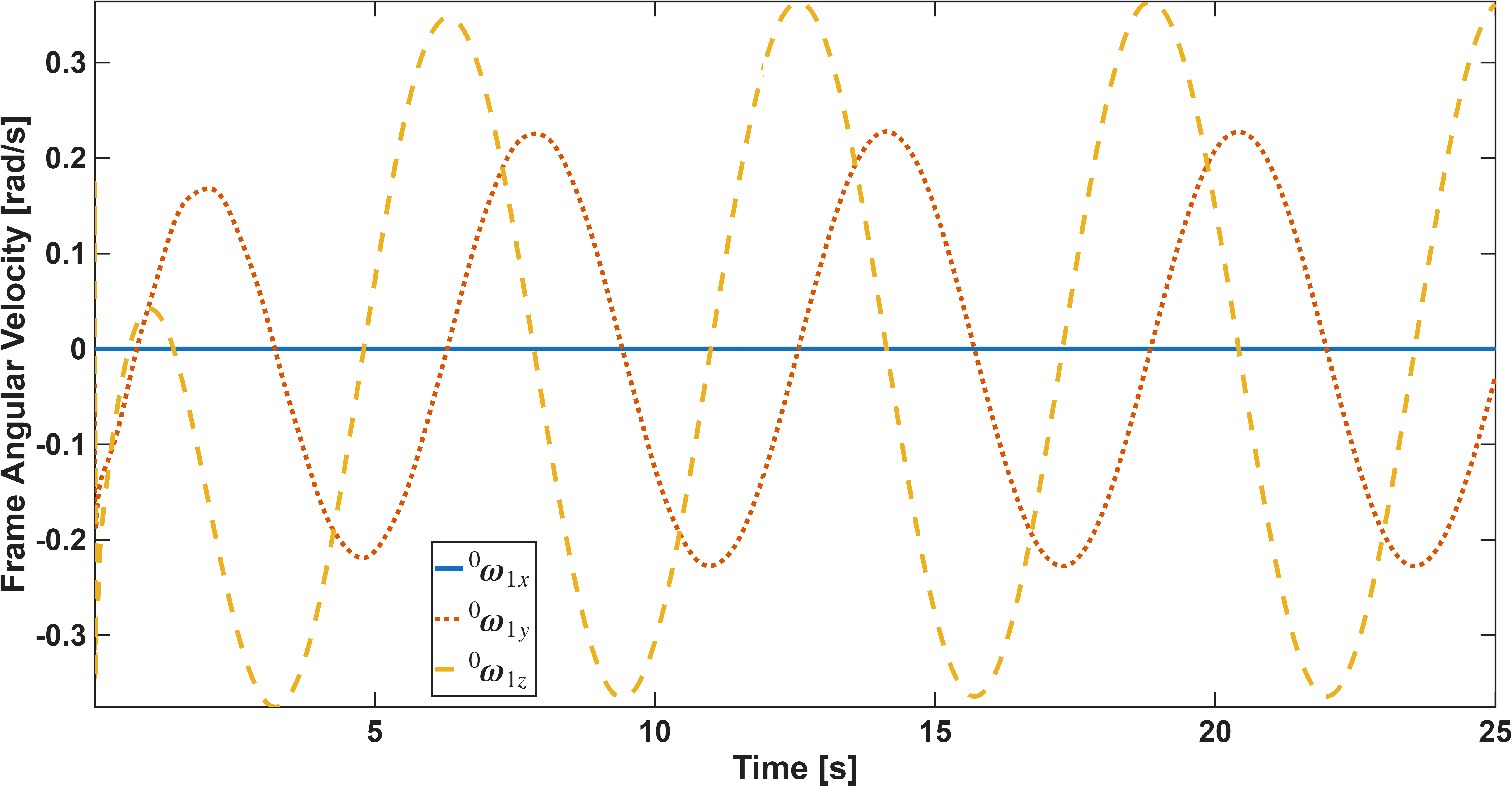}
\label{fig:twist_omega1}}
\hfill
\subfloat[]{
\includegraphics[width=0.48\textwidth]{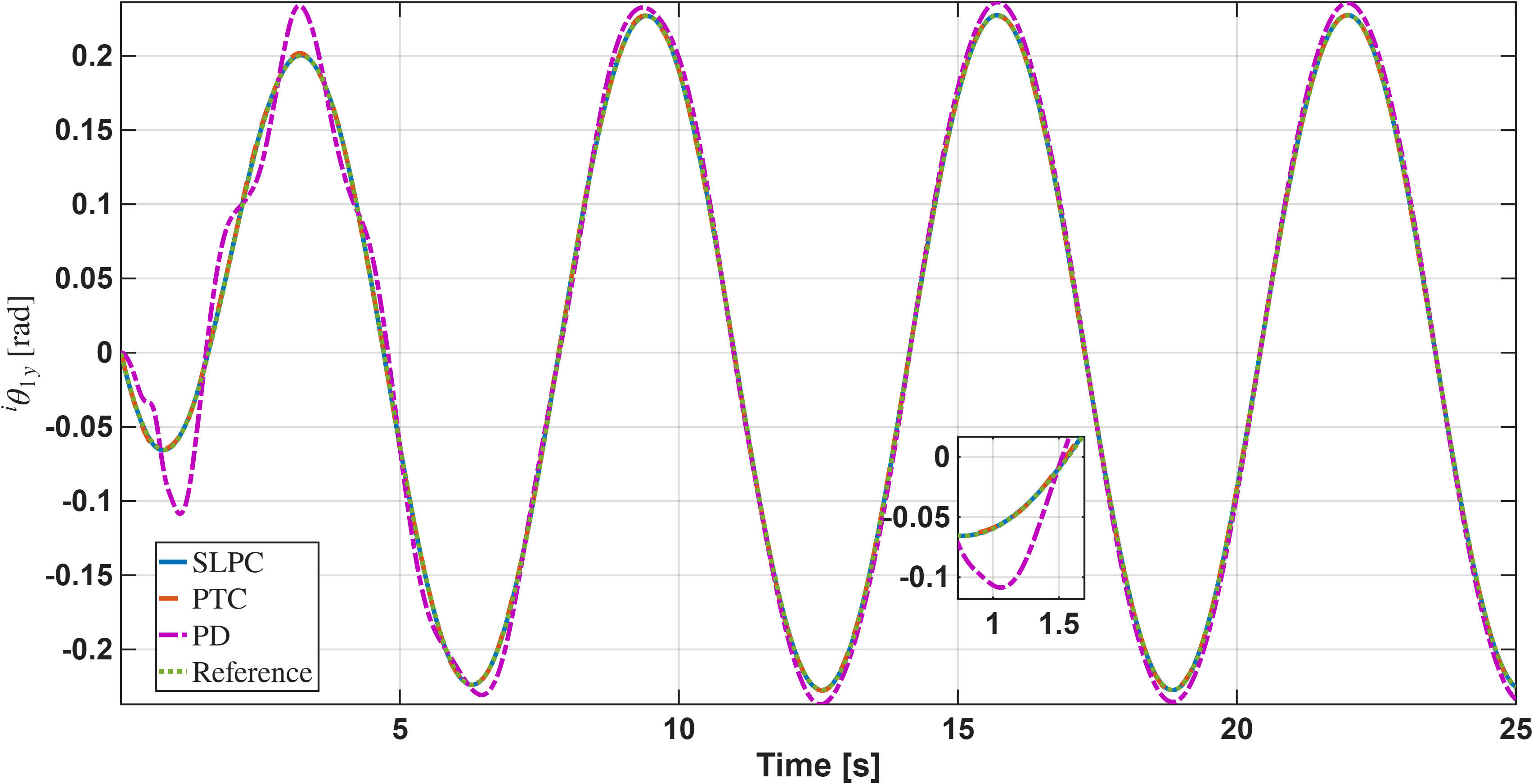}
\label{fig:joint_theta1y}}

\vspace{0.5em}

\subfloat[]{
\includegraphics[width=0.48\textwidth]{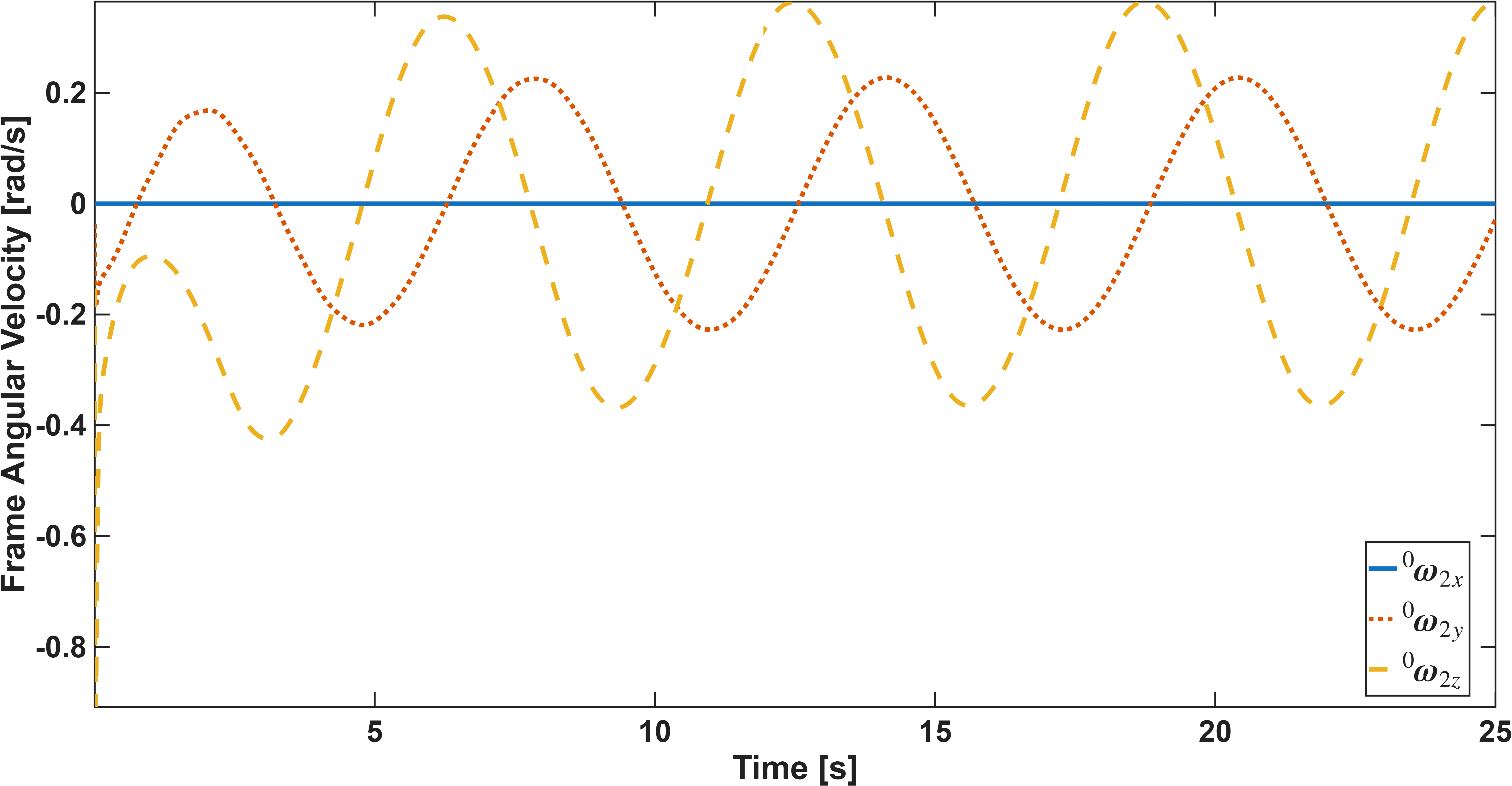}
\label{fig:twist_omega2}}
\hfill
\subfloat[]{
\includegraphics[width=0.48\textwidth]{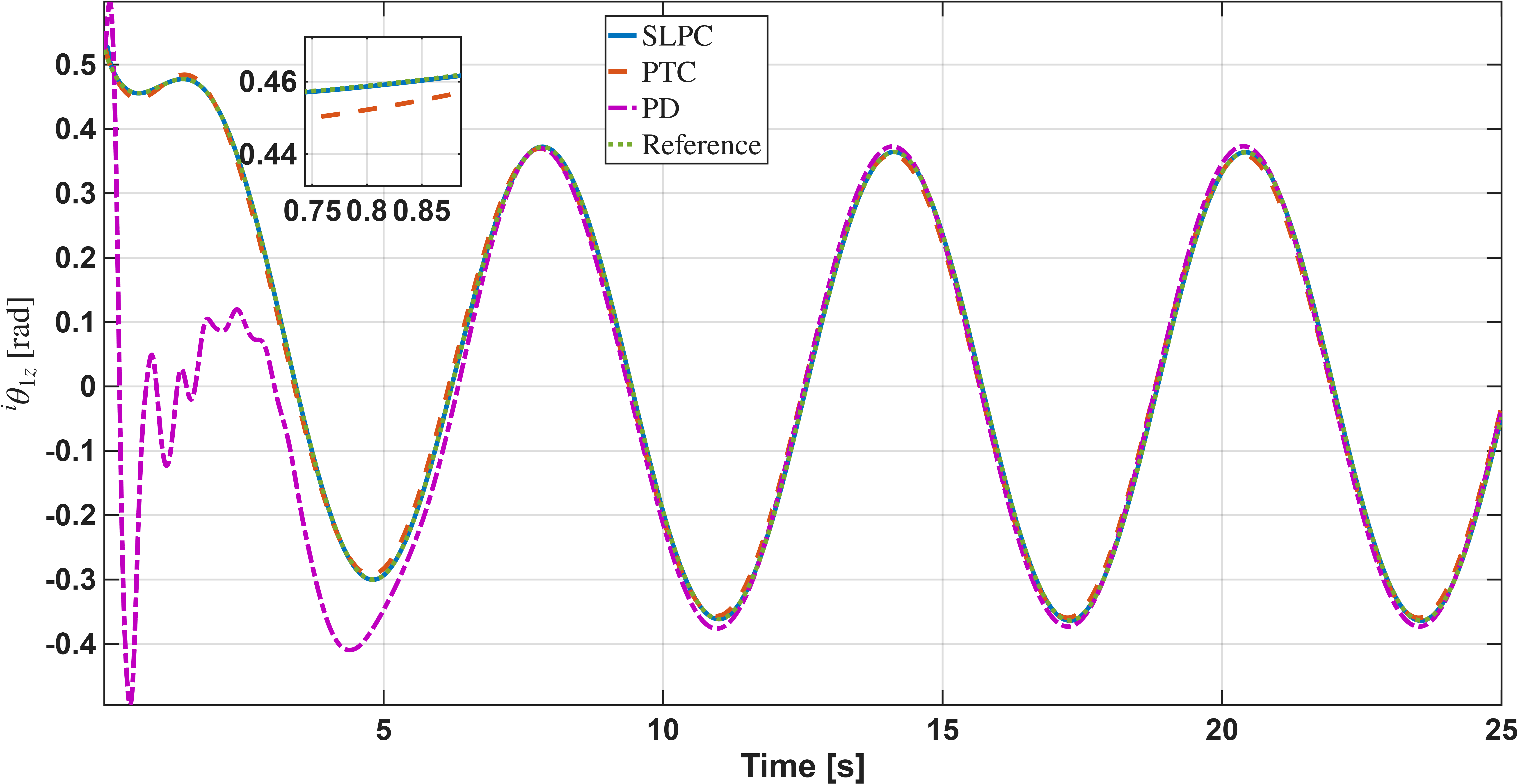}
\label{fig:joint_theta1z}}

\vspace{0.5em}

\subfloat[]{
\includegraphics[width=0.48\textwidth]{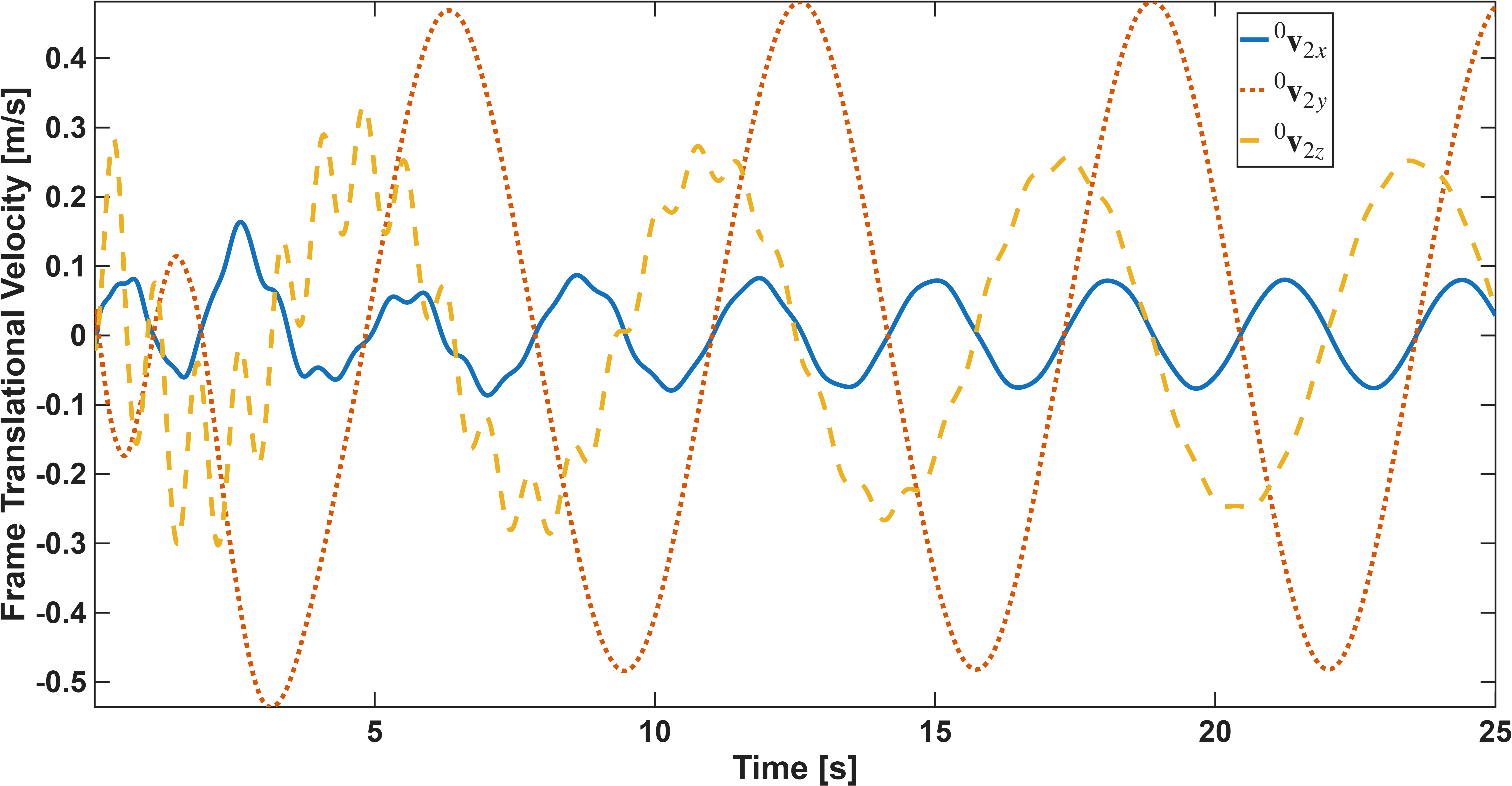}
\label{fig:twist_v2}}
\hfill
\subfloat[]{
\includegraphics[width=0.48\textwidth]{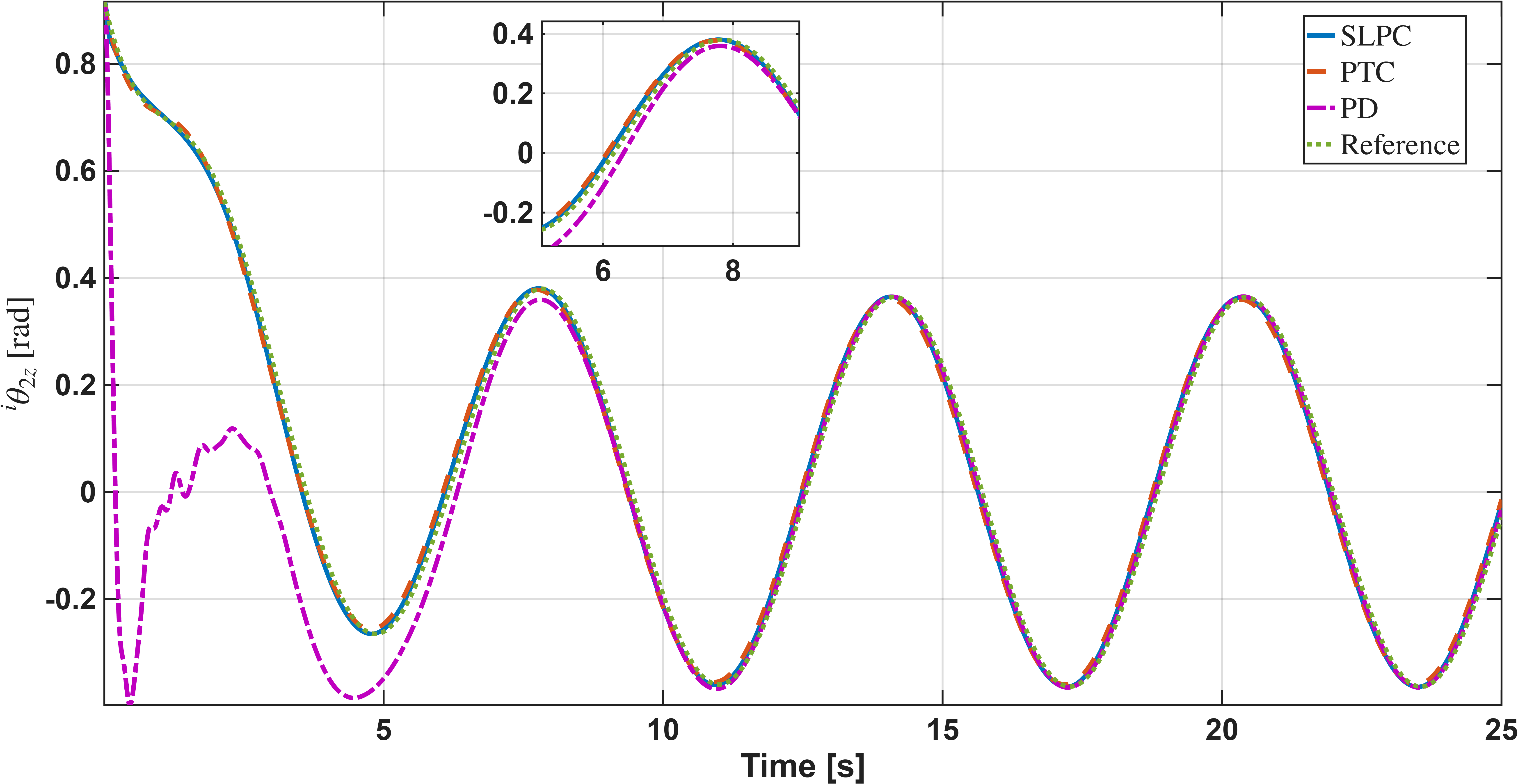}
\label{fig:joint_theta2z}}

\caption{Body-fixed twist components under SLPC and
joint angle tracking under SLPC, PTC, and PD:
(a)~${}^0\boldsymbol{\omega}_1$,
(b)~${}^i\theta_{1y}$,
(c)~${}^0\boldsymbol{\omega}_2$,
(d)~${}^i\theta_{1z}$,
(e)~${}^0\mathbf{v}_2$,
(f)~${}^i\theta_{2z}$.
${}^0\mathbf{v}_1 \equiv \mathbf{0}$ identically.
All twist components bounded throughout the $25$~[s]
horizon. Insets show transient tracking detail.}
\label{fig:twists_tracking}
\end{figure*}

\begin{figure*}[!t]
\centering
\subfloat[]{
\includegraphics[width=0.48\textwidth]{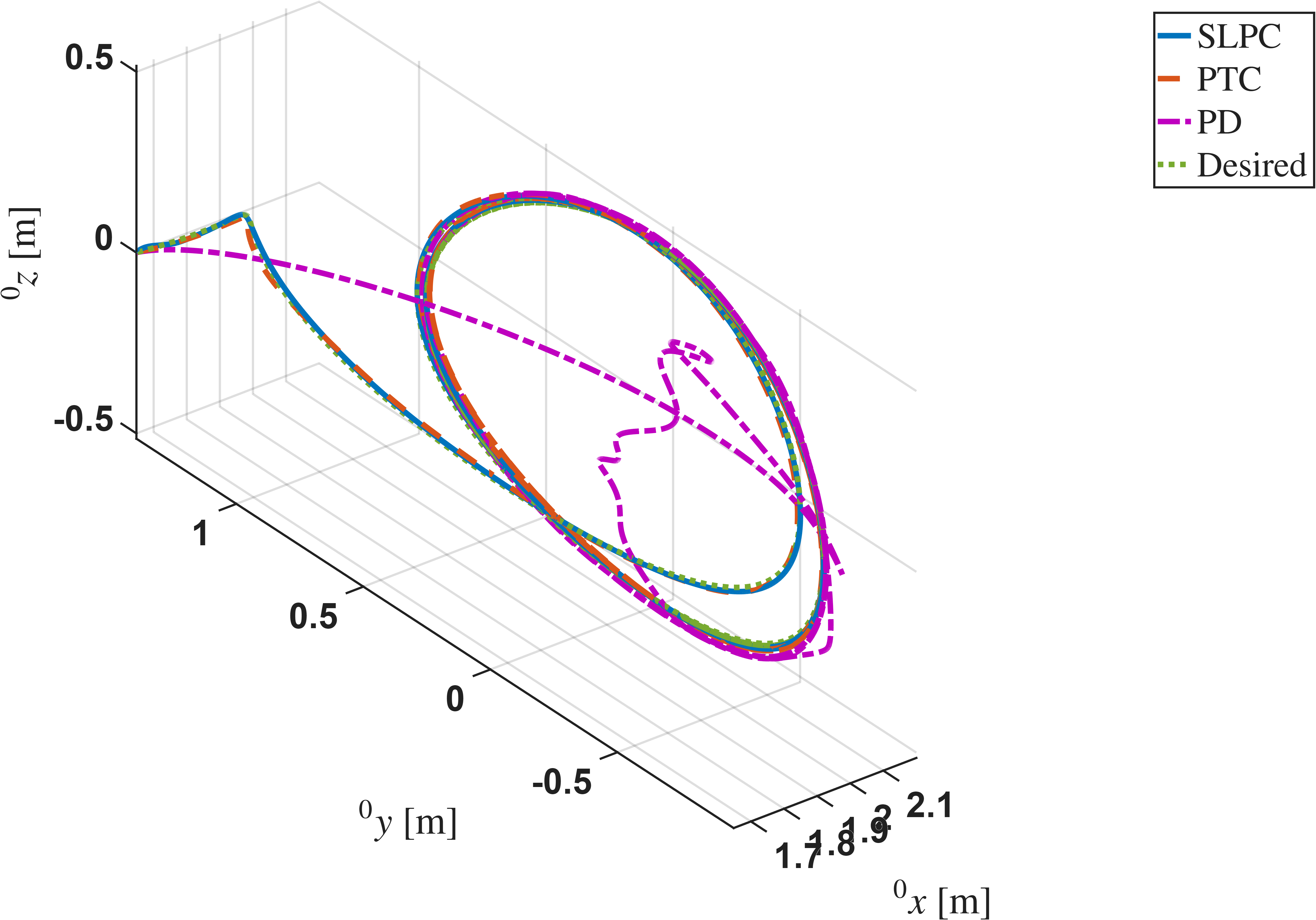}
\label{fig:traj_3d}}
\hfill
\subfloat[]{
\includegraphics[width=0.48\textwidth]{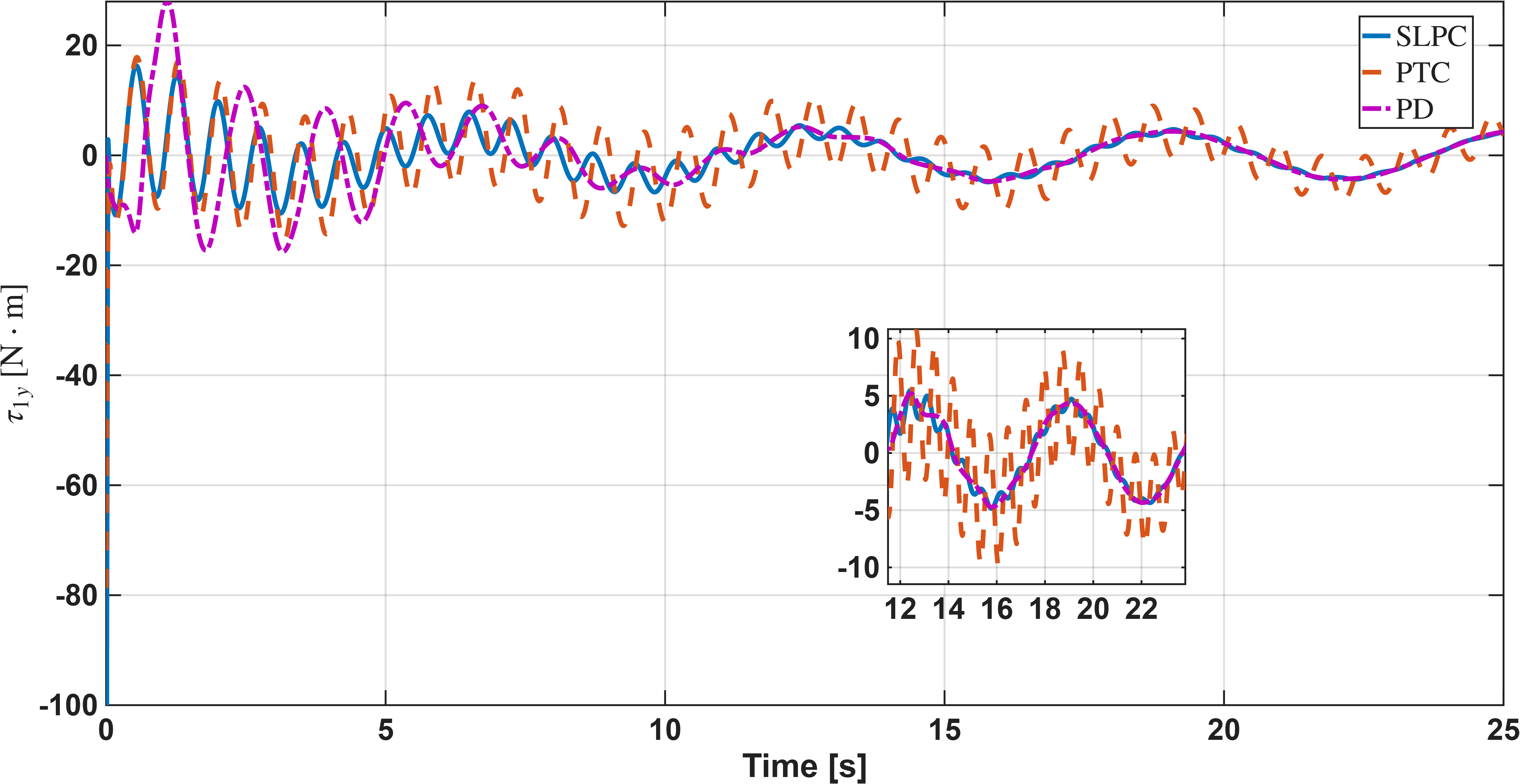}
\label{fig:input_tau1y}}

\vspace{0.5em}

\subfloat[]{
\includegraphics[width=0.48\textwidth]{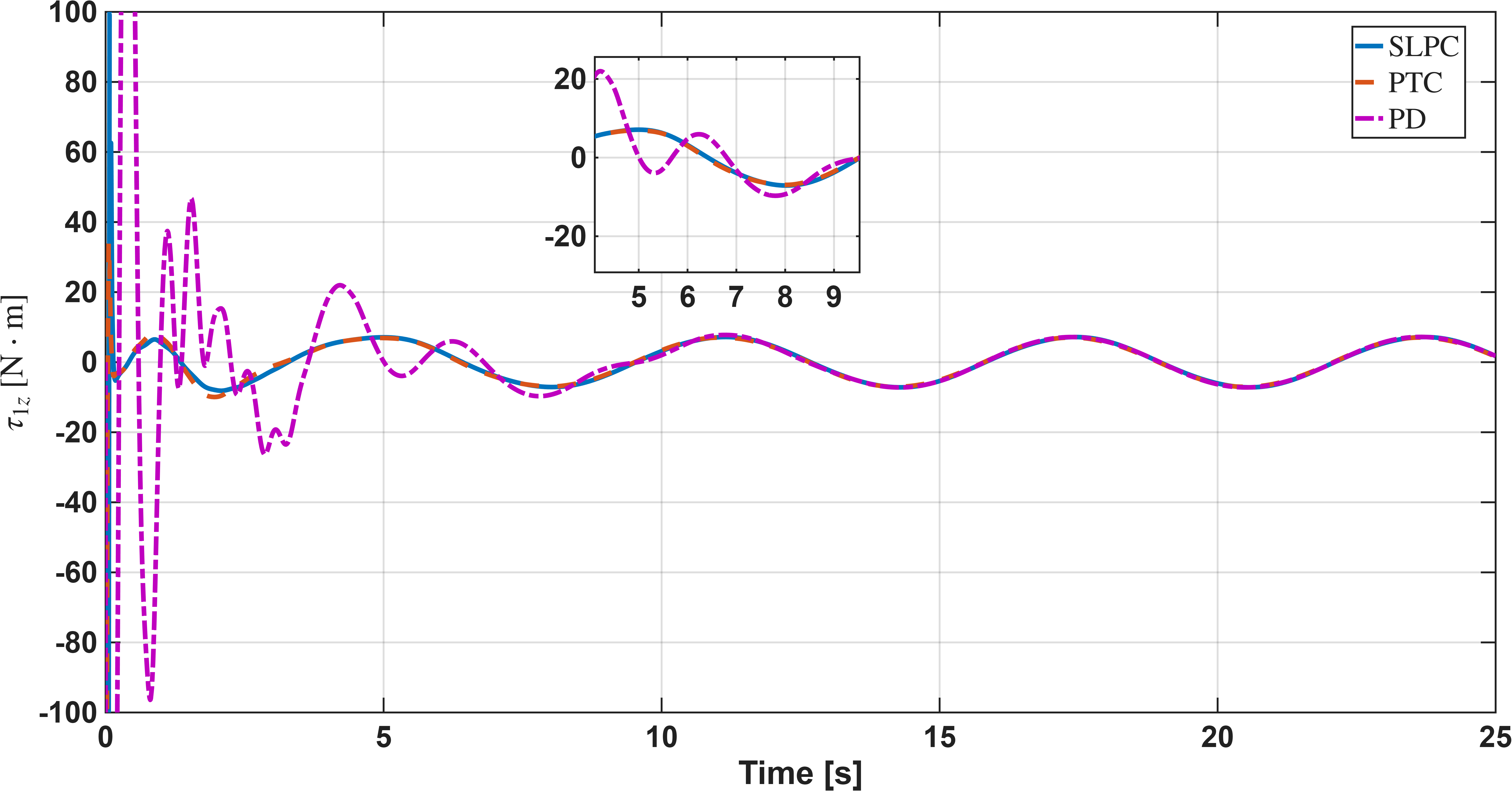}
\label{fig:input_tau1z}}
\hfill
\subfloat[]{
\includegraphics[width=0.48\textwidth]{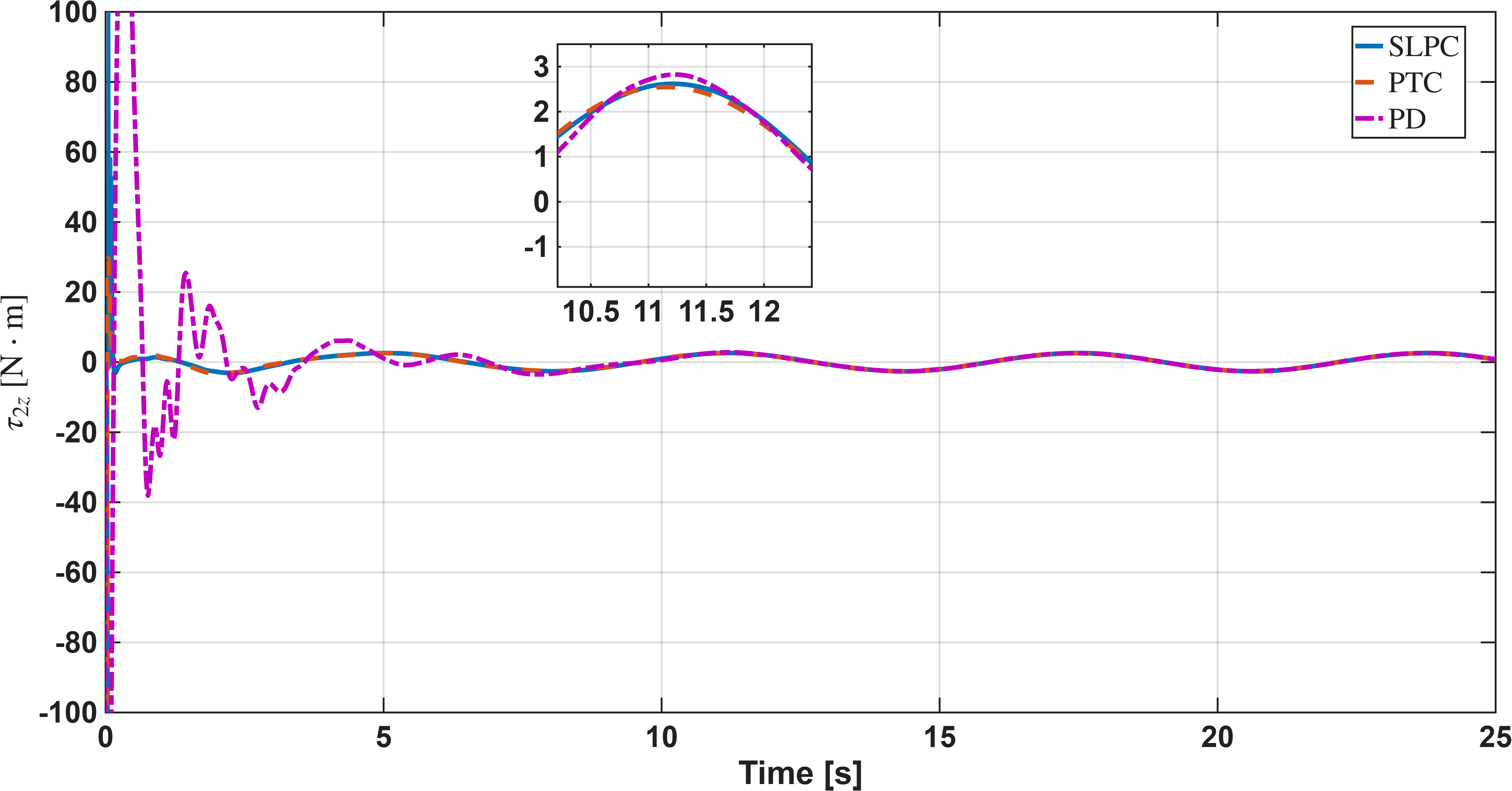}
\label{fig:input_tau2z}}

\caption{Endpoint trajectory and control inputs
comparing SLPC, PTC, and PD with $\pm100$~[Nm]
saturation:
(a)~three-dimensional endpoint trajectory,
(b)~$\tau_{1y}$, (c)~$\tau_{1z}$, (d)~$\tau_{2z}$.}
\label{fig:inputs_traj}
\end{figure*}

\begin{figure*}[!t]
\centering
\subfloat[]{
\includegraphics[width=0.48\textwidth]{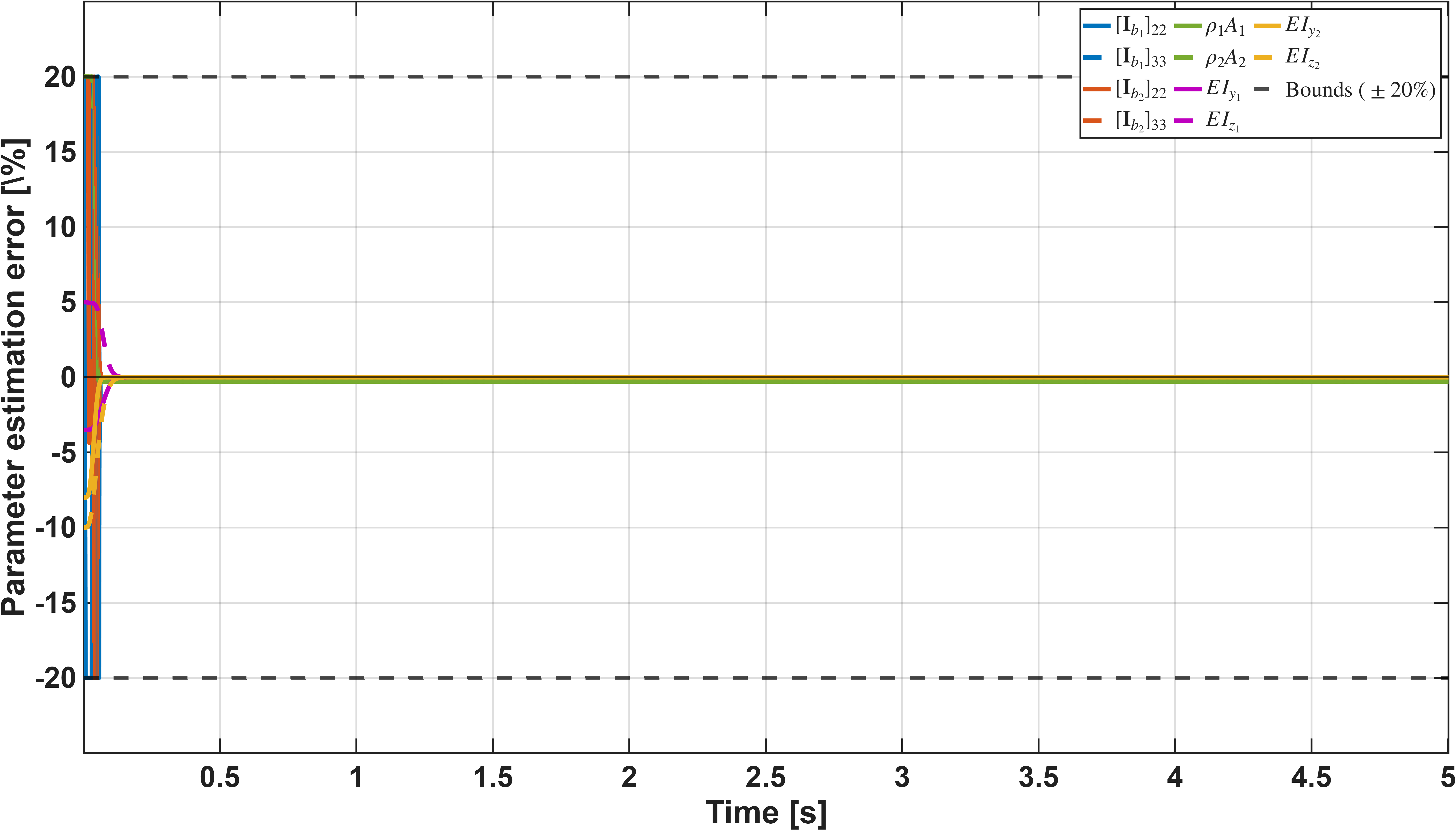}
\label{fig:adapt_params}}
\hfill
\subfloat[]{
\includegraphics[width=0.48\textwidth]{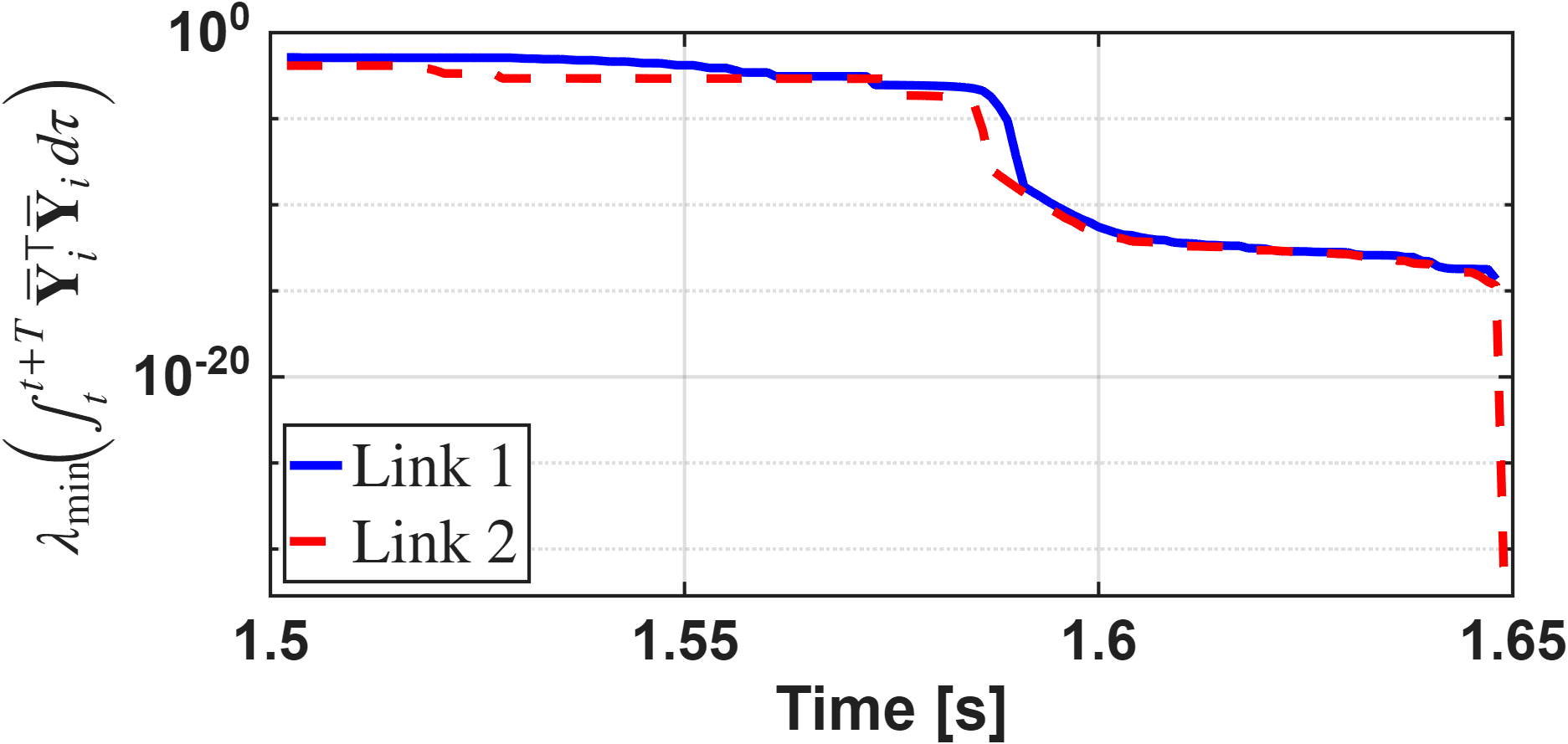}
\label{fig:adapt_PE}}
\caption{Parameter adaptation analysis:
(a)~normalized parameter estimation errors for all
ten adapted parameters --- converging from initial
offsets within $\pm10\%$ of true values to below
$2\%$ within $5$~[s] for inertia/stiffness and
$10$~[s] for density, with $\pm20\%$ projection
bounds enforced throughout;
(b)~minimum eigenvalue of the windowed regressor
Gramian $\int_t^{t+T}\bar{\mathbf{Y}}_i^\top
\bar{\mathbf{Y}}_i\,d\tau$ for both links,
confirming PE condition~(\ref*{eq:PE}) throughout
the active adaptation phase.}
\label{fig:adaptation}
\end{figure*}

\begin{figure*}[!t]
\centering
\subfloat[]{
\includegraphics[width=0.48\textwidth]{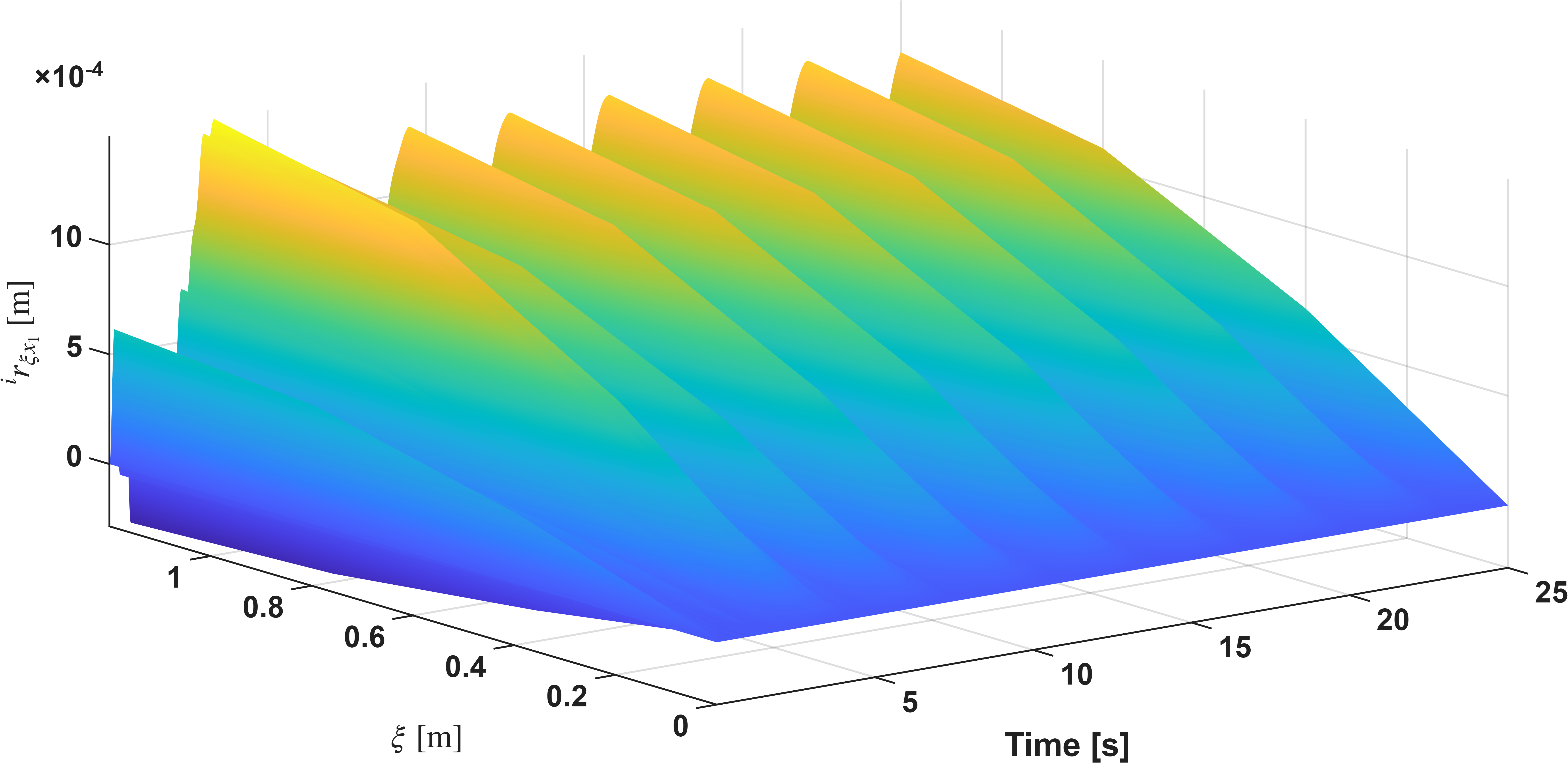}
\label{fig:def_x1}}
\hfill
\subfloat[]{
\includegraphics[width=0.48\textwidth]{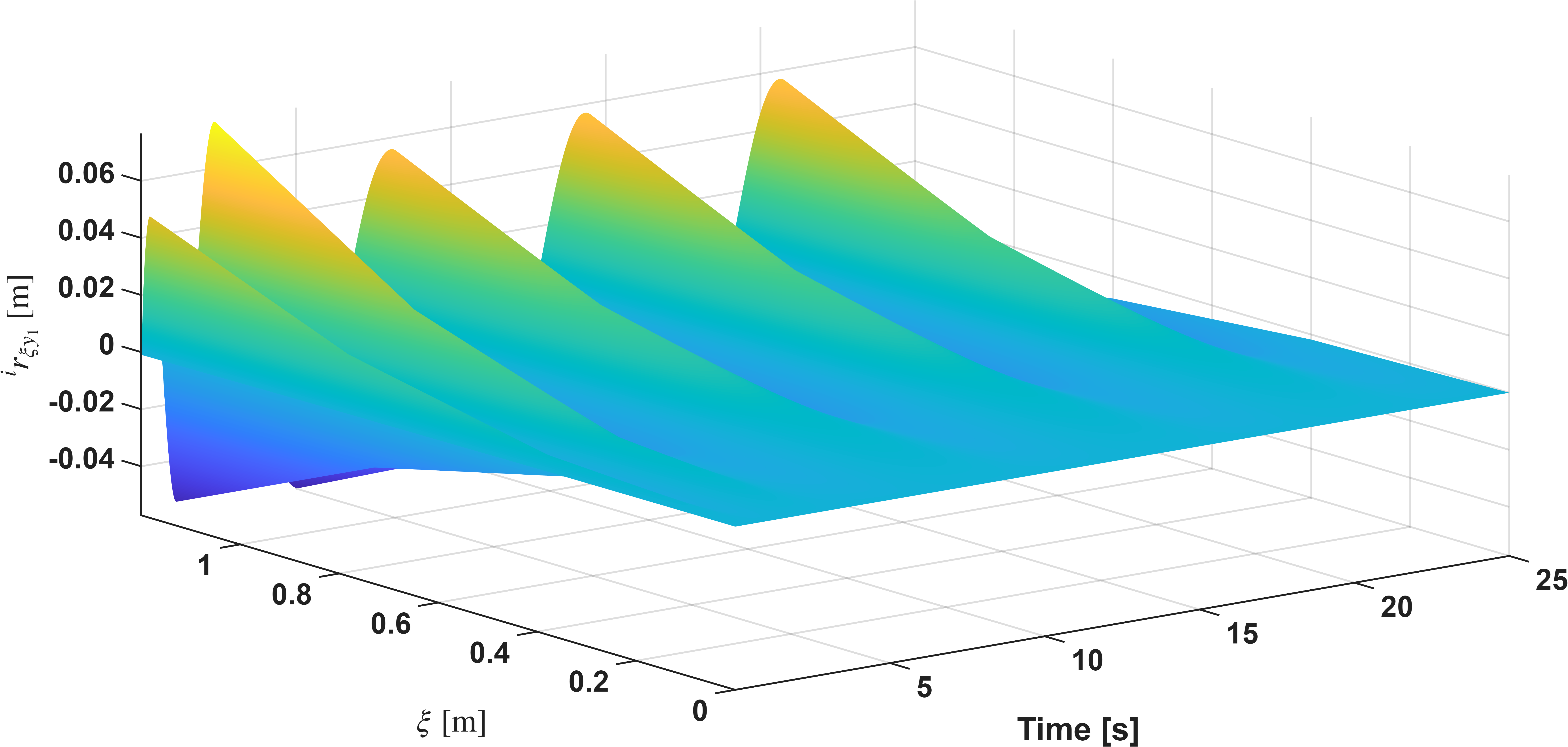}
\label{fig:def_y1}}

\vspace{0.5em}

\subfloat[]{
\includegraphics[width=0.48\textwidth]{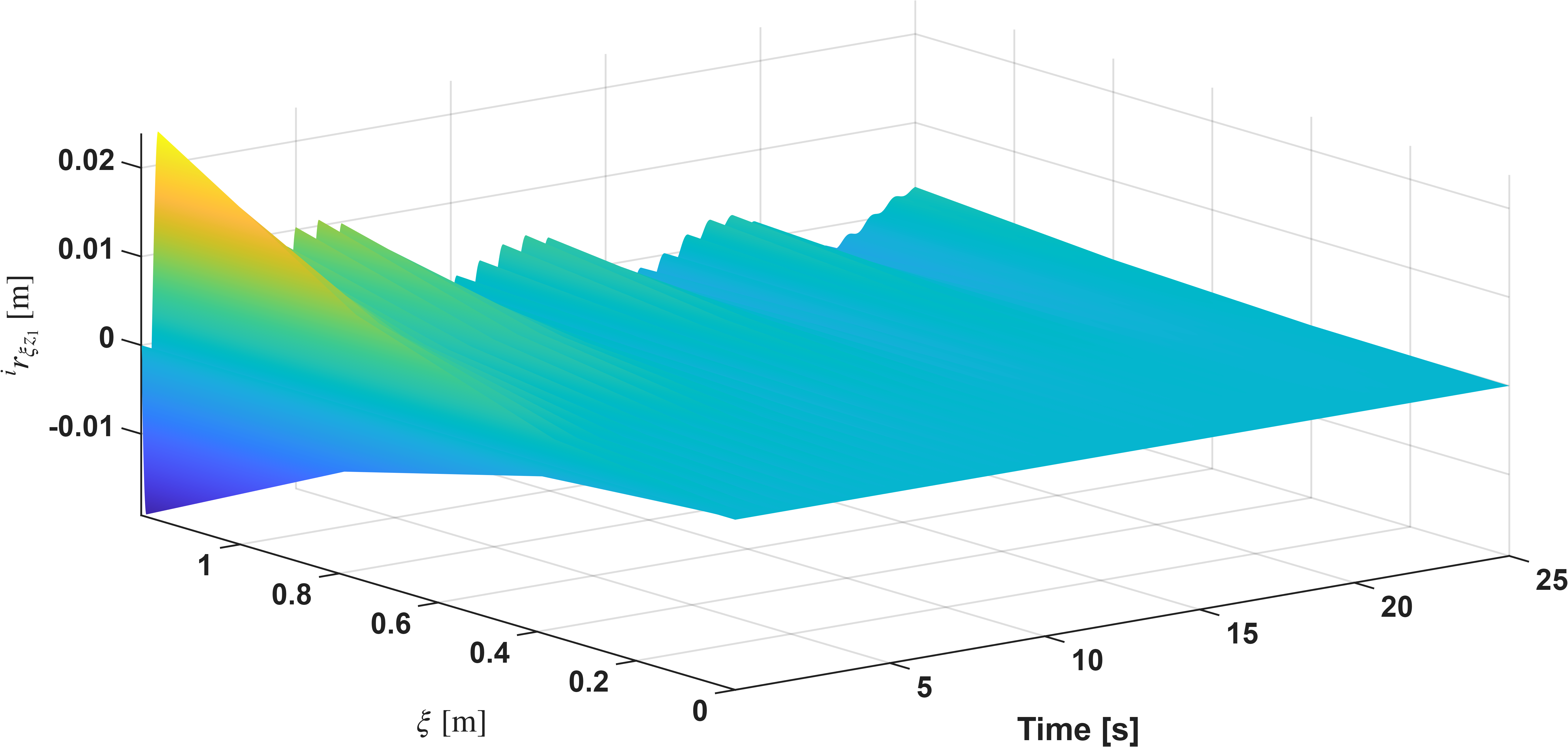}
\label{fig:def_z1}}
\hfill
\subfloat[]{
\includegraphics[width=0.48\textwidth]{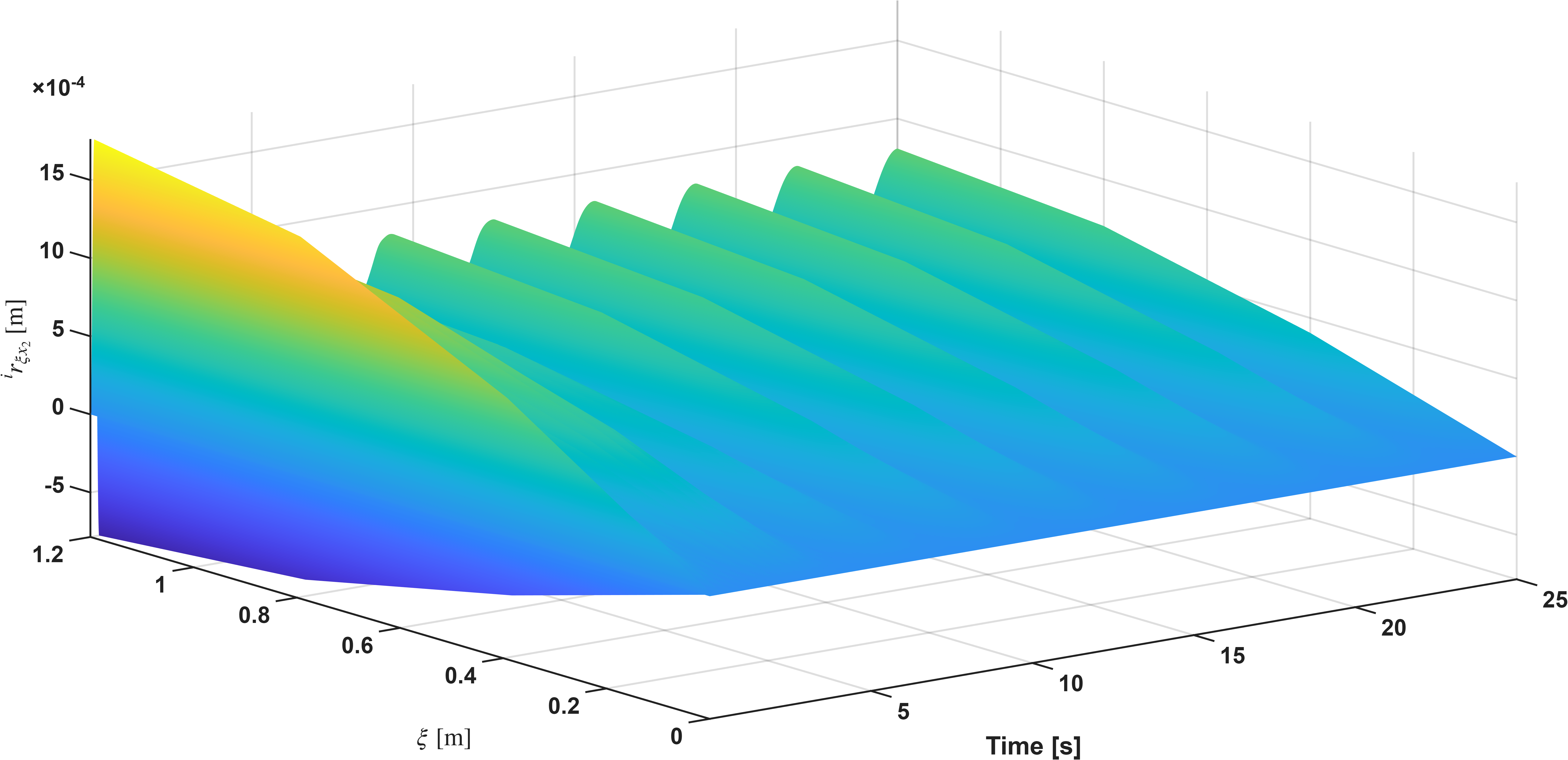}
\label{fig:def_x2}}

\vspace{0.5em}

\subfloat[]{
\includegraphics[width=0.48\textwidth]{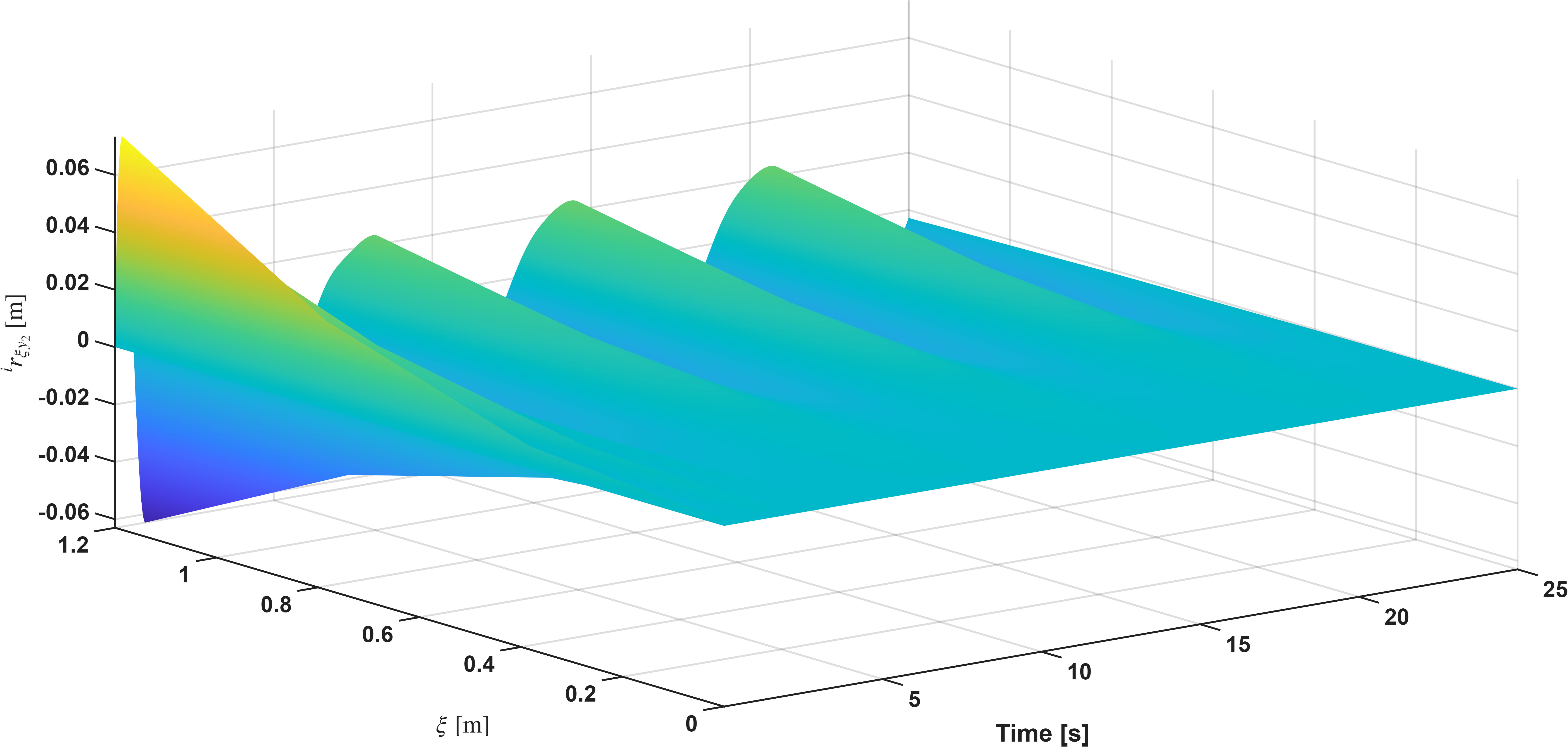}
\label{fig:def_y2}}
\hfill
\subfloat[]{
\includegraphics[width=0.48\textwidth]{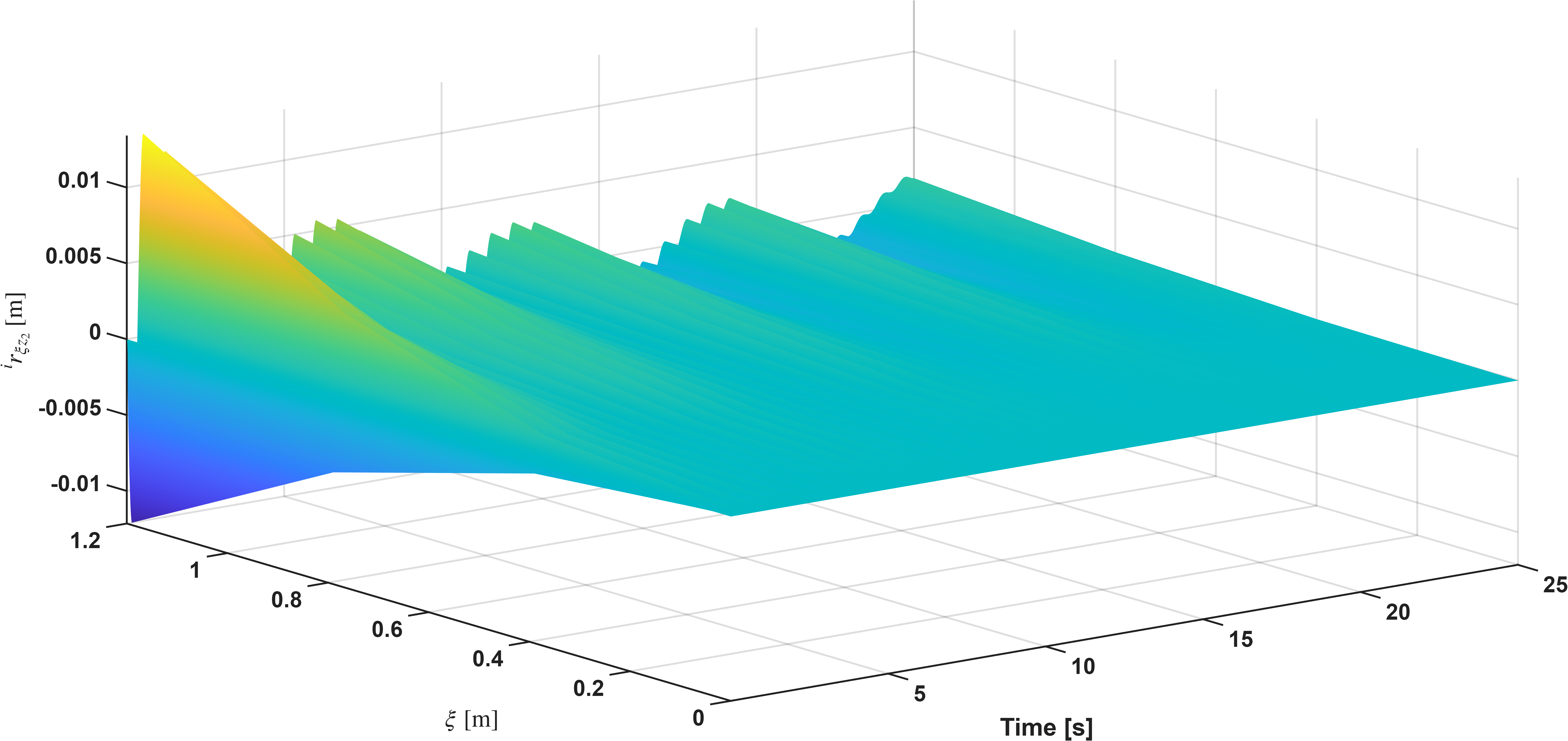}
\label{fig:def_z2}}

\caption{Distributed deformation field of both links:
(a)~${}^i r_{\xi x_1}$ (axial, order $10^{-4}$~[m]),
(b)~${}^i r_{\xi y_1}$ (bending),
(c)~${}^i r_{\xi z_1}$ (bending),
(d)~${}^i r_{\xi x_2}$ (axial, order $10^{-4}$~[m]),
(e)~${}^i r_{\xi y_2}$ (bending),
(f)~${}^i r_{\xi z_2}$ (bending).
Deformation satisfies clamped-free boundary
conditions~(\ref*{eq:35})--(\ref*{eq:38}) with zero
at the base and maximum at the free tip for both
links throughout the simulation horizon.}
\label{fig:deformation}
\end{figure*}

\section{Conclusion} \label{sec:conclusion}

A subsystem-based adaptive control framework for serial
flexible manipulators with an arbitrary number of links
has been presented, grounded in a screw-theoretic
Lie-algebraic model in which all dynamic quantities are
expressed as body-fixed twists and dual wrenches on
$\mathfrak{se}(3)$. The per-link dynamics were cast in
controllable form by substituting the strain-based
deformation PDE into the dynamic equation, eliminating
the distributed elastic acceleration term. A nominal
subsystem-level controller was proven to produce
exponential twist error decay via a per-subsystem
Lyapunov function $\nu_i$; the interaction power terms
at the subsystem boundaries telescope to zero upon
summation over all links by Newton's third law and the
frame invariance of the natural power pairing on
$\mathfrak{se}(3)\times\mathfrak{se}^*(3)$, establishing
exponential stability of the full $n$-link system without
cross-term analysis. An adaptive modification replaces
exact parameters with online estimates governed by a
projection-based law, with exponential convergence of
both twist error and parameter estimation error
established at subsystem and system levels. The screw-theoretic structure renders interaction term
cancellation exact, the stability certificate modular,
and the synthesis automatable for arbitrary $n$.

Several directions remain open. On the sensing side,
integration of IMU networks along the links offers a
field-deployable means of reconstructing the distributed
elastic state required by the deflection-compensating
reference generation, without recourse to
laboratory-grade optical or strain-gauge sensing.
Incorporation of external disturbances --- fluid loading,
payload uncertainty, base vibration --- into the
stability analysis requires either disturbance-observer
augmentation or robust Lyapunov modifications.
The subsystem architecture is compatible with
alternative controller structures beyond the
twist-error proportional law adopted here; passivity-based,
model predictive, and sliding-mode designs within the
same $\mathfrak{se}(3)$ decomposition are natural
continuations offering improved transient performance
or stronger robustness for specific application classes.

\bibliographystyle{IEEEtran}
\bibliography{references} 

\end{document}